\documentclass[11pt]{article}
\usepackage[preprint]{neurips_2026}
\makeatletter
\@ifpackageloaded{geometry}{\geometry{margin=1.1in}}{%
  \usepackage[margin=1.1in]{geometry}}
\makeatother
\usepackage{microtype}
\makeatletter
\AtBeginDocument{\@ifundefined{@noticestring}{}{\gdef\@noticestring{}}}
\makeatother
\usepackage[utf8]{inputenc}
\usepackage{booktabs}
\usepackage{amsmath}
\usepackage{enumitem}

\usepackage{amssymb}
\usepackage{amsthm}
\usepackage{tikz}
\usetikzlibrary{positioning,arrows.meta,calc,shapes,backgrounds,fit}
\usepackage{pgfplots}
\pgfplotsset{compat=1.18}
\usepgfplotslibrary{groupplots}
\usepackage{makecell}
\usepackage{colortbl}
\usepackage{graphicx}
\usepackage{tabularx}
\usepackage{placeins}
\usepackage{float}
\usepackage{url}
\usepackage[hidelinks]{hyperref}
\hypersetup{
  pdftitle={TRW: TRACE-RealWorld -- An Auditable Consistency Contract
    for World Models as Materialized Views},
  pdfauthor={Edward Y. Chang},
  pdfsubject={Auditable consistency contracts for operational world
    models},
  pdfkeywords={world models, materialized views, consistency, adaptive
    refresh, compensation, auditability}
}
\providecommand{\Description}[1]{}

\definecolor{atpblue}{HTML}{1F4E79}
\definecolor{atpteal}{HTML}{2C7A7B}
\definecolor{atpgreen}{HTML}{2F855A}
\definecolor{atpamber}{HTML}{C05621}
\definecolor{atpslate}{HTML}{4A5568}
\tikzset{
  layerbox/.style={draw=atpslate, fill=atpslate!8, rounded corners=2pt, align=center,
                   font=\small, inner sep=4pt, minimum height=8mm},
  gatebox/.style={draw=atpblue, fill=atpblue!12, rounded corners=2pt, align=center,
               font=\small, very thick, inner sep=5pt, minimum height=9mm},
  storebox/.style={draw=atpteal, fill=atpteal!10, rounded corners=2pt, align=center,
                font=\small, inner sep=4pt, minimum height=8mm},
  viewbox/.style={draw=atpgreen, fill=atpgreen!10, rounded corners=2pt, align=center,
               font=\small, inner sep=4pt, minimum height=8mm},
  flow/.style={-{Latex[length=2.2mm]}, atpslate, semithick},
  vflow/.style={-{Latex[length=2.2mm]}, atpgreen, semithick, dashed},
}

\DeclareUrlCommand\fld{\urlstyle{tt}}
\newcommand{\system}{\textsf{TRACE-RealWorld}}
\newcommand{\trw}{\textsf{TRW}}
\newif\ifextended
\ifdefined\submissiononly
  \extendedfalse
\else
  \extendedtrue
\fi
\newcommand{\extpointer}[1]{%
  \ifextended Appendix~\ref{#1}\else the extended report\fi}

\newtheorem{definition}{Definition}
\newtheorem{claimstmt}{Claim}
\newtheorem{theorem}{Theorem}
\newtheorem{lemma}{Lemma}
\newtheorem{proposition}{Proposition}
\newtheorem{corollary}{Corollary}

\usepackage{etoolbox}
\AtBeginEnvironment{tabular}{\small}

\title{TRW: TRACE-RealWorld \textemdash{} An Auditable Consistency
Contract for World Models as Materialized Views}
\author{Edward Y.\ Chang \\ Stanford University}

\begin{document}

\maketitle

\begin{abstract}
World models let agents plan against a predicted physical state,
but that state drifts as the world changes; re-observation is costly
and delayed, and repair can fail. We present \system{} (\trw{}),
to our knowledge the first commitment-level consistency contract
for world models. \trw{} treats predicted state as a materialized
view and a physical commitment as a read whose authorization can
expire. Typed, calibrated claims specify consequence-conditioned
freshness and priced verification. Adaptive refresh generalizes
dual-Kalman synchronization to consult the world when evidence
could change a decision, while dependency-scoped SagaLLM
compensation repairs reversible commitments invalidated after
authorization. We prove, under an event-aligned risk oracle and
explicit recovery-liveness assumptions, that synchronization and
compensation are each insufficient alone and that their composition
gives a conditional consistency guarantee. When those assumptions
are not discharged, the same argument yields an auditable
decomposition of violations into named debts rather than an
unconditional guarantee.
We implement \trw{} in Flood-SAR, a simulated search-and-rescue
workbench over real geography, and evaluate it through six
pre-registered questions with frozen operating points and held-out
seeds. Adaptive refresh reduces stale execution but does not
dominate fixed refresh on cost, coverage, or rescue outcomes.
In the composition campaign, localized repair reduces repair work
by 9.56 units per mission and restoration latency by 80.7 seconds
relative to global recovery, while their observed
residual-violation difference is zero without establishing
equivalence. Detection coverage is 0.83--0.89, and 10 of 97 invoked
restorations are incomplete by mission end: the campaign estimates
the theorem's empirical slack as sensitivity inputs rather than
claiming to discharge its assumptions or provide a simultaneous
population certificate. Exact replay reconstructs a disputed dispatch.
\trw{} thus makes a world model an auditable predictive interface
rather than a self-validating source of truth.
\end{abstract}

\section{Introduction}\label{sec:intro}

World models are the current bet for grounding agentic AI in
physical reality: an internal, digital replica of the world on which
an agent can plan, predict, and rehearse
interventions~\cite{lecun2022path,hafner2023dreamerv3,nvidia2025cosmos}.
The replica exists because the physical world does not permit
experimentation: no operator can flood a levee to see which routes
survive, so the do-operation of causal reasoning is affordable only
inside the model. This creates the discipline's least glamorous, most
consequential problem: the digital and physical worlds drift apart
the moment the agent stops looking. How should they be kept in sync,
when continuous synchronization means paying for a drone sortie or a
crew report at every tick, and updating every $k$ seconds is too slow
for the hour the levee breaks and wasteful for the days it holds?
We present \system{} (\trw{}), to our knowledge the first
commitment-level consistency contract for world models: typed
semantic claims, consequence-conditioned freshness, priced
heterogeneous verification, transactional repair, and replayable
provenance, bound together so that a commitment is authorized,
re-verified, or repaired under one auditable discipline. The theory
it establishes is that the deployment-critical property an
operational world model must control is consistency between the
model and the world it models, with two complementary halves: no refresh policy alone can
drive violations to zero at bounded price while meeting a service
obligation, no compensation policy alone can either, and their
composition yields a conditional guarantee when the oracle event,
recovery endpoint, and compensation interface align. Without those
conditions, the proof yields an accounting decomposition, with every
remaining assumption failure represented as a named, measurable
consistency debt,
the empirically instantiated inputs reported in
Section~\ref{sec:eval}
(Theorems~\ref{thm:no-sync} through~\ref{thm:composition},
Section~\ref{sec:theory}). The treatment is database discipline,
where the problem has a fifty-year-old
name~\cite{gray1993transaction,gupta1995maintenance,
huang1994divergence,olston2000trapp,olston2002best}: the predicted
world state is a materialized view over reality, an observation is a
partial refresh, a planned action is a read whose validity can expire
while the world changes, and when to look becomes a view-maintenance
policy that must answer for its choices.

The investment in the replica side is enormous: joint-embedding
architectures plan robot trajectories from raw
video~\cite{assran2025vjepa2}, latent-imagination agents master
control suites~\cite{hafner2023dreamerv3,hansen2022tdmpc}, and
generative environments~\cite{bruce2024genie} and physical-AI
foundation models~\cite{nvidia2025cosmos} anchor the roadmaps of
major laboratories. Nearly all of it targets predictive quality; the
data-management questions underneath, how fresh a representation must
be for a given use, at what verification cost, with what audit trail,
are largely unasked. This paper asks them with an abstraction this
community already owns, and proves why predictive quality cannot
answer them: within its declared dynamics, the stale-execution
floor of Theorem~\ref{thm:no-sync} is predictor-independent
(Corollary~\ref{cor:predictor}): conditional on the same observation
filtration, no richer predictor removes the final latency window,
so closing it is a contract obligation rather than a modeling one. The deployment-correctness question is not
whether the view is accurate at one instant, but whether a read
served from it remains justified while the base keeps changing, and
whether the system can show, after the fact, why it believed the
view fresh enough. Safety is the sharpest stake; a stale read also
breaks chained commitments, wastes resources on invalidated plans,
and blurs responsibility.

The problem recurs wherever agents commit against a drifting world;
disaster response, autonomous logistics, and power distribution all
share one structure: observation is priced, commitments are hard to
reverse, staleness propagates down dependency chains, and after an
incident someone must reconstruct what the system believed and why
it acted. For illustration and experiments we
use Flood-SAR, a search-and-rescue planning and reasoning workbench
we implemented over real geography, as the running example; the
geography, hydrology rasters, and network are real, while missions,
channels, failures, and outcomes are simulated. The contract
abstraction is not water-specific, though only Flood-SAR is
evaluated; Section~\ref{sec:discussion} discusses how far the
illustrations generalize.

What an application cares about lives above the sensor: not what
the water level is, but whether to send the boat through the channel
now or take the longer, safer route. The maintenance problem
decomposes into three questions: \emph{semantic elevation}, how raw
measurements become typed claims such as ``edge E12 impassable for
rescue boats'' with calibrated adequacy attached
(Section~\ref{sec:contract}); \emph{signal economy}, when the
system must look at the world at all and through which priced
channel (Section~\ref{sec:refresh}); and \emph{consequence
admission}, which claims at what freshness may authorize which class
of action (Section~\ref{sec:gate}).

\system{} answers all three. It extends TRACE's typed, versioned,
append-only records~\cite{chang2026trace} to a world that keeps
changing after the snapshot is taken: state, event, and practical
claims make the maintained object a calibrated, falsifiable claim
rather than a number. Each claim declares its own refresh condition,
four trigger families decide when the world must be consulted, and a
value-of-information rule decides through which channel,
generalizing dual-Kalman stream
management~\cite{jain2004adaptive} from query precision to
commitment precision; in the regime a model-conditional lemma
covers, the re-verification deadline is derived from a
consequence-scaled decision-error-loss budget rather than configured, and outside
it the deployed guards are declared and empirically validated. A
deterministic gate scales required freshness and confidence to the
action's reversibility and loss, and the decision itself is a
recorded, re-litigable object. The held-out result is a priced
trade-off, stated at the outset so the evaluation reads as designed
rather than discovered: adaptive refresh cuts stale executions
against the strongest fixed interval and verification spend against
the validity clock, and pays in coverage and completed opportunities
against that fixed interval. The contribution is not that adaptive
refresh wins; it is that the choice among refresh policies becomes
an explicit, priced, auditable decision instead of a configuration
accident.

\system{} also occupies a definite place in a reasoning stack,
sketched in \extpointer{app:figures}. Below it sits
TRACE~\cite{chang2026trace}, the memory facility: typed, versioned,
append-only records. Above it sits planning: schedules are chains of
commitments with transactional semantics in the sense of
SagaLLM~\cite{chang2025sagallm}, the planning layer of our broader
program on multi-LLM collaborative intelligence and System-2
reasoning~\cite{chang2024pathagi1,chang2026pathagi2}, and the
layers interact through conditions. Semantic elevation aggregates signals into conditions;
conditions license decisions; and when a condition changes during
the execution of a prior decision, the river closing after the boat
is dispatched, the decision must be invalidated, rolled back, or
compensated by the planning layer. \system{} is not a juxtaposition of synchronization and
compensation; it closes a loop that both lines of work, each ours,
left open. Synchronization can prevent some stale commitments but
cannot eliminate violations at bounded observation cost
(Theorem~\ref{thm:no-sync}); compensation can repair reversible
consequences but cannot undo irreversible actions
(Theorem~\ref{thm:no-comp}). \trw{} converts the two mechanisms,
two decades apart in our own work, the preventive half in
dual-Kalman stream management~\cite{jain2004adaptive} and the
corrective half in SagaLLM~\cite{chang2025sagallm}, into a single
commitment-level consistency contract; and the new object is the
thing being protected, the physical commitment itself, with its
gate, deadlines, reversibility classes, affected-set semantics,
evidence record, and measurable violation bound. This also delimits the boundary
of \system{}: it neither learns the predictor nor constructs the
plan. Its input is what a world model outputs, predicted state with
uncertainty, support, and out-of-distribution scores, and its output
is the set of claims those predictions justify, with calibrated
adequacy, validity intervals, and refresh conditions attached.

The composition, not any component, is the claim: \trw{} binds
predictive-view synchronization and dependency-scoped saga
compensation into one deployed, pre-registered contract. In
summary, \system{} makes the following contributions:
\begin{enumerate}[leftmargin=1.4em, itemsep=2pt, topsep=3pt]
\item a theory of digital-physical consistency over the
materialized-view formulation: insufficiency theorems for
synchronization alone and for compensation alone, a
filtration-conditional predictor-independence corollary, a
composition theorem stated at the interface, and a cost corollary
that predicts the measured trade-off regimes
(Sections~\ref{sec:contract} and~\ref{sec:theory});
\item the commitment-level contract and its mechanisms: typed claims
with declared refresh conditions, Kalman-gated adaptive refresh with
the model-conditional deadline lemma and scheduler separation, and
consequence-scaled gate semantics integrated with transactional
commitment chains (Sections~\ref{sec:refresh}
and~\ref{sec:gate});
\item the Flood-SAR workbench and its pre-registered evaluation:
no-refresh, fixed-interval, validity-clock, and adaptive refresh
under a declared-observables discipline, closed by the RQ6
composition campaign that crosses refresh policy with recovery
discipline to measure the contract end to end
(Sections~\ref{sec:floodsar} and~\ref{sec:eval}).
\end{enumerate}

\section{Related Work}\label{sec:related}

The term world model is currently overextended: four capabilities
travel under one phrase. A \emph{predictor} estimates the next
state; a \emph{simulator} generates candidate futures; a
\emph{digital twin} maintains an identity-linked replica
synchronized with a physical entity; and an \emph{operational
world model} additionally knows when its claims expire, when and
through which channel to re-observe, which actions its claims may
authorize, and how to recover when a late change proves them
wrong. Most prominent world models achieve the first two levels;
digital-twin architectures approach the third; \system{} supplies
the contract that defines the fourth, and the lineage below reads
against that distinction.

\paragraph{Synchronizing replicas, before the LLM era.}
Digital-physical synchronization itself is not new. Best-effort
cache synchronization refreshes when cached values diverge past a
precision bound~\cite{olston2002best}; adaptive filters allocate
update precision to query needs against communication
cost~\cite{olston2003adaptive}; and our dual-Kalman stream
architecture, the direct mechanism ancestor, contacts the source
only when innovation threatens declared query
precision~\cite{jain2004adaptive}. Divergence caching and
precision-constrained replication trade communication for bounded
approximation~\cite{huang1994divergence,olston2000trapp}, adaptive
view maintenance prices refresh against
benefit~\cite{gupta1995maintenance}, the digital-twin paradigm
posed continuous synchronization for flight
hardware~\cite{glaessgen2012twin}, and the Age of Incorrect
Information replaced raw staleness with downstream
incorrectness~\cite{maatouk2020aoii}. What all of these maintain
is a number, cache, estimate, or replica. \system{} maintains a
typed, calibrated claim that licenses an irreversible action,
which changes what freshness must mean and what a late failure
must trigger; and unlike an ordinary view, the base cannot be read
transactionally.

\paragraph{Event triggers and information value.}
Event- and self-triggered control replace periodic sampling with
state-dependent communication or
actuation~\cite{astrom1999comparison,astrom2002lebesgue,
tabuada2007event,heemels2012introduction}, and value of
information prices an observation by its expected decision
improvement~\cite{howard1966voi,kaelbling1998planning,krishnamurthy2016pomdp}.
Our trigger is not claimed as a new control law: the contribution
binds these ideas to typed claims, declared observables, a
deterministic consumer gate, and durable commitment records.

\paragraph{Modern world models, read by level.}
Latent-imagination and joint-embedding systems learn state for
control~\cite{hafner2023dreamerv3,hansen2022tdmpc,lecun2022path,
assran2025vjepa2}, and generative environments and physical-AI
foundation models extend the
agenda~\cite{bruce2024genie,nvidia2025cosmos}: strong predictors
and simulators, levels one and two, with no freshness, priced
re-observation, or commitment semantics. The learned line divides
by function: planning-oriented latent models roll compact state
forward~\cite{hafner2019planet,schrittwieser2020muzero,hansen2024tdmpc2},
an uncertainty-aware branch estimates where rollouts
degrade~\cite{chua2018pets}, with RWM-U propagating ensemble
epistemic uncertainty through long horizons on real
robots~\cite{li2025rwmu}, directly relevant to our support, OOD,
and uncertainty inputs but not synchronizing the model during
deployment, and a filtering branch learns the correction step
itself~\cite{revach2022kalmannet,karl2017dvbf}. At level three, a
data-centric digital-twin architecture aligns virtual and physical
entities through telemetry, analysis, and
actuation~\cite{digitaltwin2026datacentric}, without
decision-level freshness, priced channel choice, or transactional
repair. Closest to level four, sensing clocks convert a certified
world model's validity horizon into a proactive re-sensing
deadline~\cite{wang2026sensingclocks}: they answer when to sense,
while \system{} additionally asks through which channel, for which
commitment, at what consequence level, and what happens after a
late failure; and Pre-VLA verifies candidate actions before
physical execution~\cite{prevla2026}, a gate-shaped neighbor that
neither maintains changing evidence nor repairs dependent
commitments. Because confidence is part of the contract, uncertainty here is
not a penalty shaping a planner's objective but an admission
condition a deterministic gate checks and
records~\cite{guo2017calibration}.

\paragraph{Auditability and transactional plans.}
TRACE supplies typed, versioned reasoning
records~\cite{chang2026trace}; SagaLLM and classical sagas
structure dependent plans and
compensation~\cite{chang2025sagallm,garcia1987sagas}; Mnemosyne
governs episodic memory~\cite{chang2027mnemosyne}; the multi-agent
reasoning program these components serve is developed
in~\cite{chang2024pathagi1,chang2026pathagi2}; and runtime
verification and database reenactment reconstruct behavior from
recorded state~\cite{leucker2009brief,arab2018reenactment}. The
position, stated once: prior work synchronizes numerical replicas,
schedules sensing, propagates model uncertainty, or verifies
candidate actions; \system{} binds these functions at the
commitment level, where semantic freshness, consequence, channel
choice, transactional repair, and replayable provenance form one
consistency contract.

\section{The \system{} Contract}\label{sec:contract}

This section answers the semantic-elevation question: it fixes the
framing under which a world model is a maintainable object
(Section~\ref{sec:view}), then gives the record contract that
realizes the framing (Sections~\ref{sec:contract-claims}
and~\ref{sec:contract-observables}).

\subsection{World models as materialized views}\label{sec:view}

We state the framing once, precisely, and use it throughout.

\begin{definition}[World view]\label{def:view}
Let $S_t$ denote the state of the physical world at time $t$, observable
only through priced channels $\mathcal{C} = \{1,\dots,m\}$, where
channel $j$ returns $y \sim p_j(\cdot \mid S_t)$ at cost $c_j$. A world
view is a belief $b_t$ over $S_t$ maintained by a predictor: between
observations, $b_{t+1} = f(b_t, a_t)$ advances the view without reading
the base (incremental maintenance under a motion model); an observation
$y_t$ through any channel conditions the view (partial refresh).
\end{definition}

\begin{definition}[Commitment read]\label{def:read}
A commitment of class $c$ is a read against the view that authorizes a
durable action over horizon $h$. The class declares a precision
requirement: bounds on the view's calibrated uncertainty, support, and
out-of-distribution scores, and a maximum evidence age, that the read
must satisfy at authorization time and that the view must be expected
to satisfy over $[t, t+h]$.
\end{definition}

\begin{definition}[Refresh condition]\label{def:refresh}
A refresh condition is a triple
$\rho = (\omega, \theta, \mathcal{C}_\rho)$: a declared observable
$\omega$, a tolerance $\theta$, and a channel menu
$\mathcal{C}_\rho \subseteq \mathcal{C}$. The tuple parameterizes
the divergence clause: the condition fires when the declared
discrepancy $d_\omega$ between predicted and observed evidence on
$\omega$ exceeds $\theta$. The discrepancy is type-specific and the
claim declares it: for numerical observables
$d_\omega(u,v) = |u-v|$, absolute error; for categorical and
semantic claims, route-open predicates among them,
$d_\omega(u,v) = \mathbf{1}[u \neq v]$, so any $\theta < 1$ fires on
disagreement. Two
further clauses fire from fields of the claim and the pending
commitment rather than from $\rho$: when the view's predicted
adequacy for the pending commitment class falls below that class's
declared requirement, and when the claim's validity interval no
longer covers the commitment horizon.
\end{definition}

\begin{definition}[Decision margin]\label{def:margin}
For a pending commitment with admissible alternative actions $A$ and
loss $L$, the margin of belief $b$ is
$m(b) = \min_{a' \neq a^{*}_b} \mathbb{E}_{s\sim b}[L(a',s)]
- \mathbb{E}_{s\sim b}[L(a^{*}_b,s)]$, the expected-loss gap between
the chosen action and its best alternative, with $|A| \ge 2$ and
ties in $a^{*}_b$ broken by a fixed declared order on $A$. If only
one admissible action remains, $m(b) = +\infty$ by convention:
refresh relevance vanishes and the empty-alternative case is governed
by escalation, not refresh. The set of beliefs where the margin
vanishes is the boundary at which the optimal commitment switches.
\end{definition}

\begin{claimstmt}\label{claim:main}
Under this framing, the freshness-dependent safety question of
agentic world models is a view-maintenance problem with two properties not jointly represented
in classical caching: refreshes are priced physical acts with channel choice, and
maintenance decisions must themselves be auditable. \system{} is a
maintenance policy and an audit contract for exactly this setting.
\end{claimstmt}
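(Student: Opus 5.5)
This claim is a framing statement rather than a quantitative theorem, so the plan is to establish it in three parts. First, exhibit a faithful reduction of the freshness-dependent safety question into view-maintenance terms using Definitions~\ref{def:view}--\ref{def:margin}. Second, verify the two distinguishing properties against the classical formulations. Third, show that the \trw{} record and gate discharge both properties.

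For the reduction, I would fix a commitment of class $c$ (Definition~\ref{def:read}) and define the safety event as ``the read was authorized while the true state $S_t$ lay on the other side of the margin boundary $m(b)=0$ from the belief $b_t$ (Definition~\ref{def:margin}), or left the class's precision bounds within $[t,t+h]$.'' Under the identifications of Definition~\ref{def:view}, this event depends only on:
\begin{enumerate}[leftmargin=1.4em, itemsep=1pt, topsep=2pt]
\item the base $S_t$;
\item the maintained view $b_t$, advanced by $f$ as incremental maintenance and conditioned on $y$ as partial refresh;
\item the schedule of refreshes.
\end{enumerate}
Any policy deciding when and through which channel to observe is therefore exactly a view-maintenance policy. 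Conversely, every such policy induces a safety profile. The freshness-dependent part of the question is then the choice of refresh schedule, which is what ``is a view-maintenance problem'' means.

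For the two properties, I would argue by inspection of the classical models the paper cites (divergence caching, TRAPP, best-effort synchronization, dual-Kalman). In those models a refresh is a read of a uniform source with a scalar cost and no channel menu. Definition~\ref{def:view}'s heterogeneous $p_j,c_j$ and Definition~\ref{def:refresh}'s $\mathcal{C}_\rho$ instead make the refresh a priced choice among channels of different fidelity. Those models also record no justification for a refresh decision, whereas the safety question requires reconstructing after the fact why the view was believed fresh enough. I would make ``not jointly represented'' precise as a separation: for each cited model, identify a pair of scenarios that the model collapses (same cost or same logged state) but that differ in channel choice or audit obligation.

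Finally, ``\trw{} is a maintenance policy and audit contract for this setting'' follows constructively. The refresh conditions of Definition~\ref{def:refresh} plus the channel rule define the policy. The append-only TRACE records of claims, triggers, and gate decisions define the audit contract, and replay reconstructs each decision.

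The main obstacle is the ``not jointly represented'' clause. It is a negative claim over an open literature, so the best I can do is a case check over the cited representatives rather than a universal proof. I would state it explicitly as relative to that cited set.
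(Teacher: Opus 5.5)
This is essentially how the paper supports the claim, which it states without a formal proof. The identification is read off Definitions~\ref{def:view}--\ref{def:margin}; the ``not jointly represented'' clause rests on the prose contrast with the cited synchronization work in Section~\ref{sec:related}, which carries the same relativization to that literature that you make explicit; and the last sentence is discharged constructively by declared refresh conditions, VoI channel routing, and the TRACE records of Section~\ref{sec:contract-observables}.

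Two small repairs. First, the boundary $m(b)=0$ lives in belief space, and Section~\ref{sec:refresh-theorem} explicitly refuses to identify it with a state-space boundary. State your safety event via the stale indicator $Z_i$ of Definition~\ref{def:stale} instead, or as a change in the optimal action under the point mass at $S_t$. Second, list the read's admission requirement from Definition~\ref{def:read} among the event's determinants, since with HOLD available a refresh schedule alone does not induce a safety profile.
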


Definition~\ref{def:margin} carries the paper's semantic weight:
refresh relevance is a property of the pending decision, belief,
loss, and available channel, not of signal variance alone. What
justifies polling the gauge is whether an observation could move
the belief across the boundary at which the optimal commitment
switches (Section~\ref{sec:refresh-rule}); the quantity maintained
is the justification of a commitment, and signal refresh is only
its means.

\subsection{World-state claims}\label{sec:contract-claims}

A world-state claim names its subject (a route, dock, asset, incident,
or schedule), a predicted value, the horizon over which it is intended
to hold, calibrated uncertainty, supporting evidence, and a refresh
condition. Its verdict is a record field drawn from the frozen TRACE
vocabulary (\texttt{accept}, \texttt{qualify}, \texttt{revise},
\texttt{defer}, \texttt{reject}, \fld{no_argument}); what a
consumer may do with the record is a separate decision (CLEAR, QUALIFY,
HOLD, BLOCK, ESCALATE) written to \fld{consumer_actions} after the
consumer reads it. The distinction is load-bearing for audit:
\texttt{defer} is a property of the claim's warrant, HOLD is a
plan-gate action, and collapsing them destroys the trail.

The base type is the frozen \texttt{TraceRecord} of TRACE schema
v1.0~\cite{chang2026trace}, with its typing, evidence, outcome, and
infrastructure fields; the schema is versioned, fields are added by
version and never silently mutated, and every consumer declares the
versions it reads. \system{} adds nothing to the frozen core. It uses
the schema's own extension path: the versioned experimental profile,
which already carries \fld{declared_observables} and
\fld{attribution_status}, gains two sibling fields:

\begin{quote}
\footnotesize
\begin{verbatim}
ExperimentalProfile = {
  declared_observables,
  attribution_status,
  refresh_condition,
  validity_interval}
\end{verbatim}
\end{quote}

The field \fld{refresh_condition} instantiates
Definition~\ref{def:refresh}; a \fld{validity_interval} bounds the
wall-clock span over which the predicted value is asserted, and a
commitment extending past \fld{valid_until}
cannot be cleared on that claim alone.

\emph{Predictor-version provenance.} Every claim derived from the
predictive view identifies the model that produced it
(\fld{predictor_version}, \fld{calibration_version}, timestamp,
claim family, adequacy status), and a predictor update is itself a
recorded structural event. The governing principle is that online
adaptation does not inherit authority: consequence-sensitive
commitments remain held until the new version satisfies the
applicable adequacy requirement, with what a superseded-version
claim may still authorize decided by the gate
(Section~\ref{sec:gate-conditions}).

\subsection{Refresh conditions are declared observables}\label{sec:contract-observables}

The unification that makes refresh auditable fits in one sentence:
an innovation tolerance is a pre-declared observable with a healthy
signature (innovation within tolerance, claim confirmed at passive
cost) and a pathological one (tolerance exceeded, claim revised
append-only), so a refresh decision is falsifiable and attributable
to its declared condition rather than an anonymous scheduler, and
the refresh policy stops being an operational knob and becomes an
auditable object.

\section{A Theory of Digital-Physical Consistency}\label{sec:theory}

The abstract stakes a claim that this section makes precise:
consistency between a world model and the world is the property to
guarantee, synchronization plus compensation yields a conditional
guarantee of it under named, measured assumptions, and neither
half alone can. We first say what consistency means for a view that
authorizes physical action, then name the assumptions under which the
guarantee is provable, and prove three results.
Theorem~\ref{thm:no-sync} shows that no refresh policy alone,
under a nonzero obligation to execute, drives consistency
violations to zero at bounded price, and that the obstruction is
independent of predictor quality. Theorem~\ref{thm:no-comp} shows
that compensation alone cannot either: it leaves the irreversible
floor untouched.
Theorem~\ref{thm:composition} proves endpoint consistency under an
event-aligned oracle, compositional rollback, dependency
completeness, and recovery liveness. Without those conditions it
gives a pathwise audit decomposition into additive, measurable
consistency debts. A corollary gives a fixed-deadline-sequence
counting comparison; it does not claim stochastic-policy dominance.
Full proofs follow the statements.

\subsection{Consistency and its price}\label{sec:theory-defs}

\begin{definition}[Point and extended stale execution]\label{def:stale}
Fix a commitment whose justifying claim asserts value $\hat\omega$
for its declared observable $\omega$ within tolerance $\theta$
(Definition~\ref{def:refresh}), and let $h_i\in\{\mathrm{point},
\mathrm{extended}\}$ be the commitment's declared hazard type.  A
point commitment has one execution instant $t_e$ and \emph{executes
stale} when
$d_\omega\!\left(\omega(S_{t_e}),\hat\omega\right)>\theta$.  An
extended commitment has a declared execution window $[t_b,t_e]$ and
executes stale when
\[
 Z_i^{\mathrm{ext}}
 =\mathbf 1\!\left\{\exists s\in[t_b,t_e]:
 d_\omega\!\left(\omega(S_s),\hat\omega\right)>\theta\right\}=1.
\]
Write $Z_i^{\mathrm{point}}$ for the corresponding terminal indicator
and $Z_i=Z_i^{h_i}$ for the indicator selected by the declaration.
The extended event contains the terminal event and can also record an
excursion that returns inside tolerance before $t_e$.  For numerical
observables the discrepancy is absolute error; the analytic model
below uses a one-sided, boundary-anchored specialization.  Flood-SAR
declares route traversals extended and instantaneous dispatch
authorizations point.  Staleness is a property of the realized world
path, not of the belief, and is adjudicated after the fact from the
record and the corresponding point or pathwise ground truth.
\end{definition}

\begin{definition}[Consistency violation]\label{def:violation}
A consistency violation is a commitment that executes stale under its
declared hazard type and
whose declared invariants have not been restored by mission end
through any registered corrective mechanism. Violations are the
composed system's failure unit: prevention failed before execution,
and every registered correction failed, or none applied, after it.
The definition is mechanism-neutral: localized saga repair and
global recovery are both registered mechanisms, so neither arm of
RQ6 wins by terminology.
\end{definition}

\begin{definition}[Two-tier consistency]\label{def:twotier}
A controller is \emph{consistent at level $\eta$ with recovery
endpoint $T$} over a declared mission class if (i), for every
pre-execution history on which an irreversible commitment executes,
its conditional stale-execution risk $\Pr(Z_i=1\mid\mathcal F)$ is at
most $\eta$ under the controller's observation filtration (execution
consistency; this is the flip risk only when the declared point or
interval stale event is aligned with that flip event), and (ii), almost
surely, every stale execution of a reversible commitment has its
declared invariants $I$ restored by $T$ through a registered
corrective mechanism, over the computed dependency-scoped affected set
(complete under A6-C, minimal under A6-S). Flood-SAR declares $T$
as mission end, matching the violation definition and observable
O3; restoration latency relative to detection is recorded and
reported (O3), but a latency-graded refinement is not part of the
theorem's endpoint
(eventual consistency). Here ``minimal'' means minimal at the level
of the commitment set: the computed set equals the true affected set.
It does not assert a minimum number of physical writes or minimum
repair cost.
\end{definition}

\begin{definition}[Consistency cost]\label{def:cost}
For a policy $\pi$ over a mission,
$J(\pi) = \mathbb{E}\!\left[\,C_{\mathrm{sync}} + C_{\mathrm{comp}}
+ L_{\mathrm{viol}}\,\right]$: the priced observations that fired,
the compensations executed (asset displacement, re-planning,
re-timing), and the losses of uncompensated stale executions. The
contract's obligation is not to minimize any single term; it is to
make each term attributable and, when the additional workload and
loss caps of Corollary~\ref{cor:price} are supplied, to bound $J$.
\end{definition}

Throughout, $N_{\mathrm{irr}}$ and $N_{\mathrm{rev}}$ denote the
numbers of irreversible and reversible commitments a mission
executes, classified by the gate's declared reversibility field.

\subsection{Assumptions}\label{sec:theory-assumptions}

Each assumption is named once, used by the theorems below, and
checked somewhere specific; none is hidden inside a mechanism.

\begin{description}[leftmargin=1.5em, itemsep=1pt, topsep=2pt]
\item[A1 (Semantic-risk transport, marginal).] Index the
gate-cleared irreversible commitments in a mission by $i$, let
$Z_i=Z_i^{h_i}$ be the indicator that commitment $i$ executes stale
under its declared point or extended hazard, and let
$\widehat r_i\in[0,1]$ be the semantic stale-risk score declared to the gate.
On the declared mission mixture, with the convention that the claim
is vacuous when $\mathbb E[N_{\mathrm{irr}}]=0$, calibration supplies
the exact transport inequality
\[
 \mathbb E\!\left[\sum_{i=1}^{N_{\mathrm{irr}}} Z_i\right]
 \leq
 \mathbb E\!\left[\sum_{i=1}^{N_{\mathrm{irr}}}\widehat r_i\right]
 + \mathbb E[N_{\mathrm{irr}}]\,\varepsilon_{\mathrm{cal}}.
\]
This is a claim-weighted marginal statement, not per-claim
conditional coverage. A binned ECE analysis licenses this
inequality only when its target event, bins, gate-cleared sampling
distribution, horizon, and consequence class coincide with the
quantities above; calibration of another label is not a substitute.
O6 is intended to estimate this gate-cleared transport. A
conditionally valid stale-risk oracle (Definition~\ref{def:oracle})
gives the stronger per-commitment statement and does not need A1 for
its own class.
\item[A2 (Bounded latency).] Sensing-to-authorization latency is at
least $\delta > 0$ on every channel, and invalidation detection
completes within $\delta_d$ of the invalidating event. The floor $\delta$ is a fact about
physical channels. The ceiling $\delta_d$ is a detection-latency
property, measured by RQ6 as the time from a registered world shock
to the detected invalidation, reported with a predeclared upper
confidence bound; the repository and gate overheads of
Section~\ref{sec:eval-rq4} are a component of that path, not a
substitute for the measurement.
\item[A3 (Compositional compensability and frame condition).]
Every reversible step $T_i$ registers a compensating transaction
$C_i$. For any finite affected set, successful execution of the
transactions in reverse commit order establishes the rollback
postcondition for the whole compensated suffix: after later
transactions have been compensated, $C_i$ restores the declared
invariants for the enlarged suffix and preserves every rollback
postcondition already established. Each $C_i$ writes only its
declared repair footprint, and the union of those footprints is the
declared affected slice. This compositional postcondition and frame
condition, rather than local invertibility alone, are the inductive
interface used below. Irreversible steps are gated, not compensated.
Enforced as a module contract; exercised by RQ6 (observable O3).
\item[A4 (Acyclic dependencies).] The dependency relation
$D=\{(o_i,o_j,c_{ij})\}$ of Section~\ref{sec:gate-chains} is a
finite directed acyclic graph, and every dependency edge points
from an earlier to a later commit, so reverse commit order is a
reverse topological order. Enforced by the scheduler's construction.
\item[A5 (Isolation).] Compensation executes with serial
equivalence over the affected set: no interleaved commit reads or
writes the set mid-repair. This is the exact module assumption used
by the proof; SagaLLM motivates the interface, but the present
theorem remains conditional on it~\cite{chang2025sagallm}.
\item[A6-C (Dependency completeness).] Every true dependency among
commitments appears in $D$, so the computed dependency-scoped
affected set contains every truly affected commitment. Completeness
is what restoration correctness requires: a missed dependent may
remain unrepaired and prevents a restoration guarantee. Measured
by RQ6's affected-set recall (O7).
\item[A6-S (Dependency soundness).] No spurious dependency appears
in $D$, so the computed set contains nothing the revision leaves
untouched. Soundness is what minimality and repair economy require;
restoration correctness does not need it. Measured by RQ6's
affected-set precision (O7). Both halves are declared architectural
invariants, tested rather than assumed: services exchange reasoning
state only through records, every commitment cites its justifying
claims, and citation-closure checks run in the contract tests; the
campaigns measure the invariants on the incident stream and do not
establish them generally. We write A6 for the conjunction of
A6-C and A6-S.
\item[A7 (Recovery-endpoint liveness).] For a theorem-covered
endpoint $T$, every invalidation whose restoration is claimed occurs
no later than $T-\delta_d-\delta_r$ for a declared $\delta_r>0$;
detection within $\delta_d$ atomically enqueues exactly one
restoration invocation for every member of the computed affected
set, and every such traversal whose compensations succeed completes
within $\delta_r$.
This is the liveness and terminal-grace condition needed to conclude
restoration by $T$. Without A7, the proof establishes rollback
correctness conditional on completion, while incompletion by $T$
belongs to the restoration-failure debt.
\end{description}

\subsection{Neither half suffices}\label{sec:theory-impossibility}

\paragraph{Common setup.} The substate follows $dx_t = \sigma\,dW_t$
with $\sigma > 0$. In this analytic subsection observations are
exact samples of $x$. Let $\mathcal H_u$ be the sigma-field generated
by samples with sampling times at most $u$, augmented by policy
randomization independent of $W$. Every observation channel has cost
at least $c_{\min} > 0$ and latency at least $\delta > 0$ (A2), so the
controller's decision filtration is
$\mathcal{F}^{\mathrm{obs}}_t=\mathcal H_{t-\delta}$: a sample
contributes to a decision at $t$ only if taken by $t-\delta$.
Observation epochs are stopping times, and an execution epoch after
the last contributing sample is measurable with respect to the
last-sample sigma-field together with exogenous demand and independent
policy randomization. These conditions license the strong Markov
step used below. A refresh policy is any channel-selection and timing
rule adapted to $\mathcal{F}^{\mathrm{obs}}$. For
Theorem~\ref{thm:no-sync}(ii) we additionally use the demand
condition stated there: commitments may require execution at any
time in the horizon, so the policy cannot concentrate its
observations around known execution epochs. Throughout,
$\Phi$ is the standard normal distribution function and
$z_p = \Phi^{-1}(1-p)$.

Throughout the analytic model, an irreversible commitment's
declared tolerance is \emph{boundary-anchored}: the claim asserts
that the state lies on the safe side of the action's decision
boundary at observed distance $d$, so its stale event
(Definition~\ref{def:stale}) is the one-sided terminal exceedance
toward the boundary and coincides with the action-flip event,
matching the gate's flip-risk semantics. A two-sided tolerance band
would add a safe-side exceedance that flips no action; no result
below bounds that term.

\begin{theorem}[Insufficiency of synchronization]\label{thm:no-sync}
Under the exact-observation and stopping-time setup above, let a
substate follow $dx_t = \sigma\, dW_t$ with $\sigma > 0$, let
every observation channel have cost at least $c_{\min} > 0$ and
latency at least $\delta > 0$ (A2), and consider a point commitment whose last
usable sample found boundary distance $d > 0$. Then, for
any refresh-only policy: (i) conditional on executing the
commitment without evidence newer than $t_e - \delta$, the
stale-execution probability is at least
$p_\delta = \Phi\!\left(-d/(\sigma\sqrt{\delta})\right) > 0$,
regardless of expenditure, and if the policy must execute with
probability at least $s > 0$ on this fixed-distance stratum, its
unconditional stale-execution probability is at least
$s\, p_\delta$; and (ii) relaxing the latency floor to
zero (the case most favorable to refresh), consider a service
stratum in which every authorization's observed boundary distance
is at most $d$, commitments may require execution at any time in a
horizon $H$ with demand instants not signaled in advance (the
policy cannot co-locate an observation with an execution), and the
service must remain available throughout; sustaining stale-execution probability at most $p \in (0, \tfrac12)$ on
this stratum costs at least
$c_{\min} H \sigma^2 z_p^2 / d^{\,2}$ with
$z_p = \Phi^{-1}(1-p)$, which diverges as $p \to 0$. No refresh-only
policy under a nonzero service obligation can therefore drive
stale-execution probability to zero at bounded observation cost;
abstention is not a loophole but the gate's own third option, HOLD,
priced as lost coverage.
\end{theorem}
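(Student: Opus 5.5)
The plan is to reduce both parts to a single Gaussian computation at the last usable sample, using the strong Markov property licensed by the common setup. Let $\tau$ be the sampling time of the last sample contributing to the decision at execution epoch $t_e$. Because $\mathcal F^{\mathrm{obs}}_{t_e}=\mathcal H_{t_e-\delta}$, we have $\tau\le t_e-\delta$. By the setup, $\tau$ is a stopping time and $t_e$ is measurable with respect to $\mathcal H_\tau$ together with exogenous demand and independent randomization. The strong Markov step then gives that, conditional on that sigma-field, $x_{t_e}-x_\tau\sim\mathcal N\bigl(0,\sigma^2(t_e-\tau)\bigr)$. Under the boundary-anchored convention, the stale event for the point commitment is the one-sided excursion $\{x_{t_e}-x_\tau\le -d\}$ toward the boundary, since the last sample found distance $d$. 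Its conditional probability is therefore
\[
g(\Delta)=\Phi\!\left(-\frac{d}{\sigma\sqrt{\Delta}}\right),\qquad \Delta=t_e-\tau .
\]
This function is increasing in $\Delta$ and does not depend on anything the policy spent before $\tau$.

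\textbf{Part (i).} Since $\Delta\ge\delta$ and $g$ is monotone, the conditional stale probability is at least $g(\delta)=p_\delta>0$ on every history, whatever the expenditure. Integrating over the event that the policy executes, which has probability at least $s$ on this fixed-distance stratum, gives the unconditional bound $s\,p_\delta$ by the tower property. The only care point is to state the conditioning on execution as conditioning on an $\mathcal H_\tau$-plus-randomization measurable event, so that the Gaussian law of the increment is not disturbed.

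\textbf{Part (ii).} Set $\delta=0$, so $\tau$ is the last sample before the demand instant.
\begin{enumerate}[leftmargin=1.4em, itemsep=2pt, topsep=3pt]
\item Because every authorization on the stratum has observed distance at most $d$, and $\Phi(-d'/(\sigma\sqrt\Delta))\ge g(\Delta)$ for $d'\le d$, requiring conditional stale probability at most $p$ forces $g(\Delta)\le p$. Inverting, this means $\Delta\le \Delta^*:=d^2/(\sigma^2 z_p^2)$; here $p<\tfrac12$ makes $z_p>0$, so the inversion is well defined.
\item Demand instants are unsignaled and may fall anywhere in $[0,H]$, and service must remain available throughout. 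I would argue that, almost surely, the sampling times leave no gap longer than $\Delta^*$. Suppose instead that with positive probability some sample-free interval of length exceeding $\Delta^*$ occurs. Because the policy is adapted and cannot anticipate demand, a demand instant placed late in that interval executes with elapsed time above $\Delta^*$ and violates the bound. HOLD would violate availability instead.
\item A covering of $[0,H]$ by gaps of length at most $\Delta^*$ needs at least $H/\Delta^*$ samples. Each sample costs at least $c_{\min}$, so the expected cost is at least $c_{\min}H\sigma^2 z_p^2/d^2$. This bound diverges as $p\to0$ because $z_p\to\infty$.
\end{enumerate}

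The main obstacle is step 2: making rigorous that a randomized, adapted policy facing non-anticipated demand must cover the horizon essentially surely, rather than merely on average. Here I would first try a worst-case demand formulation, in which the stratum's guarantee must hold for every demand instant, and bound the expected number of samples by the count of disjoint windows of length $\Delta^*$ that each require a sample. If only random demand with a density were available, the same argument would instead give a bound weakened by that density. I would flag this in the proof as the exact reading of ``execution at any time'' being used.
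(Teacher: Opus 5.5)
Your proposal is correct and follows essentially the same route as the paper. In both, the strong Markov step at the last contributing sample gives the terminal-exceedance probability $\Phi(-d/(\sigma\sqrt{t_e-\tau}))$; monotonicity in the age then yields (i), with the tower property giving $s\,p_\delta$, and for (ii) you invert to the age cap $d^2/(\sigma^2 z_p^2)$ and count a covering of $[0,H]$. The only difference is that you explicitly flag and justify, under the worst-case reading of non-anticipated demand that the paper uses implicitly, why an adapted randomized policy must keep the observation age below the cap at every time; the paper simply asserts this from the demand condition.
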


\begin{proof}[Proof of Theorem~\ref{thm:no-sync}]
(i) Fix a commitment executed at $t_e$ and let $t_o \le t_e - \delta$
be the sampling time of the last exact sample that contributes to
$\mathcal{F}^{\mathrm{obs}}_{t_e}$, with boundary distance $d > 0$
on the safe side. Because no later sample contributes,
$\mathcal{F}^{\mathrm{obs}}_{t_e}=\mathcal H_{t_o}$ up to null sets.
By the strong Markov property at the observation stopping time $t_o$,
conditional on $\mathcal H_{t_o}$ and the measurable duration
$t_e-t_o$, the increment $x_{t_e}-x_{t_o}$ is Gaussian with mean
zero and variance $\sigma^2(t_e-t_o)$. Exogenous demand and policy
randomization are independent of $W$, so further conditioning on the
decision to execute does not tilt this law. The stale event of
Definition~\ref{def:stale} is
the terminal-exceedance event, the increment leaving the state past
the boundary at execution time (an endpoint statement; no
first-passage claim is made), whose conditional probability is
therefore
\[
\Pr\!\left(\text{stale} \mid \mathcal{F}^{\mathrm{obs}}_{t_e}\right)
= \Phi\!\left(\frac{-d}{\sigma\sqrt{t_e - t_o}}\right)
\ \ge\ \Phi\!\left(\frac{-d}{\sigma\sqrt{\delta}}\right)
= p_\delta,
\]
using $t_e - t_o \ge \delta$ and monotonicity of
$u \mapsto \Phi(-d/(\sigma\sqrt{u}))$. This is the conditional
statement: conditional on executing at $t_e$ with no evidence newer
than $t_e - \delta$ and last observed distance $d$, the
stale-execution probability is at least $p_\delta$. If the policy
must execute with probability at least $s > 0$ on this
fixed-distance stratum, the law of total probability gives an
unconditional stale-execution probability of at least
$s\, p_\delta$. Additional expenditure buys observations no later
than $t_e - \delta$: it can change the observed distance $d$, but
it cannot shrink the final window below $\delta$. Since $p_\delta$
does not depend on the policy, no refresh-only policy escapes the
conditional floor, and none under the service obligation escapes
the unconditional one.

(ii) Set $\delta = 0$ and fix a target $p \in (0, \tfrac12)$. At an
execution whose staleness age is $s$, the observed boundary
distance is at most $d$ by the stratum hypothesis, so the stale
probability is at least $\Phi(-d/(\sigma\sqrt{s}))$ by the identity
of part (i); keeping it at most $p$ therefore requires
$d/(\sigma\sqrt{s}) \ge z_p$, that is
$s \le k_p := d^{\,2}/(\sigma^2 z_p^2)$. Under the demand condition
a commitment may require execution at any $t \in [0, H]$, the
service must remain available throughout, and demand instants are
not signaled, so the policy cannot concentrate observations at
execution epochs and must keep the observation age at most $k_p$
at every time in the horizon; any policy doing so issues at least
$\lceil H / k_p \rceil$ observations, each costing at least
$c_{\min}$, for a total of at least
$c_{\min} H \sigma^2 z_p^2 / d^{\,2}$. As $p \to 0$,
$z_p \to \infty$ and the bound diverges. The conclusion is
conditional on execution: a policy that abstains has no stale
executions and no violations, at the price of the declared service
obligation; under any nonzero obligation to execute, zero
stale-execution probability at bounded observation cost is
unattainable by refresh alone.
\end{proof}

\begin{corollary}[Predictor independence, filtration-conditional]
\label{cor:predictor}
Conditional on the same exact-sample filtration and on the
independent-increment dynamics of Theorem~\ref{thm:no-sync}, the
floor $p_\delta$ is predictor-independent: the terminal-exceedance
event is
driven by increments no functional of the filtration can access, so
every predictor faces the same floor at the same observed distance.
A predictor with genuinely richer inputs, upstream gauges,
forecasts, spatial correlates, changes the filtration itself and
can improve the observed-distance distribution; what it cannot do
is remove the final latency window on the channels it actually
has. Within that scope, synchronization is a contract obligation of
the channel, not a capability gap a larger encoder closes.
\end{corollary}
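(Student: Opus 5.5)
The plan is to reduce the corollary to the conditional identity already established in part (i) of the proof of Theorem~\ref{thm:no-sync}, and then observe that the identity mentions no predictor. Fix the exact-sample filtration $\mathcal F^{\mathrm{obs}}$ and a point commitment executed at $t_e$. Let $t_o\le t_e-\delta$ be the last contributing sample, with observed boundary distance $d$. Formalize a predictor as any $\mathcal F^{\mathrm{obs}}_{t_e}$-measurable map $g$ producing a belief $b$ or an action choice; a predictor ``using the same filtration'' is exactly such a $g$. The stale event is the terminal exceedance $\{x_{t_e}-x_{t_o}\le -d\}$. By the strong Markov step of that proof, the increment $x_{t_e}-x_{t_o}$ is independent of $\mathcal H_{t_o}$ given the measurable duration, and it is Gaussian with variance $\sigma^2(t_e-t_o)$.

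The key step is a conditioning argument. Because $g$ is $\mathcal H_{t_o}$-measurable up to null sets, and independent policy randomization is allowed, we have $\Pr(\text{stale}\mid \mathcal F^{\mathrm{obs}}_{t_e}, g)=\Pr(\text{stale}\mid\mathcal F^{\mathrm{obs}}_{t_e})=\Phi(-d/(\sigma\sqrt{t_e-t_o}))\ge p_\delta$. The right-hand side depends only on $(d,\sigma,t_e-t_o)$. So any two predictors at the same observed distance face the same conditional floor. This holds in particular for the Bayes-optimal predictor, the conditional law itself, so no functional does better. Stated negatively: for any event $E$ in the filtration on which the commitment executes, $\Pr(\text{stale}\mid E)\ge p_\delta$ by the tower property. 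Hence no predictor can select a subset of executions with a lower stale rate.

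For the second half of the statement I will note what changes when inputs are richer. Adding gauges or forecasts enlarges $\mathcal H_u$ to some $\mathcal H'_u$. If the added variables carry information about $W$ on $(t_o,t_e]$ available by $t_e-\delta$, they act as additional samples subject to A2. Their effect then enters only through a shifted observed distance or a shortened age, and the age is still at least $\delta$ on the channels actually present. I would make this precise only inside the model: richer inputs correlated with past increments change the law of $d$, but future increments on $[t_e-\delta,t_e]$ stay independent of $\mathcal H'_{t_e-\delta}$. The floor $\Phi(-d'/(\sigma\sqrt\delta))$ therefore persists at whatever observed distance $d'$ results.

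I expect the main obstacle to be this last point, because ``richer inputs'' is informal. An upstream gauge that literally forecasts $W$ would break the independent-increment assumption. So I would explicitly restrict the claim to enlargements that preserve independence of the post-$(t_e-\delta)$ increments, which matches the corollary's ``within that scope'' qualifier. The rest is bookkeeping from Theorem~\ref{thm:no-sync}(i).
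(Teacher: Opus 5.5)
Your argument is essentially the paper's. You treat any predictor as a functional of the exact-sample history plus randomness independent of $W$, and invoke the strong Markov step of Theorem~\ref{thm:no-sync}(i) so that every such functional faces the same conditional stale probability $\Phi(-d/(\sigma\sqrt{t_e-t_o}))\ge p_\delta$. You then read richer inputs as an enlarged filtration that can move the observed distance but not the final $\delta$-window. Your explicit restriction to enlargements that preserve independence of the post-$(t_e-\delta)$ increments is a sharper statement of the scope the paper leaves informal.

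One correction: your ``stated negatively'' selection claim holds only for events $E$ inside the stratum where the last observed distance is at most $d$. Across distances, a predictor can select executions with a lower stale rate by executing only at larger observed distances. That is exactly the effect the corollary's second half and the paper's closing sentence concede, so the floor bounds risk conditional on distance, not the stale rate of arbitrary selected subsets.
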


\begin{proof}[Proof of Corollary~\ref{cor:predictor}]
The terminal-exceedance event in part (i) is a function of the
increment over
$(t_o, t_e]$, whose conditional law given $\mathcal H_{t_o}$ and
the measurable duration is supplied by the strong Markov property.
Any predictor computes its
belief as a functional of the observation history, its priors, and
internal randomness independent of $W$,
so for every predictor the conditional law of the increment given
that history is the same, and the conditional stale probability
given the last observed distance $d$ is
$\Phi(-d/(\sigma\sqrt{t_e - t_o}))$ identically. The floor
$p_\delta$ therefore binds uniformly over predictor classes. A
better predictor can influence which commitments are attempted and
at what observed distances; it cannot alter the floor conditional
on the distance, which is the sense in which synchronization is an
obligation of the channel rather than a property of the model.

\end{proof}

\paragraph{Extended commitments.}
Theorem~\ref{thm:no-sync} and Corollary~\ref{cor:predictor} evaluate
staleness at the execution instant, which is the correct hazard for a
commitment whose action is instantaneous. A commitment whose safety
requires its claim to hold for the duration of the action faces the
interval hazard instead; under the same one-sided Brownian horizon its
first-passage floor is exactly twice the terminal floor:
Lemma~\ref{lem:sustained} states the corresponding deadline and
Section~\ref{sec:refresh-theorem} prices it. The qualitative
conclusion is unchanged and the constants are strictly worse.

\begin{theorem}[Insufficiency of compensation]\label{thm:no-comp}
Under the setting of Theorem~\ref{thm:no-sync}, consider any policy
whose only consistency instrument is correction. Then: (i)
correction cannot change the stale execution of an already executed
irreversible commitment: under the model's exogenous observable
dynamics and an execution schedule held fixed across recovery
layers (the counterfactual O1 declares), the executed-stale count
is invariant to the recovery layer, and each executed irreversible
commitment at
observed boundary distance $d_i$ retains the floor
$p_{\delta,i}$, so
$\mathbb E[V]\geq\mathbb E[\sum_i p_{\delta,i}]$, where the sum is
over executed irreversible commitments, and no compensation term
reduces it,
because compensation acts after execution and irreversible actions
register no $C_i$ (A3). (ii) For a fixed reversible commitment
exposed over a single inter-observation interval from the same
observed boundary distance, the probability that its claim is
outside its declared tolerance at the check that ends the
interval, and hence is registered there as an invalidation carrying a repair
obligation for its committed dependents, is nondecreasing in the
interval length $k$; no bound on executed compensation cost is
claimed, because an obligation becomes an executed repair only
through the detection and invocation machinery that
Theorem~\ref{thm:composition}(iii) prices separately. Correction
alone therefore leaves the irreversible floor untouched and,
interval for interval, converts the price of consistency into
repair obligation rather than into a guarantee.
\end{theorem}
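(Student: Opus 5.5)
The proof has two parts. Part (i) is a measurability and invariance argument that reuses the floor from Theorem~\ref{thm:no-sync}(i). Part (ii) is a monotonicity computation in the Brownian model of the common setup.

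For (i), I first fix the execution schedule and the exogenous path $W$ across recovery layers, as O1 declares. Any registered corrective mechanism acts only after the execution instant $t_e$ of the commitment it targets. So the stale indicator $Z_i$ is measurable with respect to the world path up to $t_e$. Under the exogenous dynamics, that path is untouched by anything the recovery layer does afterwards. Hence the executed-stale count is the same function of the same path under every recovery layer.

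Next, an irreversible step registers no compensating transaction $C_i$ (A3). So no registered mechanism can restore its invariants, and every stale irreversible execution is a violation by Definition~\ref{def:violation}. That gives $V\ge\sum_i Z_i$, with the sum over executed irreversible commitments.

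To bound the expectation, I condition each term on $\mathcal F^{\mathrm{obs}}_{t_{e,i}}$ and apply the conditional floor from the proof of Theorem~\ref{thm:no-sync}(i). This gives $\Pr(Z_i=1\mid\mathcal F^{\mathrm{obs}}_{t_{e,i}})\ge p_{\delta,i}$, with $p_{\delta,i}=\Phi(-d_i/(\sigma\sqrt\delta))$. The tower property, summed over a random but $\mathcal F^{\mathrm{obs}}$-adapted set of executed commitments, then yields $\mathbb E[V]\ge\mathbb E[\sum_i p_{\delta,i}]$. I would write the random count as $\sum_i \mathbf 1\{i \text{ executed}\}Z_i$; the execution indicator is measurable at $t_{e,i}$, so the floor passes through the conditioning.

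For (ii), I fix a reversible commitment whose interval starts from an exact observation at boundary distance $d>0$. The interval then ends at a check after length $k$. By the strong Markov step of the common setup, the terminal increment is $N(0,\sigma^2k)$. Under the boundary-anchored one-sided tolerance, the event that the claim is out of tolerance at the check therefore has probability $\Phi(-d/(\sigma\sqrt k))$. This is nondecreasing in $k$, because $u\mapsto -d/(\sigma\sqrt u)$ is increasing and $\Phi$ is monotone. By the refresh-condition semantics of Definition~\ref{def:refresh}, this out-of-tolerance event at the check is exactly a registered invalidation. Each such invalidation carries a repair obligation for the committed dependents in $D$. No cost bound is asserted, as the statement says.

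The main obstacle is the invariance claim in (i). It needs the recovery layer to have no feedback onto either the world path or the execution schedule before $t_e$: for example, a compensation that re-times a later irreversible commitment would change which commitments execute. I would handle this exactly as the statement does, by making the fixed-schedule counterfactual an explicit hypothesis. The irreversible floor is then a statement about that schedule, and the case where recovery reshapes scheduling is deferred to Theorem~\ref{thm:composition}.
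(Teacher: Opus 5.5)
Your proposal is correct and follows the paper's proof in both parts: for (i) the paper likewise uses exogeneity and the fixed O1 schedule for invariance, then A3 so that every stale irreversible execution is a violation, then the tower property. The paper conditions on a pre-execution sigma-field $\mathcal G_i$ built to contain the execution decision and $d_i$, which avoids your claim that the execution indicator is $\mathcal F^{\mathrm{obs}}$-measurable even though it may also depend on exogenous demand. In (ii) the paper resets the asserted value at the opening check and uses the two-sided band, so the terminal discrepancy is $\sigma\sqrt{k}\,|Z|$ rather than your one-sided $\Phi(-d/(\sigma\sqrt{k}))$, but the monotonicity argument is the same.
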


\begin{proof}[Proof of Theorem~\ref{thm:no-comp}]
(i) The stale event of Definition~\ref{def:stale} is determined by
the world state and the claim at execution time, both of which are
fixed before any compensation acts; with the model's observables
exogenous to compensation and the execution schedule held fixed
across recovery layers, removing or installing the compensation
layer therefore changes no executed-stale count. (RQ6
declares this invariance as observable O1's healthy signature, so
the proofs and the campaign test the same identity.) Irreversible
commitments register no compensating transaction (A3), so an
executed-stale irreversible commitment is uncompensated at mission
end and is a violation by Definition~\ref{def:violation}. Its
probability is at least $p_{\delta,i}$, the floor of
Theorem~\ref{thm:no-sync}(i) at commitment $i$'s own observed
boundary distance. Let $\mathcal G_i$ be the pre-execution
sigma-field containing the execution decision and distance $d_i$,
but not the post-execution outcome. The executed-index indicator and
$p_{\delta,i}$ are $\mathcal G_i$-measurable, and
Theorem~\ref{thm:no-sync}(i) gives the conditional floor on that
sigma-field. The tower property and linearity of expectation give
$\mathbb{E}[V] \ge \mathbb{E}\!\left[\sum_i p_{\delta,i}\right]$,
with the conditional reading $\sum_i p_{\delta,i}$ given the
pre-execution variables, hence the coarser conditional bound
$N_{\mathrm{irr}} \min_i p_{\delta,i}$.

(ii) Let $E_k$ be the event that the claim's observable lies
outside its declared tolerance band at the check ending an interval
of width $k$: the terminal-exceedance event the checker actually
evaluates, not a first-passage event. With the claim's asserted
value reset by an exact observation at the interval-opening check,
so the discrepancy starts the interval at zero, for zero-drift
Brownian motion the terminal discrepancy is distributed as
$\sigma\sqrt{k}\,|Z|$ with $Z$ standard normal, stochastically
increasing in $k$, so $\Pr(E_k)$ is nondecreasing in $k$; and
because the terminal variance grows without bound,
$\Pr(E_k) \to 1$ as $k \to \infty$. A registered terminal
exceedance at a check is exactly an invalidation, so for the fixed
commitment of the statement $\Pr(E_k)$ is the probability of
carrying a repair obligation at the check. The result deliberately
stops at the obligation: an excursion that returns inside tolerance
before the check is never registered, and whether a registered
obligation becomes an executed, costed repair depends on the
detection and invocation machinery that
Theorem~\ref{thm:composition}(iii) prices separately, so no bound
on executed compensation cost is claimed. The claim is per
commitment and per interval: aggregating to a mission-level load
would require workload assumptions the statement does not make, on
how many commitments are exposed, over how many intervals, and at
what observed distances, so no mission-level monotonicity is
asserted. Correction alone therefore leaves the irreversible floor
of part (i) untouched and, interval for interval, converts the
price of consistency into repair obligation rather than into a
guarantee.
\end{proof}

\subsection{The composition theorem}\label{sec:theory-composition}

The theorem is stated at an interface, so the division of labor
between the analytic lemma and the deployed system stays visible.

\begin{definition}[Deadline oracle and validity modes]\label{def:oracle}
At a verification epoch $t_i$, a \emph{deadline oracle} for an
irreversible commitment returns a declared semantic stale-risk path
$\widehat r_i(t)$, a deadline $\tau_i>0$, and a hazard-indexed risk
budget $\eta_i^{h_i}\in(0,1)$.  The budget is paired with the stale
event selected by the commitment's declared hazard type: for a point
commitment it is a point-event budget (equal to
$\eta(a_i)\in\{\eta_f,\eta_s\}$ in the two-action model of
Lemma~\ref{thm:deadline}); for an extended commitment it is a separately
declared interval-risk budget (denoted $\eta_i^{\mathrm{sus}}$ in
Lemma~\ref{lem:sustained}).  The oracle satisfies
$\widehat r_i(t)\leq\eta_i^{h_i}$ for
$t\in[t_i,t_i+\tau_i]$.  The gate is \emph{deadline-valid} when a
point execution instant, or the whole declared window of an extended
commitment, lies inside that interval unless a new verification
renews it.  The oracle is \emph{conditionally stale-valid} when,
for the true dynamics and every execution time in the interval,
\[
 \Pr(Z_i^{h_i}(t)=1\mid\mathcal F^{\mathrm{obs}}_{t_i},
                 \text{execute at }t)
 \leq \eta_i^{h_i},
\]
where $Z_i^{h_i}(t)$ is the indicator selected by
Definition~\ref{def:stale}: terminal exceedance for a point commitment
and interval exceedance over its declared execution window for an
extended commitment.  A score oracle that is only
deadline-valid supplies instead the marginal guarantee of A1.  A
flip-risk oracle is conditionally stale-valid only when the declared
stale event is contained in the flip event.  The boundary-anchored
analytic model makes the events equal for point commitments;
Lemma~\ref{lem:sustained} supplies the corresponding first-passage
oracle for an extended commitment under the same Brownian assumptions.
\end{definition}

Lemma~\ref{thm:deadline} (Section~\ref{sec:refresh-theorem})
exhibits a conditionally stale-valid oracle in closed form,
$\tau_i = d_i^{\,2}/(\sigma^2 z_{\eta(a_i)}^2)$, under zero-drift
Brownian dynamics, exact observations, its two-action loss, and the
boundary-alignment declaration preceding Theorem~\ref{thm:no-sync}.
Lemma~\ref{lem:sustained} supplies the extended-event analogue for a
separately declared interval-risk budget; linking that budget to loss
control requires a duration-wide loss declaration.
Outside that regime the deployed guards and declared thresholds of
Section~\ref{sec:refresh-rule} implement a score oracle; conditional
stale-validity must be established separately, or A1 supplies only a
mixture-level transport.  The selective-calibration analysis (O6,
Section~\ref{sec:eval-rq6}) is an empirical candidate for that
transport. Its reported upper limits may be substituted into a
population inequality only on their joint coverage event, as stated
after Theorem~\ref{thm:composition}.

\begin{theorem}[Conditional composition correctness and audit decomposition]
\label{thm:composition}
For missions drawn from the declared mixture, the composed contract
has the following properties, with assumptions stated separately for
each part.

(i) \emph{Irreversible execution.} Gate enforcement and a
deadline-valid oracle imply deterministically that every irreversible
point execution instant or extended execution window lies inside its
certified interval. If the oracle is
conditionally stale-valid, every such execution has conditional
stale risk at most $\eta_i^{h_i}$ and satisfies tier (i) of
Definition~\ref{def:twotier}. If only the declared score is available,
the gate condition $\widehat r_i\leq\eta_i^{h_i}$ together with A1 gives
the weaker mixture-level inequality
\[
 \mathbb E\!\left[\sum_{i=1}^{N_{\mathrm{irr}}}Z_i\right]
 \leq \mathbb E[N_{\mathrm{irr}}]
       (\eta+\varepsilon_{\mathrm{cal}}),
\]
where $Z_i=Z_i^{h_i}$ is selected by the commitment's declared hazard
type and $\eta=\sup_i\eta_i^{h_i}$. This marginal conclusion is not a
per-commitment conditional-coverage claim.

(ii) \emph{Reversible endpoint consistency.} Under A2--A5, A6-C,
and A7, if every invoked compensation succeeds, every covered stale
reversible commitment has its declared invariants restored by $T$.
Under A6-S as well, the computed commitment set equals the true
affected set and repair is commitment-set minimal. Without A7 the
same safety argument establishes the rollback postcondition only
conditional on completion, not completion by $T$.

(iii) \emph{Audit decomposition.} Without assuming A2, A6-C, or A7,
let $M_{\mathrm{det}}$ count reversible violations whose invalidation
is undetected by $T$; $M_{\mathrm{dep}}$ those detected by $T$ whose
commitment the computed affected set misses; and
$M_{\mathrm{inv}}$ those detected and included for which no
restoration invocation is scheduled. Let $F_{\mathrm{restore}}$
count invoked restorations that do not restore their target by $T$.
If each invocation targets one commitment, then the pathwise
decomposition gives
\[
 V \leq \sum_{i=1}^{N_{\mathrm{irr}}} Z_i
       +M_{\mathrm{det}}+M_{\mathrm{dep}}+M_{\mathrm{inv}}
       +F_{\mathrm{restore}}.
\]
Consequently, under A1 and the gate score bound,
\[
\mathbb{E}[V] \;\le\; \mathbb{E}[N_{\mathrm{irr}}]
\big(\eta + \varepsilon_{\mathrm{cal}}\big)
+ \mathbb{E}[M_{\mathrm{det}}]
+ \mathbb{E}[M_{\mathrm{dep}}]
+ \mathbb{E}[M_{\mathrm{inv}}]
+ \mathbb{E}[N_{\mathrm{rep}}]\, f_{\mathrm{restore}},
\]
where, when $\mathbb E[N_{\mathrm{rep}}]>0$,
$f_{\mathrm{restore}}=
\mathbb E[F_{\mathrm{restore}}]/\mathbb E[N_{\mathrm{rep}}]$,
and $f_{\mathrm{restore}}=0$ by convention otherwise. The middle
three terms are accounting cells, not ex ante guarantees; the bound
becomes predictive only when they are zero by verified assumptions or
are bounded independently of the violations being predicted.
\end{theorem}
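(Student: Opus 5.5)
The proof takes the three parts in order. Each uses only the assumptions the statement assigns to it.

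For (i), the deterministic half comes straight from gate enforcement. The gate clears an irreversible commitment only if its point instant, or its whole extended window, lies in $[t_i,t_i+\tau_i]$ of a current verification. Deadline-validity (Definition~\ref{def:oracle}) then places every execution inside a certified interval. If the oracle is conditionally stale-valid, its defining inequality, evaluated at the realized execution time, is exactly tier (i) of Definition~\ref{def:twotier}. One point needs care: the execution time is chosen by the policy. I would condition on $\mathcal F^{\mathrm{obs}}_{t_i}$ and the execution decision, and use that the bound holds for every $t$ in the interval, so it holds at any measurable choice of $t$. For the score-only case, the gate guarantees $\widehat r_i\le\eta_i^{h_i}\le\eta$ pointwise. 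Summing and taking expectations gives $\mathbb E[\sum_i\widehat r_i]\le\eta\,\mathbb E[N_{\mathrm{irr}}]$, and substituting into A1 yields the displayed mixture bound.

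For (ii), I would argue by induction over the computed affected set $\mathcal A$ in reverse commit order. By A4, reverse commit order is a reverse topological order, so each $C_i$ runs after every dependent of $T_i$ has been compensated. By A5, no interleaved commit perturbs the state in between. A3's compositional postcondition and frame condition then give the inductive step: after $C_i$, the invariants hold for the enlarged suffix, and earlier postconditions are preserved because $C_i$ writes only its footprint. A6-C gives that $\mathcal A$ contains the true affected set, so every stale reversible commitment and its dependents are repaired. A2 gives detection within $\delta_d$. A7 places the invalidation no later than $T-\delta_d-\delta_r$, enqueues exactly one invocation per member, and bounds the completion time. Together these give restoration by $T$. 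Adding A6-S makes the computed set also contained in the true set, hence equal to it, which is the commitment-set minimality. Without A7, the same induction gives the postcondition conditional on completion.

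For (iii), I would classify each violation pathwise, and I expect this step to be the main obstacle. A violating irreversible commitment is stale and executed, so it counts in $\sum_i Z_i$. A violating reversible commitment was stale and left unrestored at $T$. Walking through the repair pipeline, such a commitment must fall into one of the following cells:
\begin{enumerate}
\item its invalidation was undetected by $T$, so it counts in $M_{\mathrm{det}}$;
\item it was detected but missed by the computed affected set, so it counts in $M_{\mathrm{dep}}$;
\item it was detected and included but never had a restoration invocation scheduled, so it counts in $M_{\mathrm{inv}}$;
\item it was invoked but not restored by $T$, so it counts in $F_{\mathrm{restore}}$.
\end{enumerate}
The one-target-per-invocation hypothesis is what makes the map injective into the last cell. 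Since the cells need only cover the violations, the inequality holds even if they overlap, and it does not rely on A2, A6-C, or A7. The work lies in checking that the cells really exhaust the pipeline stages and that no violation slips between them.

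Taking expectations, I would bound the irreversible term by (i) under A1. I would write $\mathbb E[F_{\mathrm{restore}}]=\mathbb E[N_{\mathrm{rep}}]f_{\mathrm{restore}}$ by the definition of $f_{\mathrm{restore}}$, with the stated convention covering $\mathbb E[N_{\mathrm{rep}}]=0$.
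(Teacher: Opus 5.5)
Your proposal is correct and follows the paper's proof essentially step for step: gate enforcement plus the oracle definition and the summed A1 inequality for (i); reverse-commit-order induction from A3--A5, with A6-C/A6-S for containment and minimality and A2/A7 for completion by $T$, for (ii); and, for (iii), classification by pipeline stage with the one-target-per-invocation injection into $F_{\mathrm{restore}}$. The differences are cosmetic. The paper also invokes the tick guard for the Zeno case in (i) and asserts that the cells are disjoint by first failed stage, while you correctly note that covering alone suffices for the inequality.
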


\begin{proof}[Proof of Theorem~\ref{thm:composition}]
(i) The gate refuses to clear an irreversible
commitment in exactly three staleness situations: the claim's
validity interval fails to cover the commitment horizon, the
evidence age exceeds the declared bound, or the admissibility
deadline has lapsed without evidence adequate under the refresh
rule. The guarded scheduler ends every lapsed authorization in the
safe alternative, HOLD, or ESCALATE, and its tick guard handles the
Zeno case: when the theoretical deadline falls below one tick, no
authorization is issued at all, so neither a point execution instant
nor an extended execution window escapes a certified interval in that
case. This proves deadline validity.
If the oracle is conditionally stale-valid,
Definition~\ref{def:oracle} directly gives the per-execution bound.
For a score oracle, the gate gives $\widehat r_i\leq\eta$ for every
executed irreversible commitment; summing and applying A1 yields
$\mathbb E[\sum_i Z_i]\leq
\mathbb E[\sum_i\widehat r_i]
+\mathbb E[N_{\mathrm{irr}}]\varepsilon_{\mathrm{cal}}
\leq\mathbb E[N_{\mathrm{irr}}](\eta+
\varepsilon_{\mathrm{cal}})$. No conditional statement is inferred
from this marginal inequality.

(ii) Let $r$ be the revised claim and let the affected set $A(r)$
be the set of committed steps whose justification cites $r$
transitively under $D$; this is what the dependency traversal
computes. A6-C makes the computed set contain every truly affected
commitment, which restoration correctness rests on; A6-S makes it
contain nothing else, which minimality rests on. Minimality first:
under A6-S a commitment outside $A(r)$ cites no invalidated claim,
so its authorization is untouched; A3's frame condition confines
writes to the declared footprints of commitments in $A(r)$. Under
A6-C no truly dependent commitment escapes the traversal.
By A4 the reverse commit order over the finite set $A(r)$ is a
well-founded reverse topological order. The latest committed member
has no later member, so A3's suffix postcondition holds after its
successful compensation. Inductively, suppose the rollback
postcondition holds for the already compensated suffix. A3 states
that the next successful $C_i$ preserves that postcondition and
establishes it for the enlarged suffix. Thus, after all invocations
succeed, the rollback postcondition and $I$ hold over the whole
affected set. A5 excludes interleaved commits that could read or
write the set mid-repair, so no restored invariant is invalidated
during the traversal. A2 supplies detection within $\delta_d$; A7
enqueues every computed member and supplies both the terminal grace
and the $\delta_r$ completion bound, so the successful traversal
ends by $T$. Under A6-C all
truly affected commitments are restored; under A6-S the computed
set contains no unaffected commitment, giving commitment-set
minimality.

(iii) A violation is an executed-stale commitment whose declared
invariants no registered mechanism restores by $T$. Condition on a
mission and partition by class. An irreversible stale execution
admits no restoring mechanism, so it is a violation exactly when
$Z_i=1$, giving the first term pathwise. A
reversible violation is classified, exhaustively and disjointly, by
the first stage that failed on its path from staleness to
restoration: either its invalidation was not detected by $T$
(counted by $M_{\mathrm{det}}$); or it was detected but the
commitment lay outside the computed affected set (counted by
$M_{\mathrm{dep}}$); or it was detected and the computed affected
set contains the commitment, but no restoration invocation was ever
scheduled (counted by $M_{\mathrm{inv}}$); or a restoration was
invoked for it and failed to restore its target by $T$. These four
cells exhaust the reversible violations by construction. The
invocation invariant (one restoration invocation enqueued
atomically per member of the computed affected set, checked by the
workbench's contract tests) predicts that the third cell is empty;
the frozen per-violation classifier reports any occupant it
nevertheless finds, surfacing under its v1 cells as
\textsc{unclassified}, which flags the identity rather than
silently absorbing the case. For the
fourth cell, each invoked restoration targets exactly one
commitment
(one compensating transaction per saga step). A commitment in this
cell has at least one invocation and, being unrestored, all of its
endpoint-scoped invocations failed, including any retries; selecting one failed
invocation per commitment gives an injection from cell members to
failed invocations, since invocations of distinct commitments are
distinct. The cell's count is therefore at most
$F_{\mathrm{restore}}$, the number of failed invocations, with
retries only enlarging the right-hand side. This proves the pathwise
decomposition without A2, A6-C, or A7. A2 together with A7 empties
the first cell for theorem-covered invalidations; A6-C empties the
second, and A7's invocation invariant empties the third. Taking
expectations, applying part (i)'s A1 inequality, and
using
$\mathbb{E}[F_{\mathrm{restore}}] =
\mathbb{E}[N_{\mathrm{rep}}]\, f_{\mathrm{restore}}$ gives the
displayed expectation bound.
\end{proof}

\paragraph{Statistical instantiation.}
Theorem~\ref{thm:composition} is deterministic in the population
quantities $\varepsilon_{\mathrm{cal}}$ and
$f_{\mathrm{restore}}$. Let random upper limits
$\varepsilon_{\mathrm{cal}}^{U}$ and
$f_{\mathrm{restore}}^{U}$ be computed from an estimation sample.
The first limit must bound A1's risk-transport slack after the recorded
scores $\widehat r_i$ are subtracted; an upper limit on the stale-event
rate alone is not an estimate of calibration slack, although it can be
a vacuous conservative substitute because risk scores are nonnegative.
They may be substituted simultaneously only on a coverage event
\[
 \Pr\!\left(
 \varepsilon_{\mathrm{cal}}\leq\varepsilon_{\mathrm{cal}}^{U},
 \ f_{\mathrm{restore}}\leq f_{\mathrm{restore}}^{U}
 \right)\geq 1-\alpha .
\]
On that event the corresponding plug-in violation inequality holds
for a new mission from the declared mixture. Invocation-level Wilson
limits are not cluster-valid when invocations share missions; absent
a mission-cluster-valid simultaneous construction, the reported
upper limits are conservative sensitivity inputs, not a nominal
$1-\alpha$ theorem certificate.

\begin{corollary}[Finite accounting and fixed-sequence regimes]\label{cor:price}
Under the composed contract, a finite workload, finite per-check and
per-repair costs, the rate guard of Section~\ref{sec:refresh-theorem},
a bounded retry policy, and a declared per-violation loss cap, the
three terms of $J$ are finite, observable, and separately
attributable. Theorem~\ref{thm:composition}(iii) supplies the audit
decomposition behind $L_{\mathrm{viol}}$; it supplies a predictive
numerical bound only when the debt cells are independently bounded.
Proposition~\ref{prop:sched} supplies a check-count comparison for
$C_{\mathrm{sync}}$ (a spend comparison when per-check prices are
comparable), and no standalone bound for $C_{\mathrm{comp}}$. $C_{\mathrm{sync}}$ is the
deadline scheduler's $N$ checks priced by channel choice, and by
Proposition~\ref{prop:sched} any uniform scheduler honoring the
same declared deadline sequence uses at least the closed-span factor
$\bar\tau/\tau_{\min}$ under the proposition's counting convention;
$C_{\mathrm{comp}}$ follows the
invalidation rate, and prevention does not simply lower it: a
tighter tolerance detects more exits, raising invalidations while
lowering stale executions, so prevention converts undetected
staleness into detected, repairable invalidations; a quantitative
bound on $C_{\mathrm{comp}}$ would require workload and
repair-frequency assumptions the theorems do not make;
$L_{\mathrm{viol}}$ inherits the loss-capped version of
Theorem~\ref{thm:composition}(iii). The comparison with fixed
spacing is regime-dependent, and the governing quantity is deadline
heterogeneity: when $\bar\tau/\tau_{\min}$ is near one, a
well-chosen fixed interval honors the same deadlines at comparable
price and can win on opportunity cost; when deadlines are
heterogeneous, the adaptive advantage grows with the ratio
$\bar\tau/\tau_{\min}$ for this fixed-sequence counting comparison;
no stochastic-policy dominance follows. The reversals measured in RQ1 are
therefore
consistent with the theory's predicted regimes rather than
embarrassments to it.
\end{corollary}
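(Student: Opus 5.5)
The plan is to prove Corollary~\ref{cor:price} term by term, one summand of $J$ at a time, and then read off the regime comparison from Proposition~\ref{prop:sched}.

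\emph{Finiteness and attributability.} The corollary's hypotheses give a finite workload: finitely many commitments, claims and, under the rate guard of Section~\ref{sec:refresh-theorem}, finitely many verification epochs per mission. Each check fires through a recorded channel at a finite declared price, so $C_{\mathrm{sync}}$ is a finite sum of recorded prices, each attributable to the refresh condition that fired it. For $C_{\mathrm{comp}}$, the affected set is finite by A4. The bounded retry policy caps invocations per affected member, so the number of executed compensations, and hence $C_{\mathrm{comp}}$, is finite. Each executed compensation is recorded against its saga step. For $L_{\mathrm{viol}}$, the number of violations $V$ is at most the finite number of executed commitments, and each carries a loss at most the declared cap. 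Hence $L_{\mathrm{viol}}$ is at most the cap times $V$, and $V$ is bounded pathwise by Theorem~\ref{thm:composition}(iii). The loss-capped decomposition follows by multiplying each cell by the cap. Taking expectations of these bounded quantities gives finiteness of $J$. Attributability follows because each term sums over disjoint record types: observation records, compensation records, and violation records classified by the frozen per-violation classifier.

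\emph{The $C_{\mathrm{sync}}$ comparison.} Here I would invoke Proposition~\ref{prop:sched} directly. A uniform scheduler honoring the deadline sequence must use spacing at most $\tau_{\min}$. The adaptive scheduler uses about $H/\bar\tau$ checks and the uniform one at least $H/\tau_{\min}$, so the ratio of counts is at least $\bar\tau/\tau_{\min}$ under the proposition's closed-span convention. The count comparison becomes a spend comparison only when per-check prices are equal or comparable. Both regime statements follow: near ratio one the counts agree, and any difference lies in opportunity cost; otherwise the factor grows with heterogeneity.

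\emph{The negative statements.} I would justify that no standalone bound on $C_{\mathrm{comp}}$ exists by the sign argument already implicit in Theorem~\ref{thm:no-comp}(ii). Shrinking the tolerance $\theta$ enlarges the terminal-exceedance event at checks, which raises the invalidation probability. It simultaneously raises the chance of detection before execution. So $C_{\mathrm{comp}}$ is not monotone in prevention effort, and any numeric bound requires exposure and frequency assumptions that the statement does not make.

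\emph{Main obstacle.} I expect the delicate step to be the $L_{\mathrm{viol}}$ claim: stating precisely that the decomposition is an accounting identity rather than a predictive bound. The debt cells $M_{\mathrm{det}}$, $M_{\mathrm{dep}}$ and $M_{\mathrm{inv}}$ are not controlled by the corollary's hypotheses. I would therefore state finiteness only via the workload cap, and predictiveness only conditionally on independent bounds for those cells.
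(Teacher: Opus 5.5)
Your route is essentially the paper's. You cite Proposition~\ref{prop:sched} for $C_{\mathrm{sync}}$, argue that a tighter tolerance raises invalidations while lowering stale executions, and let $L_{\mathrm{viol}}$ inherit the loss-capped pathwise decomposition of Theorem~\ref{thm:composition}(iii).

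Your finiteness paragraph is more explicit than the paper's proof, which asserts finiteness and attribution without deriving them. Reading the rate guard as what rules out Zeno accumulation of epochs, and the retry bound plus finite affected sets as what caps compensations, is the right use of the hypotheses. Passing to expectations, however, needs the workload cap to be deterministic or the counts integrable; almost-sure finiteness is not enough.

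On $C_{\mathrm{comp}}$, note that Theorem~\ref{thm:no-comp}(ii) states monotonicity in the interval length $k$, not in $\theta$. The $\theta$ direction follows from the same law $\sigma\sqrt{k}\,|Z|$, not from the theorem's statement. The paper uses the $k$ direction to name exposure length (validity horizons and deadlines) as the prevention-side control on repair.

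The one step that does not follow as written is the first regime. Proposition~\ref{prop:sched} gives only a lower bound on the uniform scheduler's count. It shows the adaptive advantage is at least $\bar\tau/\tau_{\min}$, but not that near ratio one ``the counts agree'', or that a fixed interval ``honors the same deadlines at comparable price''. That is an achievability claim, and you need to exhibit the scheduler:
\begin{itemize}
\item Take uniform spacing $k=\tau_{\min}$. Every inter-check interval is at most $\tau_{\min}\le\tau_i$, so it honors all deadlines.
\item It issues $\lceil S_N/\tau_{\min}\rceil\le N\bar\tau/\tau_{\min}+1$ paid checks.
\item Its excess over the deadline scheduler is therefore at most $N(\bar\tau/\tau_{\min}-1)+1$ checks.
\end{itemize}
This is the step the paper's proof supplies. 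It also notes that compensation and violation terms, not only opportunity cost, then decide the operating point.

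Relatedly, replace ``about $H/\bar\tau$ checks'' over a horizon $H$ with the exact count $N=S_N/\bar\tau$ over the closed span $S_N$. Over a generic horizon the final interval is censored, and the proposition warns that without the terminal check the ratio bound need not hold.
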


\begin{proof}[Proof of Corollary~\ref{cor:price}]
The $C_{\mathrm{sync}}$ claim is Proposition~\ref{prop:sched}
verbatim: over a span $S_N$ of declared deadlines the deadline
scheduler issues $N$ checks, while any uniform scheduler honoring
the same deadlines must use spacing $k \le \tau_{\min}$ and, under
the proposition's closed-span convention, issues at least
$\lceil S_N/\tau_{\min}\rceil$ checks. The ratio is therefore at
least $\bar\tau/\tau_{\min}$, with equality permitted. For
$C_{\mathrm{comp}}$: an invalidation requires the observable to
exit its declared tolerance between consecutive certified epochs.
Tightening the tolerance does not lower this rate; it raises it,
because more exits are detected, while lowering the stale-execution
rate. The prevention-side control on total repair is therefore not
the tolerance but the validity horizons and deadlines, which limit
how long a commitment stays exposed; a quantitative bound on
accumulated repair would require workload and repair-frequency
assumptions not made here, so the corollary claims finiteness and
attribution for $C_{\mathrm{comp}}$, not a bound; prevention
converts undetected staleness into detected, repairable
invalidations rather than reducing correction to zero.
$L_{\mathrm{viol}}$ is bounded by
Theorem~\ref{thm:composition}(iii). For the regimes: if
$\bar\tau/\tau_{\min} = 1 + o(1)$, a uniform scheduler at
$k = \tau_{\min}$ issues $N(1 + o(1))$ checks and matches the
deadline scheduler's price, so the remaining compensation and
violation terms, together with separately reported opportunity cost,
decide the operating point; if
deadlines are heterogeneous, the fixed-sequence counting lower bound
$\bar\tau/\tau_{\min}$ exceeds one. This comparison does not extend
to policies whose observations generate different deadline
sequences.
\end{proof}

\paragraph{Scope.} The proofs establish exactly what the theorem
statements claim and nothing further. They do not establish
refresh-policy dominance (Corollary~\ref{cor:price} predicts
regimes in which fixed spacing wins), do not re-derive SagaLLM's
internal guarantees (A3 and A5 import them as cited machinery), and
are conditional where stated: on the analytic model where
Lemma~\ref{thm:deadline} supplies an event-aligned oracle, on
conditional stale-validity or A1's exact marginal transport where
deployed guards supply only scores, on A3's compositional rollback
contract, on A7 for an endpoint guarantee, and on the population
restoration ratio; empirical upper limits enter only on their joint
coverage event. Drift,
passive observation channels, and misspecified dynamics require
separate analysis, as declared in \extpointer{app:formal}.

\paragraph{What the theory claims.} The theorems bound violations
under named assumptions; they do not claim refresh dominance
(Corollary~\ref{cor:price} predicts where fixed intervals win), do
not re-prove SagaLLM's guarantees, and are conditional exactly
where stated, on the analytic model or a deployed system supplying
an event-aligned oracle, on A1's exact marginal transport, on
endpoint liveness, and on population debt bounds.
The interface is the contribution: any world model, JEPA-class
included, plugs in by supplying calibrated claims and a valid
deadline oracle, and its obligations are stated rather than
implied. Part (iii) goes one step further: every proof-to-system
gap becomes an additive, measurable consistency debt. This is an
audit decomposition; it becomes a predictive bound for the observed
implementation only when those debts are independently bounded.

\section{The Flood-SAR System}\label{sec:floodsar}

We
use Flood-SAR, a simulated search-and-rescue planning and reasoning
workbench we implemented over real geography, as the running
example. Flood-SAR simulates drones, boats, ambulances, ground
teams, and dynamic waterways over a real geography asset; the
geography and hydrology are real, the missions, channels, and
failures are simulated.
The contract abstraction is not
water-specific, though only Flood-SAR is evaluated: wildfire
perimeters, highway logistics, laboratory automation, and market or
geopolitical monitoring instantiate the same triple of predictive
claims, priced channels, and consequence classes.

\subsection{Architecture}\label{sec:floodsar-arch}

\begin{figure}[htbp]
\centering
\begin{tikzpicture}[node distance=2.6mm]
\node[layerbox, minimum width=78mm] (l1) {Commander: authority for irreversible action classes};
\node[layerbox, minimum width=78mm, below=of l1] (l2) {Mission controller: event-sourced state, dispatcher};
\node[storebox, minimum width=78mm, below=of l2] (l3) {TRACE runtime: schema v1.0, policy \fld{trace_v1.yaml},\\ hash-chained append-only store};
\node[viewbox, minimum width=78mm, text width=73mm, below=of l3] (l4) {Predictive world view (kinematics; transparent route score\\ + heuristic support/OOD + frozen isotonic) and adaptive refresh manager};
\node[layerbox, minimum width=78mm, below=of l4] (l5) {Planner and channels: drone, sensor, human, external feed};
\node[layerbox, minimum width=78mm, below=of l5] (l6) {Environment: flood dynamics over the Antioch delta asset};
\draw[flow] (l1) -- (l2); \draw[flow] (l2) -- (l3);
\draw[vflow] (l3) -- (l4); \draw[flow] (l4) -- (l5); \draw[flow] (l5) -- (l6);
\end{tikzpicture}
\caption{The six-layer Flood-SAR architecture. Every arrow crossing
into or out of the TRACE runtime is a record read or write; that no
service passes reasoning state in any other form is an architectural
invariant enforced by the interfaces and checked by the workbench's
contract tests.}
\label{fig:arch}
\vspace{-.1in}
\end{figure}

Figure~\ref{fig:arch} shows the architecture in six layers: a commander (human
authority) above a mission controller owning the event-sourced mission
state; the TRACE runtime holding schema, policy, and the append-only
store; the predictive world view and adaptive refresh manager; the
planner and verification channels; the environment. Controller model
state is deliberately separated from the event-sourced store: the view
may be wrong, but the record preserves what the system represented and why the gate accepted it, subject to the store-integrity assumptions
of Section~\ref{sec:floodsar-store}. The operational console over
the Antioch delta geography asset that all campaigns of
Section~\ref{sec:eval} use is shown in Figure~\ref{fig:ui}.

\begin{figure}[htbp]
\centering
\includegraphics[width=\columnwidth]{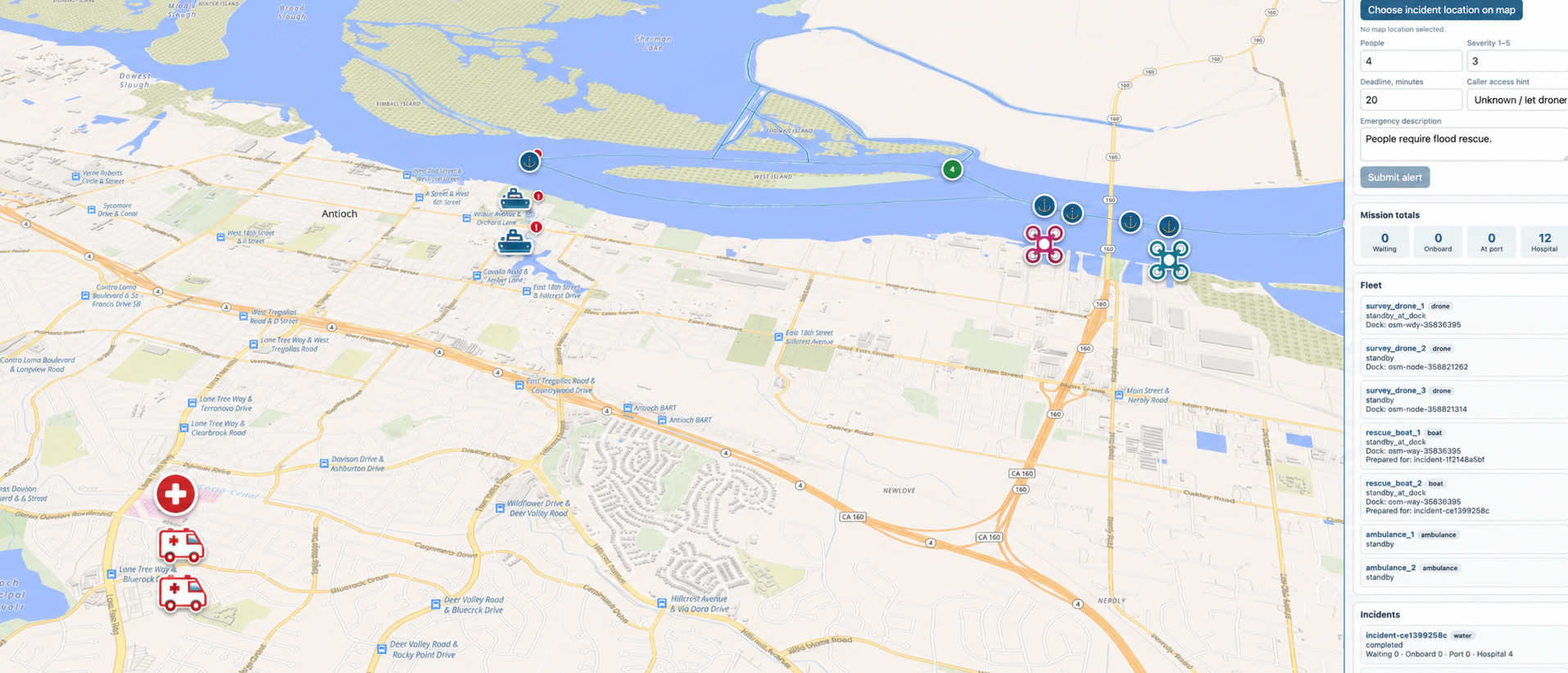}
\caption{The Flood-SAR operational console over the Antioch delta
geography asset. The map renders docks, waterways, hospital, and the
live positions of survey drones, rescue boats, and ambulances; the
right panel supports incident entry (people, severity, deadline,
caller access hint), reports mission totals (waiting, onboard, at
port, hospital; the evaluation's completion endpoint is the safe
transfer dock, and hospital time is a projected quantity, not an
observed clinical outcome), and lists per-asset fleet state with
the authorizing dock identifiers and per-incident outcome accounting. Every quantity shown is a
projection of the event-sourced store of
Section~\ref{sec:floodsar-store}; the console holds no authoritative
mission state.}
\label{fig:ui}
\end{figure}

\subsection{The record store}\label{sec:floodsar-store}
The store is event-sourced and hash-chained: every claim, gate
decision, refresh, and revision is an append-only record whose
integrity a full-chain scan verifies; storage mechanics and layout
detail appear in \extpointer{app:moved-system}; the operator console over this store appears in Figure~\ref{fig:ui}.

\subsection{The condition hierarchy}\label{sec:floodsar-hierarchy}
Conditions form a three-level hierarchy, signal to event to
practical claim, so that what the gate judges is the decision an
application cares about rather than a sensor value; the full
hierarchy and its field-level detail appear in
\extpointer{app:moved-mech}.

\subsection{Implementation status}\label{sec:floodsar-status}

The canonical D0.5 release realizes the contract end to end; its
test counts and release provenance are catalogued
in \extpointer{app:artifact}, and storage detail in
\extpointer{app:moved-system}, and the
running gate demonstration appears in
Section~\ref{sec:floodsar-map}. Everything measured in
Section~\ref{sec:eval} runs on this release with the frozen policy
\fld{trace_v1.yaml}.

\section{Maintaining Commitments: Adaptive Refresh and the Gate}\label{sec:refresh}

This section answers the remaining two questions over the contract of
Section~\ref{sec:contract}. Signal economy occupies
Sections~\ref{sec:refresh-kalman}
through~\ref{sec:refresh-theorem}: when to acquire evidence, through
which channel, and with what guarantees. Consequence admission
occupies Sections~\ref{sec:gate} through~\ref{sec:gate-chains}: what
an admitted claim may authorize, and how commitments chain.

\subsection{From Kalman state to hybrid beliefs}\label{sec:refresh-kalman}

For continuous substate $x_t$ (river level, travel time, vehicle
positions), the view advances
$\hat{x}_{t+1|t} = f(\hat{x}_{t|t}, a_t)$ and propagates uncertainty
$P_{t+1|t} = F_t P_{t|t} F_t^{\top} + Q_t$; an observation $y_t$ yields
the innovation $\nu_t = y_t - H_t \hat{x}_{t|t-1}$ and the standard
correction~\cite{kalman1960new}. Two quantities feed the refresh
logic: the predicted variance of the commitment-relevant functional,
$\sigma^2_c(t+h) = g_c^{\top} P_{t+h|t}\, g_c$, which grows with the
horizon and drives the predicted trigger; and the innovation from
whatever partial observations arrive for free, which drives the
observed trigger. This is the special case, not the theory; it is also
precisely the case the 2004 stream system implemented, with $\delta_i$
as the tolerance and the mirror filter as the innovation monitor.

\label{sec:refresh-abstract}%
Flood-SAR state, however, is hybrid: edge passability and dock reachability are
categorical and carry no covariance. The refresh principle of
Definition~\ref{def:refresh} is therefore stated over beliefs, not
Gaussians. The discrete case is instantiated by the workbench's
current risk package: a transparent surrogate produces a raw route
probability and heuristic support/OOD quantities, and a frozen isotonic
map calibrates the route probability. The seed-grouped OOF check gives
ECE $0.00477$ on the declared development mixture
(Section~\ref{sec:eval-calibration}); this is a marginal, horizon-1
route-traversability statement, not evidence that the heuristic
support/OOD quantities or a future learned predictor are calibrated.
A worked example fixes intuition: a passability claim ``edge E12
is traversable by rescue boat'' carries a calibrated probability, a
support score, and an OOD score; a drone overflight observing debris
where the view predicted clear water is an innovation on the
declared observable, and the claim is revised append-only. The 2004
system and the linear-Gaussian case are instances of the abstract
principle, not the principle itself.

\subsection{Trigger taxonomy}\label{sec:refresh-taxonomy}

Four trigger families organize every refresh the architecture can
fire: predicted (validity or predicted uncertainty crosses the action
bound before any observation arrives), observed (innovation from a
partial observation exceeds the declared tolerance), structural
(support or OOD leaves admitted competence), and normative (policy or
authority demands fresh evidence for the action class). Each family
is a declared observable in the sense of
Section~\ref{sec:contract-observables}; the table mapping each to
its policy fields in \fld{trace_v1.yaml} and to the fixed-priority
reason codes of the evaluation discipline is in
\extpointer{app:triggers}.

\subsection{The refresh decision rule}\label{sec:refresh-rule}

A fired trigger poses two decisions: whether to acquire evidence, and
through which channel. Let $L(a, s)$ be the loss of executing action
$a$ in true state $s$ under belief $b$, with channel costs $c_j$
scalarized into the same loss units. For channel $j$ with observation
model $p_j$, and $y \sim p_j(\cdot \mid b)$ drawn from the
posterior-predictive distribution, the value of information is the
preposterior quantity~\cite{howard1966voi}
\[
\mathrm{VoI}_j = \mathbb{E}_{s\sim b}[L(a^{*}_b, s)]
- \mathbb{E}_{y \sim p_j(\cdot\mid b)}\!\left[\mathbb{E}_{s \sim b|y}[L(a^{*}_{b|y}, s)]\right],
\]
which is strictly positive only when, with positive predictive
probability, observing $y$ changes the posterior-optimal action and
strictly reduces expected loss. The rule is one deterministic
procedure. Define the adequate set preposteriorly,
$J^{\mathrm{adm}}_c = \{\, j :
\Pr_{y}[\,\mathrm{Gate}(b \mid y) = \mathrm{CLEAR}\,] \ge
\gamma_c \,\}$ for the pending class's declared level $\gamma_c$.
Acquire evidence only when $J^{\mathrm{adm}}_c \neq \emptyset$ and
$\max_{j \in J^{\mathrm{adm}}_c} (\mathrm{VoI}_j - c_j) > 0$,
routing to the maximizer; otherwise the outcome is the safe
alternative, HOLD, or ESCALATE, never action on stale evidence. Two
clarifications. The comparison is between \emph{reducible} loss and
sensing cost, since irreducible loss cannot justify a purchase that
does not reduce it. And adequacy is checked before net value: a
channel with positive net value but inadequate posteriors cannot
authorize the commitment, which is exactly the channel a
cost-normalized ratio would prefer.

The upshot: refresh concentrates evidence where an observation could
move the belief across the boundary at which the pending commitment
switches, while a fixed interval spends uniformly; the worked
boundary example and three practical refinements (consequence
scaling, reversibility, routing auditability) appear in
\extpointer{app:moved-mech}.

\subsection{A model-conditional deadline lemma and a scheduler
separation}\label{sec:refresh-theorem}

Three quantities must not be conflated: the \emph{boundary
distance} $d = |x - \theta|$ (a state-space quantity), the
\emph{threshold-exceedance probability} $p_t$ with its
action-specific \emph{flip risk} $q_t(a)$ (posterior quantities),
and the \emph{decision margin} $m(b)$ of Definition~\ref{def:margin}
(an expected-loss gap). The lemma budgets the posterior expected loss
from authorizing the wrong action, not the margin: at the action
boundary the margin vanishes while this decision-error loss peaks at
$\Delta(L-\Delta)/L$. The deadline depends on $d$, and no statement in
this paper identifies the three quantities.

\paragraph{Setup.} A scalar substate follows
$dx_t = \sigma\, dW_t$; between verification epochs the observation
filtration gains nothing, and an exact observation at cost $c > 0$
is the only channel. Actions: $a_f$ with loss
$L \cdot \mathbf{1}[x_t > \theta]$; $a_s$ with certain loss
$\Delta$, $0 < \Delta < L$. Define the realized decision-error losses
$\ell_t^{\mathrm{err}}(a_f)=(L-\Delta)\mathbf{1}[x_t>\theta]$ and
$\ell_t^{\mathrm{err}}(a_s)=\Delta\mathbf{1}[x_t\leq\theta]$. Their
posterior expectations are
$\Lambda_t(a_f)=p_t(L-\Delta)$ and
$\Lambda_t(a_s)=(1-p_t)\Delta$. The trigger at level $\varepsilon_c$
fires when $\Lambda_t(a_t)\geq\varepsilon_c$, equivalently at an
action-specific
flip-probability threshold $\eta_f = \varepsilon_c/(L - \Delta)$ or
$\eta_s = \varepsilon_c/\Delta$, consequence scaling in the trigger
itself. Assume
$0 < \varepsilon_c < \tfrac{1}{2}\min\{\Delta, L - \Delta\}$
(both thresholds inside $(0, \tfrac12)$, held action belief-optimal
between epochs) and polled distance $d_i > 0$.

\begin{lemma}[Model-conditional decision-loss deadline]
\label{thm:deadline}
Under exact observation, zero drift, no intermediate evidence, and
before any accumulation point of the epoch times: after an observation
at boundary distance $d_i > 0$ while authorizing point action $a_i$, the
posterior expected decision-error loss satisfies
$\Lambda_t(a_i)\leq\varepsilon_c$ for
$t \in [t_i,\, t_i + \tau_i]$, where
$\tau_i = d_i^{\,2} \big/ \big(\sigma^2\, z_{\eta(a_i)}^2\big)$, with
$z_\eta = \Phi^{-1}(1 - \eta) > 0$ and
$\eta(a_i) \in \{\eta_f, \eta_s\}$. The deadline $\tau_i$ is an
\emph{admissibility deadline} on a continuing authorization, not
unconditionally a verification time: at $t_i + \tau_i$ the gate must
either acquire evidence adequate under
Section~\ref{sec:refresh-rule}'s rule or cease authorizing $a_i$, via
the safe alternative, HOLD, or ESCALATE.
\end{lemma}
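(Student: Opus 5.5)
The plan is to reduce the loss statement to a Gaussian tail bound on the exceedance probability, then invert monotonically in elapsed time, as in the proof of Theorem~\ref{thm:no-sync}(i).

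\emph{Step 1: the posterior is Gaussian.} Fix the epoch $t_i$ with exact observation $x_{t_i}$ at distance $d_i=|x_{t_i}-\theta|>0$. Since no intermediate evidence arrives, the filtration at $t\in[t_i,t_i+\tau_i]$ equals $\mathcal H_{t_i}$. By the strong Markov property at the stopping time $t_i$, the posterior of $x_t$ is $\mathcal N(x_{t_i},\sigma^2(t-t_i))$. The hypothesis that we are before any accumulation point of epoch times guarantees that $t_i$ is a genuine isolated epoch, so this conditioning is legitimate.

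\emph{Step 2: split by the authorized action.} The action-specific flip risk is $q_t(a_i)$, the posterior probability that $x_t$ lies on the wrong side of $\theta$ for $a_i$:
\begin{itemize}
\item if $a_i=a_f$, then $q_t(a_f)=p_t$, and the belief-optimal choice of $a_f$ means $x_{t_i}<\theta$;
\item if $a_i=a_s$, then $q_t(a_s)=1-p_t$, and $x_{t_i}>\theta$.
\end{itemize}
In both cases the Gaussian posterior gives
\[
q_t(a_i)=\Phi\!\left(-d_i/(\sigma\sqrt{t-t_i})\right),
\]
with the value $0$ at $t=t_i$. By the definitions in the setup,
\[
\Lambda_t(a_f)=(L-\Delta)\,q_t(a_f),\qquad \Lambda_t(a_s)=\Delta\,q_t(a_s).
\]
So $\Lambda_t(a_i)\le\varepsilon_c$ is equivalent to $q_t(a_i)\le\eta(a_i)$, where $\eta_f=\varepsilon_c/(L-\Delta)$ and $\eta_s=\varepsilon_c/\Delta$.

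\emph{Step 3: invert the tail bound.} The assumption $\varepsilon_c<\tfrac12\min\{\Delta,L-\Delta\}$ gives $\eta(a_i)\in(0,\tfrac12)$, hence $z_{\eta(a_i)}>0$. Because $\Phi$ is increasing, $q_t(a_i)\le\eta(a_i)$ holds if and only if
\[
d_i/(\sigma\sqrt{t-t_i})\ \ge\ z_{\eta(a_i)},
\qquad\text{that is,}\qquad
t-t_i\ \le\ d_i^2/(\sigma^2 z_{\eta(a_i)}^2)=\tau_i .
\]
Since $u\mapsto\Phi(-d_i/(\sigma\sqrt u))$ is nondecreasing in $u$, the bound holds on the whole interval $[t_i,t_i+\tau_i]$, with equality exactly at its right endpoint.

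\emph{Step 4: the admissibility reading.} This is definitional. Past $t_i+\tau_i$ the bound fails, because the map in Step 3 is strictly increasing. Any continued authorization of $a_i$ therefore needs fresh evidence that passes the rule of Section~\ref{sec:refresh-rule}, or else a fallback to the safe alternative, HOLD, or ESCALATE. Verification is therefore not mandated unconditionally.

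\emph{Main obstacle.} I expect the main obstacle to be Step 2. The orientation of the held action relative to $\theta$ must be justified from belief-optimality. The bound $\varepsilon_c<\tfrac12\min\{\Delta,L-\Delta\}$ is what keeps $a_i$ optimal on the correct side until the deadline, so that the held action cannot be the one whose flip risk exceeds $\tfrac12$. I would verify this by checking that $p_t$ stays below the switching point $\Delta/L$ throughout $[t_i,t_i+\tau_i]$ when $a_i=a_f$, and symmetrically above it when $a_i=a_s$.
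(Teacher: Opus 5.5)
Your proposal is correct and follows the paper's proof sketch: the posterior mean is frozen under zero drift, $q_t(a_i)=\Phi\bigl(-d_i/(\sigma\sqrt{t-t_i})\bigr)$ increases in $t$, $\Lambda_t(a_i)$ reaches $\varepsilon_c$ exactly at $t_i+\tau_i$, and $\eta(a_i)<\tfrac12$ gives $z_{\eta(a_i)}>0$. The paper carries out the belief-optimality check you deferred by a case split on $\Delta\le L/2$ versus $\Delta>L/2$ (for example $\eta_f<\Delta/(2(L-\Delta))\le\Delta/L$, or $\eta_f<\tfrac12<\Delta/L$), but that check is a side property rather than an input to your argument: the orientation in your Step 2 needs only optimality at $t_i$, which is immediate because exact observation makes $p_{t_i}\in\{0,1\}$.
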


\emph{Proof sketch.} With zero drift the posterior mean is frozen, so
the action-specific flip risk satisfies
$q_t(a_i) = \Phi(-d_i/(\sigma\sqrt{t - t_i}))$, rising
deterministically from $0$ toward $\tfrac{1}{2}$; for $a_f$ this is
$p_t$, and for $a_s$ it is $1 - p_t$, since $p_t$ itself falls from
one toward $\tfrac{1}{2}$. Hence $\Lambda_t(a_i)$ rises deterministically
and reaches $\varepsilon_c$ exactly at $t_i + \tau_i$, which is finite because
$\eta(a_i) < \tfrac{1}{2}$ gives $z_{\eta(a_i)} > 0$. Belief-optimality
holds on the certified interval in both cases: holding $a_f$ requires
$p_t \le \Delta/L$, and $p_t \le \eta_f = \varepsilon_c/(L-\Delta)$,
which the assumption bounds by $\Delta/L$ whether $\Delta \le L/2$
(then $\eta_f < \Delta/(2(L-\Delta)) \le \Delta/L$) or $\Delta > L/2$
(then $\eta_f < \tfrac12 < \Delta/L$); holding $a_s$ requires
$p_t \ge \Delta/L$, and $1 - p_t \le \eta_s = \varepsilon_c/\Delta$
gives $p_t \ge 1 - \eta_s > \Delta/L$ by the symmetric case split.
The decision-error-loss identities are immediate from the loss table.
$\square$

The corresponding realized-path cost identity, including its terminal
residual and the distinction between the analytical and guarded
schedulers, appears in \extpointer{app:formal}; no expectation-level
cost formula is claimed.

\paragraph{Sustained-validity oracle.}
Definition~\ref{def:stale} requires an extended commitment to remain
valid throughout its declared execution window.  The following result
prices that interval event directly; write
$\mathcal E(t_i,t)=\{\exists s\in[t_i,t]:
d_\omega(\omega(S_s),\hat\omega)>\theta\}$.
Measuring from the last exact
observation is conservative when the physical action begins later,
because its execution window is then a subinterval of the one bounded
below.

\begin{lemma}[Sustained-validity risk deadline]
\label{lem:sustained}
In the Brownian setting of Lemma~\ref{thm:deadline}, let $t_i$ be a
finite observation stopping time at which an exact observation places
the state at one-sided boundary distance $d_i>0$. For a declared
interval-risk budget $\eta_i^{\mathrm{sus}}\in(0,1)$, an extended
commitment satisfies
$\Pr[\mathcal{E}(t_i,t) \mid \mathcal{F}^{\mathrm{obs}}_{t_i}] \le
\eta_i^{\mathrm{sus}}$ for every
$t \in [t_i,\, t_i + \tau^{\mathrm{sus}}_i]$, where
\[
  \tau^{\mathrm{sus}}_i
  = \frac{d_i^{\,2}}
         {\sigma^2\, z^2_{\eta_i^{\mathrm{sus}}/2}},
  \qquad z_q=\Phi^{-1}(1-q)>0\quad(q<\tfrac12).
\]
For the full interval beginning at $t_i$, the bound holds with equality
at the endpoint and is tight. The gate may authorize an extended
commitment only when its whole execution window is contained in the
certified interval, as required by Definition~\ref{def:oracle}.
\end{lemma}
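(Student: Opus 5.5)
The plan is to reduce the interval event to a first-passage event for Brownian motion and evaluate it with the reflection principle. First, apply the strong Markov property at the finite observation stopping time $t_i$, as in the proof of Theorem~\ref{thm:no-sync}(i). Conditional on $\mathcal{F}^{\mathrm{obs}}_{t_i}$, the process $B_u = x_{t_i+u}-x_{t_i}$ is a standard Brownian motion scaled by $\sigma$, independent of the conditioning. Under the one-sided, boundary-anchored declaration preceding Theorem~\ref{thm:no-sync}, the interval event $\mathcal{E}(t_i,t)$ is then exactly the event that $B$ reaches level $d_i$ toward the boundary at some $u\in[0,t-t_i]$. This means the running maximum satisfies $\max_{u\le t-t_i}\sigma W_u \ge d_i$; strict versus non-strict exceedance differs only by a null set.

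Second, apply the reflection principle, $\Pr(\max_{u\le s}W_u\ge a)=2\Pr(W_s\ge a)=2\Phi(-a/\sqrt{s})$. This gives
\[
\Pr[\mathcal{E}(t_i,t)\mid\mathcal{F}^{\mathrm{obs}}_{t_i}] = 2\,\Phi\!\left(\frac{-d_i}{\sigma\sqrt{t-t_i}}\right),
\]
which is exactly twice the terminal floor of Theorem~\ref{thm:no-sync}. It is continuous and nondecreasing in $t$, with value $0$ at $t=t_i$. Solving $2\Phi(-d_i/(\sigma\sqrt{s}))=\eta_i^{\mathrm{sus}}$ yields $d_i/(\sigma\sqrt{s})=z_{\eta_i^{\mathrm{sus}}/2}$, so $s=\tau_i^{\mathrm{sus}}$. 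The quantile is positive because $\eta_i^{\mathrm{sus}}<1$ gives $\eta_i^{\mathrm{sus}}/2<\tfrac12$. Monotonicity then gives the bound on all of $[t_i,t_i+\tau_i^{\mathrm{sus}}]$, with equality at the endpoint. Tightness follows because any $t$ beyond the endpoint strictly exceeds the budget, since the function is strictly increasing there.

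Third, handle subwindows. If the execution window $[t_b,t_e]$ lies inside the certified interval, then $\mathcal{E}$ over $[t_b,t_e]$ is contained in $\mathcal{E}(t_i,t_e)$. Its conditional probability is therefore also at most $\eta_i^{\mathrm{sus}}$, which justifies the remark about conservatism and the containment requirement in Definition~\ref{def:oracle}.

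The main obstacle is the first step: making the identification of $\mathcal{E}$ with one-sided first passage rigorous. This requires the exact observation at $t_i$ to set $d_\omega$ so that exceedance of $\theta$ coincides with crossing distance $d_i$ toward the boundary. It also requires the conditioning sigma-field to carry no information about future increments; I would cite the same independence of exogenous demand and policy randomization from $W$ used in Theorem~\ref{thm:no-sync}. The reflection identity itself is standard and will be invoked rather than reproved.
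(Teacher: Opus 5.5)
Your proposal is correct and follows the paper's proof essentially step for step: the strong Markov property at the finite observation stopping time, the identification of $\mathcal{E}(t_i,t)$ with one-sided first passage under the boundary-anchored tolerance, the reflection principle giving $2\Phi(-d_i/(\sigma\sqrt{t-t_i}))$, and monotonicity to extend the endpoint bound to the whole certified interval. Your two additions are sound: the remark that strict and non-strict exceedance differ only by a null set, and the subwindow-containment argument, which the paper states only informally before the lemma rather than inside its proof.
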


\begin{proof}[Proof of Lemma~\ref{lem:sustained}]
Take $x_{t_i} = \theta - d_i$, so exceedance is upward; the downward
case is identical after replacing $W$ by $-W$. For $t > t_i$,
$x_t - x_{t_i} = \sigma(W_t - W_{t_i})$, and under the
boundary-anchored tolerance the interval exceedance event is the
first-passage event
$\{\sup_{s \in [t_i,t]} \sigma(W_s - W_{t_i}) \ge d_i\}$.
Conditionally on $\mathcal{F}^{\mathrm{obs}}_{t_i}$ the normalized
increment process is a standard Brownian motion started at zero, by the
strong Markov property at the finite observation stopping time. The
reflection principle gives
$\Pr[\sup_{s \le u} W_s \ge b] = 2\,\Pr[W_u \ge b]
 = 2\,\Phi(-b/\sqrt{u})$ for $b > 0$, so with $u = t - t_i$ and
$b = d_i/\sigma$,
\[
  \Pr[\mathcal{E}(t_i,t) \mid \mathcal{F}^{\mathrm{obs}}_{t_i}]
  = 2\,\Phi\!\left(\frac{-d_i}{\sigma\sqrt{t - t_i}}\right),
\]
continuous and strictly increasing in $t$, tending to $0$ as
$t \downarrow t_i$ and to $1$ as $t \to \infty$. Requiring it at most
$\eta_i^{\mathrm{sus}}$ is equivalent to
$d_i/(\sigma\sqrt{t - t_i}) \ge
z_{\eta_i^{\mathrm{sus}}/2}$, that is
$t - t_i \le d_i^{\,2}/
(\sigma^2 z^2_{\eta_i^{\mathrm{sus}}/2})$. Monotonicity extends
the bound from the endpoint to the whole interval.
\end{proof}

The parameter $\eta_i^{\mathrm{sus}}$ is a direct probability budget
for any unsafe excursion during the execution window. It can be linked
to an application loss budget only after the application declares the loss
assigned to such an excursion. The point-action thresholds $\eta_f$ and
$\eta_s$ of Lemma~\ref{thm:deadline} do not automatically supply that
duration-wide loss semantics. When Lemma~\ref{lem:sustained} is used as
the oracle of Definition~\ref{def:oracle}, the gate records
$\eta_i^{h_i}=\eta_i^{\mathrm{sus}}$ and the commitment's extended
hazard type together; it does not import either point-action threshold.

\paragraph{The price of sustained validity.}
For the same boundary distance and the same numerical probability
budget $\eta$, the first sustained-validity deadline is the point
deadline with $z_\eta$ replaced by $z_{\eta/2}$. It therefore contracts by
$(z_\eta/z_{\eta/2})^2$ and the nominal check rate grows by its reciprocal:
$0.816$ and $1.226$ at $\eta = 0.01$, $0.704$ and $1.420$ at
$\eta = 0.05$, $0.607$ and $1.647$ at $\eta = 0.10$, and $0.431$ and
$2.319$ at $\eta = 0.20$. The reciprocal is a nominal continuous-time
check-rate multiplier for one fixed distance and span, not an exact
realized check-count or spend ratio: ceiling effects, channel prices,
guards, and the observations themselves alter subsequent deadlines.
For a latency window of length $\delta$, the Brownian interval floor is
$2\Phi(-d/(\sigma\sqrt{\delta}))$, exactly twice the corresponding
point floor. The fixed-horizon observation-cost argument of
Theorem~\ref{thm:no-sync}(ii) has an extended-event analogue with
$z_p$ replaced by $z_{p/2}$; Proposition~\ref{prop:scheduler} applies
only after a finite sequence of sustained deadlines has been fixed.

\paragraph{Accumulation under sustained validity.}
For a constant interval-risk budget $\eta^{\mathrm{sus}}$, because
$\sigma\sqrt{\tau^{\mathrm{sus}}_i} =
d_i/z_{\eta^{\mathrm{sus}}/2}$, the
polled-distance recursion of \extpointer{app:formal} becomes
$d_{i+1} = d_i\,|1 - \xi_i/z_{\eta^{\mathrm{sus}}/2}|$ for iid
standard-normal $\xi_i$. The strong law gives negative logarithmic
growth and almost-sure accumulation of the unguarded epoch times when
$z_{\eta^{\mathrm{sus}}/2} > z^{\ast} \approx 0.6427$, that is for
$\eta^{\mathrm{sus}} < 2(1 - \Phi(z^{\ast})) \approx 0.5204$, against
$\eta < 0.2602$ under the point hazard. If the numerical RQ6 limits
$0.12$ and $0.20$ are separately declared as interval-risk budgets,
both lie in this analytic regime. The HOLD and ESCALATE guards are then
required for those extended-event analogues: under the idealized
unguarded recursion the epoch times accumulate almost surely, whereas
the deployed contract ceases authorization when its tick, rate, or cost
guard binds. No optimality claim is made for that fallback.

\paragraph{When the point hazard is the right one.}
Lemma~\ref{thm:deadline} remains correct for point commitments and
nothing above supersedes it. The distinction is declared per commitment
class and recorded in the claim, so it is auditable on the same terms as
the rest of the contract: a class that declares itself point while
authorizing an action with extended exposure is a contract violation the
record exhibits, not a modeling choice hidden in a scheduler.

\begin{proposition}[Deterministic scheduler separation]\label{prop:scheduler}\label{prop:sched}
Fix a finite sequence of declared admissible deadlines
$\tau_1, \dots, \tau_N$, write $S_N = \sum_i \tau_i$ and
$\tau_{\min} = \min_i \tau_i$. For this comparison, the initial
check at time zero is free, a paid check is required at the closed
endpoint $S_N$ to continue authorization there, and a scheduler
honors the deadlines if every inter-check interval is no longer than
the deadline in force. The deadline scheduler makes $N$ paid checks,
at the cumulative deadline endpoints. A uniform-base scheduler with
spacing $k\leq\tau_{\min}$, allowing its terminal interval to be
shorter, requires at least $\lceil S_N/k\rceil$ paid checks. Hence
\[
 \frac{N_{\mathrm{uniform}}}{N}
 \geq \frac{\lceil S_N/\tau_{\min}\rceil}{N}
 \geq \frac{\bar\tau}{\tau_{\min}}.
\]
Equality is possible. If no check is required at the terminal
endpoint, the exact floor count replaces the ceiling and the final
ratio need not follow. This is a counting statement under the stated
closed-span convention, not a comparison between stochastic policies
whose observations alter their later deadlines.
\end{proposition}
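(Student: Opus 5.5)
The argument is a pure counting argument on the half-open timeline $[0,S_N]$. It has three steps: count the deadline scheduler's checks, lower-bound the uniform scheduler's checks, then chain the two inequalities. I treat the declared deadlines as a fixed, exogenous sequence throughout, so nothing probabilistic enters.

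\emph{Deadline scheduler.} This scheduler places its checks at the cumulative endpoints $T_j=\sum_{i\le j}\tau_i$ for $j=1,\dots,N$. The $j$-th inter-check interval then has length exactly $\tau_j$, which is the deadline in force, so the schedule honors every deadline. The last check lands at $T_N=S_N$, so it also meets the closed-endpoint requirement. The initial check at time zero is free by convention, which leaves exactly $N$ paid checks.

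\emph{Uniform-base scheduler.} This scheduler places paid checks at $k,2k,\dots$ and adds one at the endpoint $S_N$. Let $m$ be the number of paid checks. They partition $[0,S_N]$ into $m$ intervals, each of length at most $k$ (the terminal one may be shorter). Hence $S_N\le mk$, and since $m$ is an integer, $m\ge\lceil S_N/k\rceil$.

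The honoring requirement forces $k\le\tau_{\min}$. A uniform spacing has to respect whichever deadline is in force on each interval, and the smallest one, $\tau_{\min}$, is in force somewhere. The statement already takes this as its hypothesis, so I only cite it.

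\emph{Monotonicity.} For $k\le\tau_{\min}$ we have $S_N/k\ge S_N/\tau_{\min}$. The ceiling is nondecreasing, so $\lceil S_N/k\rceil\ge\lceil S_N/\tau_{\min}\rceil$. Dividing by $N$ gives the first displayed inequality.

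\emph{Second inequality.} Use $\lceil x\rceil\ge x$ together with $\bar\tau=S_N/N$:
\[
\frac{\lceil S_N/\tau_{\min}\rceil}{N}\;\ge\;\frac{S_N}{N\tau_{\min}}\;=\;\frac{\bar\tau}{\tau_{\min}}.
\]

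\emph{Equality and the floor caveat.} Equality is attained when all $\tau_i$ are equal and $k=\tau_{\min}$: then $S_N/\tau_{\min}=N$, the ceiling is exact, and both ratios equal one. For the no-terminal-check variant, the uniform scheduler only needs $\lfloor S_N/k\rfloor$ checks. Since $\lfloor x\rfloor$ can fall below $x$, the bound by $\bar\tau/\tau_{\min}$ can fail. A small example shows this: $N=2$, $\tau=(1,1.5)$ gives $\lfloor 2.5\rfloor/2=1<1.25$.

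\emph{Main obstacle.} The arithmetic is routine; the delicate step is the conventions. I must make sure the free initial check is excluded from both counts, and that $S_N\le mk$ really follows from honoring, with the terminal interval allowed to be short and the endpoint check counted. I would nail these down first by writing the check times explicitly as $\{k,2k,\dots,(m-1)k,S_N\}$ and verifying each gap is at most $k$.
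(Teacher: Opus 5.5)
Your proof is correct and follows the paper's own counting argument step for step: $N$ paid checks for the deadline scheduler, the partition bound $S_N\le mk$ giving $\lceil S_N/k\rceil$, monotonicity under the hypothesis $k\le\tau_{\min}$, and $\lceil x\rceil\ge x$ with $\bar\tau=S_N/N$. Your equal-deadline equality case and the $\tau=(1,1.5)$ floor example make concrete two remarks the paper only asserts; note that the example, like the statement's literal floor-for-ceiling substitution, keeps the denominator at $N$, whereas a deadline scheduler that also skipped its terminal check would pay $N-1=1$ and the ratio would be $2\ge 1.25$.
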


\emph{Zeno and scope.} Near the boundary, epochs can accumulate, so
the workbench uses tick-resolution epochs and a hard guard ending
authorization in HOLD or ESCALATE when the deadline falls below one
tick or a declared rate or cost limit binds. The lemma is
model-conditional to zero-drift Brownian dynamics with exact
observations; the full Zeno calculation and scope discussion appear
in \extpointer{app:formal}.

\subsection{Gate conditions}\label{sec:gate}\label{sec:gate-conditions}

The gate evaluates a claim record against the declared policy and
issues a consumer decision. The deployed policy states the conditions
in the frozen vocabulary: support at least $0.60$; OOD at most
$0.35$; calibrated uncertainty at most $0.30$; prediction horizon at
most 8 steps; observation age at most 1800 seconds behind an
irreversible action; no undefeated defeaters; authority present for
the classes that require it (\fld{dispatch_rescue_boat},
\fld{deploy_ground_team}), while a reversible probe may be cleared
as qualified without commander authority. Decisions are evaluated in
the fixed precedence BLOCK, ESCALATE, QUALIFY, HOLD, CLEAR, so the
most restrictive applicable decision wins and the ordering is part of
the declared policy rather than an implementation accident. Freshness
is the condition this paper adds: the gate checks that the claim's
validity interval covers the commitment horizon, which is exactly
where a fixed-interval system silently assumes what \system{}
verifies.

A revalidation guard extends the conditions with model provenance
(Section~\ref{sec:contract-claims}): clearing a high-consequence
commitment additionally requires \fld{model_version_current}
$\wedge$ \fld{calibration_adequate_for_class}. If either conjunct
fails the claim cannot authorize a new high-consequence commitment,
may still support QUALIFY on a reversible probe, and the failure is
recorded with the superseding version identified.

The gate's object, finally, is the decision and not the signal. What it
clears or holds is a practical claim, a choice among admissible
alternatives with stated reversibility and opportunity cost, licensed
by the event claims beneath it. Holding the fast route while a safe
alternative remains admissible is a different outcome, and a different
record, from holding when no alternative survives, which escalates to
the commander rather than silently stalling the mission.

\subsection{The ActiveTRACE loop}\label{sec:gate-loop}

The ActiveTRACE loop closes prediction, gating, action, and refresh
into one auditable circuit of eight steps (observe, estimate,
predict, gate, act, refresh, revise, complete), each step writing or
reading fields of the frozen schema and no step passing reasoning
state any other way.

\subsection{Schedules are chains of commitments}\label{sec:gate-chains}

A rescue schedule is a chain, not a single commitment: dock selection
depends on passability claims, rendezvous timing on travel-time claims,
hospital arrival on both. \system{} treats each link as a record and
the chain as a transactional object in the sense of
SagaLLM~\cite{chang2025sagallm,garcia1987sagas}: a sequence of
locally atomic steps each paired with a compensating transaction,
$S=\{T_1,\ldots,T_n,C_n,\ldots,C_1\}$, invoked in reverse over
the committed prefix. SagaLLM motivates this module interface, but
the exact compositional rollback, frame, and isolation obligations
used by Theorem~\ref{thm:composition} remain explicit assumptions
A3--A5 rather than consequences silently imported by citation
\cite{chang2025sagallm}. Dependencies
are explicit, $D=\{(o_i,o_j,c_{ij})\}$: when a refresh revises an
upstream claim, the dependency traversal computes the
dependency-scoped set of affected downstream commitments, and compensation (free the asset,
re-plan the dock, re-time the rendezvous) proceeds under the same
gate discipline as the original dispatch, touching nothing outside
the affected set.

\subsection{Mechanisms mapped to the theory}\label{sec:floodsar-map}

The canonical release (D0.5, predictive scheduling) realizes the
paper's mechanisms directly. Validity horizons on travel-time and
passability claims realize refresh conditions; scheduling fields
(\texttt{scheduled}, \texttt{dispatched}, \fld{expected_arrival},
\fld{actual_arrival}, \fld{completed_at}) make the innovation
on arrival predictions directly observable. Drone-first verification
realizes VoI routing. Dock selection minimizing hospital arrival, early
ambulance dispatch, parallel readiness, and rendezvous states realize
commitment chains. A running demonstration exercises the gate end
to end (support-gate failure, \texttt{defer} with named missing
observation, held dispatch, drone revision, alternative branch
cleared); its trace is reproducible from the released
implementation.

\section{Evaluation}\label{sec:eval}

Table~\ref{tab:theory-map} maps each formal element of
Section~\ref{sec:theory} to the measurement that checks it. The
calibration precondition tests semantic elevation; RQ1 prices
signal economy against fixed and clock baselines; RQ6 exercises the
composition, measures selected debts, and asks whether localized
repair closes the violations refresh lets through and at what joint
price; and the secondary
studies cover mechanism attribution (RQ2), rescue outcomes (RQ3),
record-layer cost (RQ4), and the revalidation guard under
mid-mission predictor replacement (RQ5).

\begin{table}[htbp]
\caption{Each formal element of Section~\ref{sec:theory} linked
to its implementation evidence. Proof establishes the conditional
results (proofs in Section~\ref{sec:theory}); measurements estimate
assumption slack and empirical sensitivities at pre-registered sites.
Status reflects the completed RQ5 and RQ6 adjudication; bounded and
approximate entries are honest constants, not failures.}
\label{tab:theory-map}
\centering
\begingroup
\let\small\footnotesize
\setlength{\tabcolsep}{2pt}
\begin{tabular}{@{}llll@{}}
\toprule
Element & What must hold & Checked by & Status \\
\midrule
A1 & exact stale-risk transport & O6 & sensitivity input (.131) \\
A2 & ceiling $\delta_d$ & RQ6 (O2) & not uniform; coverage .83--.89 \\
A3, A5 & compositional rollback & module contract; O3 & exercised, assumed \\
A4 & acyclic $D$ & construction & held \\
A6-C & completeness (recall) & closure; O7 & misses in 2/58 repair missions \\
A6-S & soundness (precision) & closure; O7 & P .88--.94 \\
A7 & enqueue; grace; completion & O2, O3 & not certified generally \\
Thm.~\ref{thm:no-sync} & stale floor $p_\delta$ & synthetic analytic test & formula checked \\
Thm.~\ref{thm:no-comp} & post hoc invariance & RQ6 (O1, O8) & 160/160 \\
Thm.~\ref{thm:composition}(i) & event-aligned oracle & analytic lemma; O6 & marginal in deployment \\
Thm.~\ref{thm:composition}(ii) & endpoint restoration & A3--A7 & conditional \\
Thm.~\ref{thm:composition}(iii) & debt decomposition & RQ6 (O3, O5) & pathwise; cells incomplete \\
Cor.~\ref{cor:price} & fixed-sequence count & synthetic check & convention-specific \\
\bottomrule
\end{tabular}
\endgroup
\end{table}

Both campaigns the evaluation plan required have landed. RQ5,
the revalidation-guard stress test, witnesses the invariant behind
Theorem~\ref{thm:composition}(i) under mid-mission predictor
replacement (Section~\ref{sec:eval-rq5}). RQ6, the composition
campaign, checks the recovery-noninterference observable
Theorem~\ref{thm:no-comp} predicts through O1's exact invariance,
exercises the repair machinery of
Theorem~\ref{thm:composition}(ii), and supplies the empirical
sensitivities for part (iii): $f_{\mathrm{restore}}^{U} = 0.165$ on the
restoration-by-mission-end endpoint (O3) and
$\varepsilon_{\mathrm{cal}}^{U} = 0.131$ (O6)
(Section~\ref{sec:eval-rq6}). These values are not a joint
cluster-valid population certificate. RQ1 through RQ4 supply their rows'
evidence. Throughout, Theorem~\ref{thm:composition} is a proved
conditional contract whose assumptions are recorded or partially
measured, not a theorem
the campaigns empirically discharge: detection coverage is 0.83 to
0.89 (A2), the affected set is approximately, not exactly, the true
set (A6), A7 is not generally certified, and the map records each
constant as it is.

One design decision governs how every result below should be read.
The deployed predictor is a transparent kinematic surrogate with a
frozen calibration map, by design: every measured effect is thereby
attributable to the contract, the triggers, the gate, and the record
rather than to predictor quality, and the calibration precondition
is checkable precisely because the predictor is simple. The
contract's prediction for a richer predictor is falsifiable and
stated now: substituting an uncalibrated learned model behind the
same gate should raise hold and escalate rates, not clear rates,
until calibration within declared support is earned, and a gate that
clears more freely under a fancier, uncalibrated predictor is
broken.

\subsection{Methodology and discipline}\label{sec:eval-method}

The evaluation reuses the workbench's S1--S5 scenario suite; R-B,
drift plus diffusion in S2, hosts the primary comparison because
rising water changes passability at a rate the refresh policy must
track. The declared development grid contains 19 configurations: no
refresh; fixed-$k$ at $k\in\{5,15,45,120,300,600\}$ seconds; a
validity clock at $\alpha\in\{0.1,0.25,0.4,0.55,0.75,1.0\}$,
inspired by the sensing-clock method~\cite{wang2026sensingclocks}
without claiming its coverage certificate; and adaptive refresh at
$\varepsilon_c\in\{0.25,0.5,1,2,4,7\}$. Development seeds 1--20
produce 380 missions; the frozen campaign runs all 19 configurations
on validation seeds 101--125 (475 missions) and the three selected
policies on test seeds 1001--1080 (240 missions). All completed,
every event and TRACE chain verifies, and the common-random-number
checks pass. The unit of analysis is the commitment: each dispatch,
dock selection, and rendezvous is scored against its declared hazard
type---at the execution instant for a point commitment and over the
full recorded execution window for an extended commitment.

\emph{Fairness and tuning.} The compared policies differ only in
when they seek evidence; everything else, seeds, predictor,
calibration, planner, claim representation, gate, channel menu,
fallback, and loss definitions, is identical, and a gate HOLD
imposed on a fixed policy before its next scheduled observation is
charged to that policy. Validation freezes four cost budgets and
the selected points, fixed $k=600$, adaptive $\varepsilon_c=4$, and
clock $\alpha=.4$, with the operating-point artifact hashed before
any test outcome is opened; the budget values are recorded in that
frozen operating-point artifact.

\emph{Partitions, statistics, and metrics.} Development fits
models and thresholds, validation fixes grids and budgets, both are
frozen before the single test evaluation, and all policies share
common random numbers; held-out contrasts use a paired max-$t$
bootstrap with simultaneous 95\% bands over the declared family,
and ``pre-registered'' means manifest \fld{exp-freeze-v1} with an
execution commit and annotated tag recorded before test seeds are
drawn. To prevent conservative policies from gaming staleness,
coverage (unique executed over unique proposed irreversible
opportunities) accompanies every staleness number; adjudication is
deterministic with no model in the path, under fixed-priority
reason codes. A synthetic artifact checks the deadline formula, the
censored cost identity, and Proposition~\ref{prop:scheduler}
numerically; Flood-SAR tests only the lemma's qualitative
scheduling implication. Bootstrap mechanics, attrition codes, and
the seed-count power analysis are recorded in the artifact's
methodology package.

\emph{RQ5 and RQ6 protocols, as run.} Both campaigns ran on
self-contained stochastic workbenches with no V-JEPA or external
model weights, under protocols hashed before any seed was drawn and
verified by manifest (RQ5 hash \fld{d472828d748d}, seeds
3101--3180; RQ6 hash \fld{ef1fdb994db5}, seeds 2101--2180, 80
missions per cell, 320 seed-cell rows, disjoint from every
development, validation, and test seed). The RQ6 two-by-two crosses
two refresh policies with two registered recovery disciplines:
localized saga repair over the computed dependency-scoped affected
set versus global recovery, in which the dispatcher continues on the stale chain
until failure surfaces and recovery is a global replan; the
contrast is between two corrective mechanisms. The adaptive arm is
the consequence-aware adaptive policy (consequence-specific maximum
ages 240/420~s, refresh triggers .055/.10, risk limits .12/.20); it
is not the $\varepsilon_c=4$ workbench point of RQ1, and no mapping
between the two is claimed. One provenance caveat is declared
rather than hidden: the protocols froze $\alpha$, seed clustering,
and 10,000 replicates, but not the membership of the max-$t$
family, so the simultaneous bands reported below are conservative
post-run sensitivity families over all emitted contrasts (10 for
RQ5, 33 for RQ6), alongside the frozen individual paired-bootstrap
intervals.
The headline outcome is the
consistency violation of Definition~\ref{def:violation}, judged
mechanism-neutrally: a commitment stale under its declared point or
extended hazard whose declared invariants no registered corrective
mechanism restores by mission end. The protocol enumerates eight
observables: executed-stale invariance across recovery arms (O1),
shock-to-detected-invalidation latency, the A2 ceiling (O2),
compensation success and failure with restoration latency,
estimating $f_{\mathrm{restore}}$ and a sensitivity upper limit (O3), repair load under adaptive
versus fixed refresh (O4), residual consistency violations (O5),
selective calibration on gate-cleared irreversible commitments,
providing a candidate estimate of the A1 transport and a sensitivity
upper limit (O6), affected-set
precision and recall against the event store, the A6 check (O7),
and the outcome and priced-cost guards, including stranded assets
and orphaned dependents (O8). For robustness, we report the
conservative post-run simultaneous sensitivity families described
above; a band containing zero is reported as not separated.

\emph{AI-assisted analysis.} OpenAI ChatGPT and Codex assisted in
developing and inspecting data-analysis scripts and in formatting
tables and figures. They did not generate experimental observations
or select operating points. The author reran the analyses against
the frozen archives, independently re-derived every reported
quantity, and verified all tables and figures against the archived
records.

\subsection{Calibration precondition: are route claims meaningful?}
\label{sec:eval-calibration}

Every proposed dispatch and evacuation route claim, including claims
not cleared by the gate, is labeled by replaying immutable events and
testing whether the route remains open through arrival: of 300,442
development candidates, 169 are right-censored, leaving 300,273
horizon-1 labels. A weighted isotonic map is evaluated in five
folds that hold out complete seeds; the target is route
traversability, not generic plan success. Raw scores are strongly
miscalibrated (ECE $0.32705$); the frozen 16-level map reduces
seed-grouped OOF ECE to $0.00477$, with a rejected fixed-$k=300$
pilot and the full OOF detail in \extpointer{app:calibration};
held-out transfer is primary here.

After operating-point freeze, the unchanged map is applied to every
test claim. Among 62,977 labeled horizon-1 claims (57 are
right-censored), overall ECE is $0.00450$ [.00243, .00700], dispatch
$0.00779$ and evacuation $0.00310$, and both action-class upper
bounds remain below the predeclared $0.05$ threshold
(Table~\ref{tab:calibration-heldout}; development OOF detail in
\extpointer{app:calibration}). Calibration therefore transfers
to the held-out R-B mixture without refitting. The result is marginal
route traversability for dispatch and evacuation at horizon 1; it does
not certify support/OOD scores, longer horizons, other actions, or
total plan success. The ten-bin reliability diagram behind the
sparse-middle-bin caveat, policy-family transportability, and the
wiring diagnostics are reported in \extpointer{app:calibration}.

\begin{table}[htbp]
\centering
\caption{Frozen held-out calibration with seed-clustered 95\% intervals.
No validation or test refit is performed; dispatch and evacuation labels
denote swamping and missed delivery.}
\label{tab:calibration-heldout}
\begin{tabular}{@{}lrrr@{}}
\toprule
\textbf{Class} & \textbf{$n$} & \textbf{ECE [95\% CI]} & \textbf{Brier} \\
\midrule
All routes & 62,977 & .00450 [.00243,.00700] & .00874 \\
Dispatch & 35,028 & .00779 [.00394,.01192] & .00657 \\
Evacuation & 27,949 & .00310 [.00149,.00761] & .01146 \\
\bottomrule
\end{tabular}
\vspace{-.15in}
\end{table}

\subsection{RQ1: Does adaptive refresh dominate fixed intervals?}\label{sec:eval-rq1}

Development sweeps selected the frozen operating points but used
preliminary coverage accounting and are not confirmatory; they showed
the same qualitative warning that staleness must be reported with
coverage. The development point estimates on both declared axes,
with complete tables and paired diagnostics, are plotted in
\extpointer{app:rq1-dev}.

\paragraph{Frozen held-out process.}
Validation selects fixed $k=600$, adaptive $\varepsilon_c=4$, and
clock $\alpha=.4$ without access to test seeds; each runs on the
same 80 test seeds (1001--1080), all 240 missions complete, every
chain verifies, and the common-random-number check passes.
Table~\ref{tab:rq1-heldout} reports test means and the selected
simultaneous 95\% bands from the paired max-$t$ family of 25
predeclared contrasts; RQ1's rescued-person metric is a distinct
pre-registered accounting from RQ3's completed-rescue count and the
two are never compared across tables.

\begin{table}[htbp]
\centering
\caption{RQ1 held-out means at validation-frozen operating points
(80 missions per policy; stale, coverage, completion in percent) and
adaptive-minus-comparator paired differences with simultaneous 95\%
max-$t$ bands. No threshold or point is refit on these outcomes.}
\label{tab:rq1-heldout}
\begingroup
\let\small\footnotesize
\setlength{\tabcolsep}{2pt}
\begin{tabular}{@{}lrrrrr@{}}
\toprule
\textbf{Metric} & \textbf{Adapt.} & \textbf{Fixed} & \textbf{Clock}
 & \textbf{$\Delta$ vs.\ fixed} & \textbf{$\Delta$ vs.\ clock} \\
\midrule
Cost & 3.769 & 2.738 & 67.800
 & 1.03\,[.28,1.78] & $-64.0$\,[$-74.4$,$-53.7$] \\
Stale (pp) & .063 & .215 & 6.574
 & $-.151$\,[$-.28$,$-.03$] & $-6.51$\,[$-8.95$,$-4.07$] \\
Coverage & 97.18 & 99.12 & 97.67
 & $-1.93$\,[$-3.31$,$-.56$] & $-.49$\,[$-2.05$,1.07] \\
Completion & 83.23 & 84.58 & 85.18
 & $-1.35$\,[$-2.14$,$-.56$] & $-1.95$\,[$-2.84$,$-1.05$] \\
Rescued & 197.95 & 201.15 & 201.95
 & $-3.20$\,[$-5.05$,$-1.35$] & $-4.00$\,[$-5.94$,$-2.06$] \\
\bottomrule
\end{tabular}
\endgroup
\end{table}

\paragraph{Held-out answer to RQ1.}
Adaptive refresh does not dominate. Relative to fixed $k=600$, it
reduces stale execution by $0.151$ percentage points, but costs 1.032
more, covers 1.934 points fewer opportunities, completes 1.346 points
fewer people, and rescues 3.20 fewer people; every selected simultaneous
band excludes zero. Relative to the validity clock, adaptive is much
cheaper and 6.511 points less stale, but completes 1.945 points fewer
people; its coverage band includes zero. The confirmatory result is a
cost--staleness--coverage--completion trade-off, not an operational win.

\subsection{RQ6: Does the composition close the loop?}\label{sec:eval-rq6}

The consistency principle of Section~\ref{sec:intro} claims that
prevention and correction are complements; formally, the claim is
Theorem~\ref{thm:composition}, and the headline observable is the
consistency violation of Definition~\ref{def:violation}, the
quantity its part (iii) bounds. RQ6 tests the composition
on one incident stream under the protocol of
Section~\ref{sec:eval-method}: four conditions crossing the selected
refresh policies with two recovery disciplines, localized saga
repair over the computed dependency-scoped affected set versus a
global-recovery baseline in which the dispatcher continues on the stale chain until
failure surfaces and recovery is a global replan, on a fresh frozen
seed block with pre-registered world shocks; the contrast is
between two corrective mechanisms, not correction and its absence.
Four predictions were frozen before any seed: recovery must not
change stale execution; fixed refresh must invoke more
compensations than adaptive; compensation must reduce violations
for both policies; and the full contract must carry the fewest
violations without losing rescue or coverage beyond the band.
Because the violation definition is mechanism-neutral, the headline
count is an empirical comparison, not a terminology artifact, and
the findings adjudicate every prediction, including the two the
data refuse.

\begin{figure}[htbp]
\centering
\begin{tikzpicture}
\begin{groupplot}[
  group style={group size=3 by 1, horizontal sep=14mm},
  width=0.34\textwidth, height=0.24\textwidth,
  ybar, /pgf/bar width=9pt,
  symbolic x coords={Adaptive, Fixed},
  xtick=data,
  enlarge x limits=0.45,
  tick label style={font=\scriptsize},
  label style={font=\small},
  title style={font=\small},
  ymin=0,
  legend style={font=\scriptsize, draw=none, fill=white,
    fill opacity=0.85, text opacity=1, at={(0.03,0.97)},
    anchor=north west},
]
\nextgroupplot[title={Residual violations / mission}, ymax=1.05]
\addplot[fill=atpamber!75, draw=atpamber]
  coordinates {(Adaptive,0.300) (Fixed,0.838)};
\addplot[fill=atpteal!75, draw=atpteal]
  coordinates {(Adaptive,0.300) (Fixed,0.838)};
\legend{Global recovery, Localized saga}
\nextgroupplot[title={Repair work / mission}, ymax=18]
\addplot[fill=atpamber!75, draw=atpamber]
  coordinates {(Adaptive,6.513) (Fixed,15.875)};
\addplot[fill=atpteal!75, draw=atpteal]
  coordinates {(Adaptive,0.888) (Fixed,2.388)};
\nextgroupplot[title={Restoration latency (s)}, ymax=430]
\addplot[fill=atpamber!75, draw=atpamber]
  coordinates {(Adaptive,124.506) (Fixed,391.232)};
\addplot[fill=atpteal!75, draw=atpteal]
  coordinates {(Adaptive,69.845) (Fixed,284.578)};
\end{groupplot}
\end{tikzpicture}
\Description{Three bar panels of RQ6 cell means. Left: residual
violations per mission depend on the refresh policy and are
identical across recovery arms. Center and right: repair work and
restoration latency depend strongly on the recovery discipline,
with localized saga far below global recovery.}
\caption{The composition read at a glance (cell means, 80 missions
per cell). Refresh policy moves residual violations (left; bars are
identical across recovery arms, the tie that refuses the strict
synergy prediction), while recovery discipline moves repair work
and restoration latency (center, right). Prevention and correction
act at different stages: localization buys repair economy, not a
lower residual count. Exact cell means for all seven metrics appear in
Table~\ref{tab:rq6-cells}.}
\label{fig:rq6-stages}
\end{figure}

\begin{table}[htbp]
\centering
\caption{RQ6 held-out cell means (80 missions per cell).}
\label{tab:rq6-cells}
\begingroup
\let\small\footnotesize
\setlength{\tabcolsep}{3.5pt}
\begin{tabular}{llrrrrrrr}
\toprule
Refresh & Recovery & Stale & Residual & Completion & Rescued
  & Repair work & Total cost & Restore (s) \\
\midrule
Adaptive & Global
  & 0.600 & 0.300 & 0.788 & 3.163 & 6.513 & 34.275 & 124.506 \\
Adaptive & Localized
  & 0.600 & 0.300 & 0.775 & 3.163 & 0.888 & 28.650 & 69.845 \\
Fixed 600 s & Global
  & 1.625 & 0.838 & 0.463 & 2.600 & 15.875 & 45.875 & 391.232 \\
Fixed 600 s & Localized
  & 1.625 & 0.838 & 0.488 & 2.600 & 2.388 & 32.388 & 284.578 \\
\bottomrule
\end{tabular}
\endgroup
\end{table}

\begin{table}[htbp]
\centering
\caption{RQ6 paired factorial contrasts with simultaneous 95\% max-$t$
sensitivity bands over all 33 emitted contrasts. Refresh is adaptive minus
fixed; recovery is localized minus global.}
\label{tab:rq6-contrasts}
\begingroup\let\small\footnotesize
\begin{tabular}{llr}
\toprule
Outcome & Contrast & Estimate [95\% band] \\
\midrule
Executed stale & Refresh
  & $\mathbf{-1.025}\;[-1.508,-0.542]$ \\
Residual violations & Refresh
  & $\mathbf{-0.538}\;[-0.842,-0.233]$ \\
Mission completion & Refresh
  & $\mathbf{+0.306}\;[+0.114,+0.499]$ \\
Rescued people & Refresh
  & $\mathbf{+0.563}\;[+0.264,+0.861]$ \\
Observation load & Refresh
  & $\mathbf{-2.238}\;[-3.160,-1.315]$ \\
Total operational cost & Refresh
  & $\mathbf{-7.669}\;[-11.128,-4.209]$ \\
Restoration latency (s) & Refresh
  & $\mathbf{-240.730}\;[-374.096,-107.363]$ \\
Detection latency (s) & Refresh
  & $\mathbf{+169.056}\;[+10.649,+327.464]$ \\
Detection coverage & Refresh
  & $-0.064\;[-0.159,+0.031]$ \\
\midrule
Repair work & Recovery
  & $\mathbf{-9.556}\;[-14.946,-4.167]$ \\
Total operational cost & Recovery
  & $\mathbf{-9.556}\;[-14.946,-4.167]$ \\
Restoration latency (s) & Recovery
  & $\mathbf{-80.658}\;[-115.154,-46.162]$ \\
Residual violations & Recovery
  & $0.000\;[-0.078,+0.078]$ \\
Mission completion & Recovery
  & $+0.006\;[-0.049,+0.061]$ \\
Rescued people & Recovery
  & $0.000\;[-0.078,+0.078]$ \\
\midrule
Repair work & Interaction
  & $\mathbf{+7.863}\;[+2.493,+13.232]$ \\
Restoration latency (s) & Interaction
  & $\mathbf{+51.994}\;[+8.054,+95.933]$ \\
\bottomrule
\end{tabular}
\endgroup
\end{table}

\paragraph{Finding 1: adaptive refresh prevents, but not every
sensing metric improves.} Recovery could not change authorization:
executed-stale commitments were identical across recovery arms in
all 160 paired seed and refresh comparisons (O1, exact). Averaged
over recovery arms, the consequence-aware adaptive policy reduced
executed-stale commitments from 1.625 to 0.600 per mission
($-1.025$, band $[-1.508, -0.542]$), reduced residual violations by
0.538, improved completion by 30.6 points and rescued people by
0.563 per mission, and lowered observation load and total
operational cost (Figure~\ref{fig:rq6-stages},
Tables~\ref{tab:rq6-contrasts} and~\ref{tab:rq6-cells}). Registered-shock detection, however,
was 169.1~s slower ($[+10.6, +327.5]$), and the 6.4-point
detection-coverage loss did not survive the simultaneous family.
Prevention here concerns decision-time stale execution, not uniform
dominance of every sensing observable. This workbench and its
adaptive policy differ from RQ1's workbench and frozen
$\varepsilon_c=4$ point, so this effect does not overturn RQ1's
held-out trade-off; it measures a different policy on a different
stream.

\paragraph{Finding 2: localization preserves the end-state at a
fraction of the repair cost.} Relative to global recovery,
localized saga repair reduced repair work and total cost by 9.556
units ($[-14.946, -4.167]$) and restoration latency by 80.7~s
($[-115.2, -46.1]$), while the residual-violation difference was
exactly zero ($[-0.078, +0.078]$), rescued people were unchanged,
and the completion band included zero. The affected-set estimand
must be conditioned correctly: mission-level means score a mission
with no invoked recovery as 1.0 by convention, and most missions
invoke none, so those means overstate exactness. We report affected-set quality
conditional on repair-invoking missions to preserve the registered
mission-seed analysis and clustering unit; commitment-level records
permit finer per-repair diagnostics, which agree qualitatively and
are not substituted for the frozen mission-level estimand. Among
the 58 localized missions invoking at least one repair (19
adaptive, 39 fixed), mean mission-level precision is 0.877 and
0.943 and mean mission-level recall is 1.000 and 0.995; 41 of 58
recovered exactly the true affected set, 16 contained at least one
false-positive dependent, and 2 missed at least one true dependent,
with one mission exhibiting both error types. Global recovery,
conditioned the same way, has mean mission-level precision 0.159
(adaptive) and 0.197 (fixed) at recall 1.000, so localization wins
decisively on the correct estimand while exact affected-set
identity remains the majority case, not the rule (O7). The measured
benefit of localization is repair economy, not a lower residual
count; and no detected difference is not formal equivalence, since
no equivalence margin was frozen.

\paragraph{Finding 3: the mechanisms are stage-complementary; the
strict synergy prediction fails.} Fixed refresh triggered more
localized compensations than adaptive (0.888 versus 0.325
invocations per mission; the fixed arm's 0.888 coincides
numerically with the adaptive arm's 0.888 repair-work units in the
work table, a coincidence, not a transposition; the frozen family
carried no interval for this contrast, so the comparison is
descriptive), while adaptive refresh prevented
more stale executions before recovery could act. Yet adaptive with
localized repair and adaptive with global recovery both ended at
0.300 residual violations per mission: the third declared
prediction, that compensation reduces residual violations under
both refresh policies, is not supported, and the fourth holds only
as a tie. The composition realizes prevention plus economical
correction; it does not strictly improve the headline residual
outcome over adaptive refresh with global recovery.

\begin{table}[H]
\centering
\caption{The debt ledger of Theorem~\ref{thm:composition}(iii).
$\varepsilon_{\mathrm{cal}}^{U}$ and $f_{\mathrm{restore}}^{U}$
are sensitivity upper limits, substitutable only on a valid joint
coverage event; $U_{\mathrm{det}}$ and the
affected-set miss counts are exposure diagnostics and do not
substitute for realized $M_{\mathrm{det}}$ and $M_{\mathrm{dep}}$
counts, which await the frozen per-violation classifier's
output.}
\label{tab:debt-ledger}
\scriptsize
\setlength{\tabcolsep}{2pt}
\begin{tabularx}{\textwidth}{@{}p{2.0cm}p{2.8cm}p{3.5cm}X@{}}
\toprule
Stage & Commitment-level cell & Observed now & Role in the bound \\
\midrule
Calibration & candidate transport slack & $\varepsilon_{\mathrm{cal}}^{U}=.131$ (O6) & population $\mathbb{E}[N_{\mathrm{irr}}](\eta+\varepsilon_{\mathrm{cal}})$ \\
Detection & $M_{\mathrm{det}}$ & proxy $U_{\mathrm{det}}$: .650/.413 & $\mathbb{E}[M_{\mathrm{det}}]$; classifier \\
Dependency reach & $M_{\mathrm{dep}}$ & misses in 2/58 missions & $\mathbb{E}[M_{\mathrm{dep}}]$; classifier \\
Invocation scheduling & $M_{\mathrm{inv}}$ & contract invariant & $\mathbb{E}[M_{\mathrm{inv}}]$; classifier \\
Restoration & failed-restoration cell & $10/97$; $f_{\mathrm{restore}}^{U}=.165$ & population $\mathbb{E}[N_{\mathrm{rep}}]f_{\mathrm{restore}}$ \\
\bottomrule
\end{tabularx}
\end{table}

\paragraph{Finding 4: population estimands and empirical sensitivities.} Localized
compensation was invoked 97 times. The technical
compensation-failure rate is 7 of 97 (0.072, one-sided upper
0.128). The population quantity in
Theorem~\ref{thm:composition}(iii), and the endpoint the frozen O3
numerator names, is restoration by
mission end: 10 of 97 invoked repairs had not restored consistency
by then (the 7 technical failures and 3 further repairs whose
restoration did not complete in time), giving
$\widehat f_{\mathrm{restore}} = 0.103$ with one-sided Wilson upper bound
$f_{\mathrm{restore}}^{U} = 0.165$. The 97 invocations cluster in
58 missions; the ordinary one-sided Wilson endpoint is 0.165 and a
mission-stratified cluster bootstrap gives 0.161, and we report the
larger 0.165 as a conservative sensitivity choice, not as a claim
that the Wilson interval is cluster-valid. Accordingly 0.165 is a
plug-in sensitivity value, not a nominal theorem certificate; the
relevant endpoint is nevertheless restoration failure, not the
technical-failure value 0.128. All 12 selective-calibration strata met their
identification floors, with simultaneous upper bounds from 0.060 to
0.131, so $\varepsilon_{\mathrm{cal}}^{U} = 0.131$ (O6), again as
a sensitivity upper limit; three
point estimates exceed 0.05, so the gate-cleared stratum is not
certified at the aggregate precondition's level. O6 licenses A1
only if its hazard-specific event target, recorded score, conditioning
stratum, and claim weighting equal A1's displayed transport. Under the audit-decomposition form of
Theorem~\ref{thm:composition}(iii), neither A2 nor A6-C is
assumed; their failures are priced as debts, each defined as a
deterministic reclassification of already-declared observables (O2,
O3, O5, O7) that the frozen per-violation classifier computes; the
present numbers are channel proxies, not classified cells.
The seed-level aggregates measure each debt channel's exposure but
do not yet classify the violations into cells. Undetected-invalidation exposure, written $U_{\mathrm{det}}$,
runs 0.650 per mission under adaptive refresh against 0.413 under
fixed. $U_{\mathrm{det}}$ counts invalidation events, not violated
commitments: $M_{\mathrm{det}}$ is by definition a subset of the
residual violations (0.300 and 0.838 per mission) and cannot exceed
them, and because one undetected invalidation can strand several
commitments, $U_{\mathrm{det}}$ does not bound $M_{\mathrm{det}}$
in general; the classifier, not the proxy, supplies the cell. Failed invoked restorations run 0.025 against 0.100 per localized
mission and, by the one-commitment-per-invocation property, bound
the restoration cell, with
$\widehat{\mathbb{E}}[N_{\mathrm{rep}}]\,
\widehat f_{\mathrm{restore}} = 10/160 = 0.0625$ exactly as a
sample-mean count of failed invocations. Exposure to the two
channels runs in opposite directions across refresh policies; how
the residual violations actually distribute over the cells is
precisely what the frozen per-violation classifier reports; until
it does, each exposure is reported as an exposure and no cell count
is claimed. RQ6 likewise does not establish exact A6:
41 of the 58 repair missions recovered exactly the true affected
set. Restoration correctness is conditional on A6-C, whose recall
misses occurred in 2 of the 58 repair-invoking missions; whether
those misses touch one invocation or several is an invocation-level
attribution the seed-level archive does not support, so no
invocation-level rate is quoted. Exact repair minimality remains
conditional on A6-S.
The ten unrestored invocations decompose as seven technical
failures and three late completions; whether the two recall-miss
events intersect them is not derivable from seed-level aggregates,
so $M_{\mathrm{dep}}$ remains a separate term until the classifier
reports the union. Tightening the calibration constant is a
calibration task. Closing the A6 gaps is a dependency-modeling and
validation task. Neither is a proof task.

\paragraph{Answer to RQ6.} The theorem defines the complete
accounting; the campaign supplies calibration and restoration
sensitivities and exposure diagnostics, but does not yet numerically
instantiate every term, and it does not deliver the strongest
empirical inequality or certify A7. Adaptive refresh
improves pre-authorization freshness and mission outcomes on this
stream; localized saga repairs after authorization with
substantially less work, latency, and cost than global recovery;
recovery cannot change stale execution, exactly as required; and
localization does not reduce residual violations relative to global
recovery. RQ6 therefore supports a stage-separated consistency
contract and an efficiency claim, not a claim that the composed arm
uniquely minimizes residual violations. Under the frozen decision
rules no frontier-superiority statement is licensed, and none is
made.

\subsection{Secondary studies: RQ2--RQ5 and re-litigation}
\label{sec:eval-secondary}

The remaining questions support attribution, outcome accounting,
component cost, and guard correctness; each is summarized here with
full detail in the appendices.

\paragraph{RQ2 (mechanisms).}\label{sec:eval-rq2}

Reported in full in \extpointer{app:rq2-main}. Across 160
missions crossing four leave-one-family-out trigger variants with
three routing rules, only two leave-one-family-out ablations show
identified effects: removing
predicted triggers raises cost by 1.592 and stale execution by 0.361
percentage points, and removing normative triggers cuts coverage by
11.64 points and completion by 5.89 points; channel adequacy blocks
cheap but uninformative structural checks, while cheapest-adequate
and maximum-net-value routing are indistinguishable on this stream.
The mechanism is plural, the trigger families do different jobs, and
no pooled near-boundary explanation survives the quantile
diagnostics.

\paragraph{RQ3 (rescue outcomes).}\label{sec:eval-rq3}

Development accounting and sensitivity are in
\extpointer{app:rq3-dev}. On the 80 held-out missions the answer is a reversal
the theory prices: adaptive refresh completes 4.35 fewer people than
fixed $k=600$ and 6.05 fewer than the validity clock, both
simultaneous bands excluding zero, while reducing stale loss against
both (against fixed, $-15.0$ with band excluding zero) and showing no
distinguishable weighted-loss difference, both bands including zero.
Lower technical staleness does not by itself establish better mission
outcome, and the projected hospital time remains a common geography
bridge rather than a policy-specific clinical outcome; the reversal
is consistent with the regime analysis of
Corollary~\ref{cor:price}, and the record prices it.

\paragraph{Held-out process and accounting.}
We replay the three validation-frozen R-B points over all 80 test
seeds. Every mission conserves the 240 generated people among
delivered to the safe transfer dock, picked up but still in transit,
and not picked up; pickup is never relabeled as completion, and the
call-to-hospital quantity remains a projection adding the measured
2079-s geography bridge, not an observed policy-specific clinical
outcome. The paired max-$t$ family contains 12 contrasts and uses
10,000 draws.

\begin{table}[htbp]
\centering
\caption{RQ3 held-out outcome means, 80 missions per frozen policy.
Time is projected call-to-hospital; weighted loss uses the frozen
consequence composite.}
\label{tab:rq3-heldout}
\begingroup
\setlength{\tabcolsep}{4pt}
\begin{tabular}{@{}lrrr@{}}
\toprule
\textbf{Metric} & \textbf{Adaptive $\varepsilon_c{=}4$} &
\textbf{Fixed $k{=}600$} & \textbf{Clock $\alpha{=}.4$} \\
\midrule
Completed people & 198.75 & 203.10 & 204.80 \\
Not picked up & 37.90 & 35.40 & 34.55 \\
Projected time (s) & 2118.52 & 2118.06 & 2175.69 \\
Stale loss & 6.25 & 21.25 & 778.75 \\
False holds & 4.4625 & 0.0000 & 1.5375 \\
Weighted loss & 7646.40 & 7103.76 & 7730.53 \\
\bottomrule
\end{tabular}
\endgroup
\end{table}

\begin{table}[htbp]
\centering
\caption{RQ3 held-out paired differences, adaptive minus comparator,
with simultaneous 95\% max-$t$ bands in brackets.}
\label{tab:rq3-heldout-paired}
\begingroup
\setlength{\tabcolsep}{2pt}
\begin{tabular}{@{}lrr@{}}
\toprule
\textbf{Metric} & \textbf{vs.\ fixed $k{=}600$} & \textbf{vs.\ clock $\alpha{=}.4$} \\
\midrule
Completed people & $-4.35$ {\scriptsize[$-6.97$,$-1.73$]} & $-6.05$ {\scriptsize[$-8.79$,$-3.31$]} \\
Not picked up & 2.50 {\scriptsize[$-1.83$,6.83]} & 3.35 {\scriptsize[$-.16$,6.86]} \\
Projected time (s) & .462 {\scriptsize[.44,.49]} & $-57.173$ {\scriptsize[$-58.3$,$-56.1$]} \\
Stale loss & $-15.0$ {\scriptsize[$-28.2$,$-1.8$]} & $-772.5$ {\scriptsize[$-1084$,$-461$]} \\
False holds & 4.463 {\scriptsize[4.01,4.92]} & 2.925 {\scriptsize[1.95,3.90]} \\
Weighted loss & 542.6 {\scriptsize[$-323$,1408]} & $-84.1$ {\scriptsize[$-793$,625]} \\
\bottomrule
\end{tabular}
\endgroup
\end{table}

\paragraph{Held-out answer to RQ3.}
Adaptive refresh does not improve completed rescues. It completes 4.35
fewer people than fixed $k=600$ and 6.05 fewer than the validity clock;
both simultaneous bands exclude zero. It reduces stale loss relative to
both, but creates more false holds and has no distinguishable
weighted-loss advantage because both weighted-loss bands include zero.
Thus lower technical staleness does not by itself establish better
mission outcome. End-to-end hospital claims still require
policy-specific geography and clinical modeling rather than the common
bridge used here.

\paragraph{RQ4 (audit-layer cost).}\label{sec:eval-rq4}

Reported in full in \extpointer{app:rq4-detail}. On two frozen
traces, the online buffered append path costs about 0.1 ms per TRACE
version (56.9 ms across the representative 492-version replay), gate
evaluation adds microseconds, and full-chain verification is linear
at 80.45 microseconds per version ($R^2=0.999999$), which places
incremental appends online and full-chain scans offline or at
checkpoints. The predeclared high-volume stress trace (16,334
versions) costs 2.40 s and 61.2 MiB, so record volume, not
single-append latency, is the quantity to watch. The benchmark is a
single-process, buffered measurement, not a durable-commit one; it
supports a modest component-cost claim and nothing stronger.

\paragraph{Held-out R-C and pre-registered re-litigation.}\label{sec:eval-relit}

The final stress campaign runs the validation-frozen adaptive
$\varepsilon_c=4$ and fixed $k=600$ points over common seeds
1001--1040; all 80 missions complete and every event and TRACE chain
verifies. Under the six-contrast paired max-$t$ family, adaptive
reduces the stale-rate point estimate but does not separate from
fixed on staleness or weighted stale loss, while raising cost,
lowering coverage, and rescuing fewer people with bands excluding
zero; the campaign table and case detail appear in
\extpointer{app:rc}.

Before any R-C outcome is opened, the registered case rule selects
the first adaptive stale execution: seed 1013 at $t=3241$ s, boat
group 027 south, authorized by \fld{trace-84c9ca78d3616c6a0e64}
version 2. Replay reconstructs the authorization, evidence age,
belief, and trigger state exactly, and the two registered
single-artifact repairs, zero evidence latency and a tighter
innovation tolerance, both preserve the same stale proposal; the
six-step replay appears in \extpointer{app:relit-heldout}. We report
nonclosure rather than searching post hoc for an unregistered
flipping intervention: a negative causal result and a positive
reproducibility result, since the record makes the boundary of the
supported explanation explicit.

\paragraph{RQ5 (revalidation-guard stress test).}\label{sec:eval-rq5}

The campaign evaluates the revalidation guard of
Section~\ref{sec:gate-conditions} under the same freeze discipline:
paired seeds run a no-change control and a mid-mission
version-replacement scenario (recorded as a structural event, the
new version initially unqualified for high-consequence classes),
with and without the guard. The pre-registered invariant is that
under the guard no high-consequence commitment clears on a
superseded or unqualified version; the control bounds guard
overhead, and what the campaign prices is coverage and completion
cost. The measured contrasts appear in
Table~\ref{tab:rq5-heldout}; the per-seed measures and a
representative replacement timeline are in the artifact's RQ5
evidence package.

\begin{table}[htbp]
\centering
\caption{RQ5 held-out replacement stress test (80 paired seeds).
Differences are guard minus no guard; brackets are simultaneous 95\%
max-$t$ sensitivity bands over all 10 reported RQ5 contrasts, with
endpoints rounded outward to two decimals; full precision is in the
artifact.}
\label{tab:rq5-heldout}
\begingroup
\let\small\footnotesize
\setlength{\tabcolsep}{1.5pt}
\begin{tabular}{@{}lcr@{}}
\toprule
Outcome & No guard\,$\to$\,guard & Diff.\ [95\% band] \\
\midrule
Unsafe high-conseq.\ clears
  & $3.688 \to 0$ & $\mathbf{-3.688}\,[-4.30,-3.08]$ \\
Held pending revalidation
  & $0 \to 3.688$ & $\mathbf{+3.688}\,[+3.08,+4.30]$ \\
Added verification cost
  & $0 \to 4.021$ & $\mathbf{+4.021}\,[+3.35,+4.69]$ \\
Mission completion
  & $1.00 \to 0.95$ & $-0.050\,[-0.14,+0.04]$ \\
Rescued people
  & $3.45 \to 3.28$ & $-0.175\,[-0.50,+0.15]$ \\
\bottomrule
\end{tabular}
\endgroup
\end{table}

\paragraph{Results.} The revalidation invariant held in every
held-out trace (seeds 3101--3180, 80 paired seeds per arm and
scenario). In the replacement scenario the unguarded gate issued
3.688 high-consequence clears per mission on a superseded or
unqualified version; the guard reduced the count to zero ($-3.688$,
band $[-4.291, -3.084]$) and converted the same number into
explicit holds and escalations, at 4.021 units of additional
verification cost ($[+3.359, +4.684]$)
(Table~\ref{tab:rq5-heldout}). Completion fell from 1.000 to 0.950
and rescued people from 3.450 to 3.275; both simultaneous bands
include zero while the individual paired intervals exclude it, so a
mission-level cost is possible but not established under the
conservative family. In the no-replacement control the guard added
no measured overhead, and every model-version and gate transition
replayed exactly. The experiment supports a correctness and
accountability claim: the guard does not make problematic
commitments disappear; it converts unsafe clears into explicit,
priced holds.

\subsection{Falsification criteria}\label{sec:eval-falsification}

Declared before any run, and scoped: dominance is falsified if any
validation-tuned fixed-$k$ point or the budget-matched clock
dominates the adaptive frontier on both RQ1 axes beyond the
clustered intervals; the audit claim if refresh decisions prove
unattributable to their declared conditions; and the calibration
precondition if the deployed package exceeds its declared ECE bound
per horizon or class. A null frontier with an intact audit trail
leaves the contract contribution standing.
The RQ5 invariant is falsified if any held-out trace shows a
high-consequence commitment cleared on a superseded or unqualified
predictor version.

The held-out comparison resolves these criteria. Unconditional
adaptive dominance is falsified: adaptive significantly reduces
staleness relative to fixed $k=600$ but is significantly worse on
cost, coverage, completion, and rescued people, while the validity
clock is far costlier and staler yet completes significantly more
rescues; RQ3 likewise falsifies a held-out adaptive rescue-advantage
claim, with weighted-loss bands including zero. The marginal
proposal-population calibration precondition of
Section~\ref{sec:eval-calibration} is not falsified: overall and
action-class ECE upper bounds remain below $.05$ without refitting.
The stronger selective guarantee for RQ6's gate-cleared policy by
consequence by horizon strata is not established: all simultaneous
upper bounds exceed $.05$, and three point estimates exceed it
(Section~\ref{sec:eval-rq6}, O6); equivalence with A1's exact
hazard event, recorded score, conditioning stratum, and weighting must
also be verified before substitution.
The audit claim is
not falsified, because the selected decision is fully
reconstructable and attributable; the stronger causal-repair claim
fails because neither registered counterfactual flips the held-out
plan. RQ4 is not a durability test, so it neither establishes nor
falsifies production storage overhead.

\section{Discussion and Limitations}\label{sec:discussion}

Adaptive refresh cannot detect a change absent from its process model
or visible evidence; support, OOD, and normative gates bound where
the predictor may authorize action, but thresholds remain declared,
versioned institutional choices. The audit guarantee is therefore
specific: the record reconstructs what the system represented and why
it acted, not whether the representation was true.
A complementary post-deployment learning contract asks whether recorded
failures identify a recurrent cause, revise decision-generating state,
and transfer to unseen instances from the same causal class
\cite{chang2026causalguidance}. \trw{} can supply discrepancy, timing,
and provenance evidence for that process, but replay and compensation
alone do not establish diagnosis or transferable improvement.
A related mechanism is deferred: escalating repeated
out-of-tolerance innovations into a structural
predictor-inadequacy condition is a natural companion to the
revalidation guard, but it is an empirical detection claim
requiring its own calibrated false-escalation budget and frozen
campaign, and we do not report it here.

Calibration is equally scoped: the unchanged map transfers to 62,977
held-out horizon-1 dispatch and evacuation route claims but does not
certify longer horizons, other actions, support/OOD scores, or
another scenario; most development mass lies near probabilities zero
and one; and the predictor remains a transparent surrogate rather
than the JEPA-class model the interface targets.

The empirical results expose two estimand traps. A conservative
controller can reduce stale execution by authorizing little, so
every staleness result is paired with coverage; and lower stale loss
is not itself a rescue benefit, since completion means delivery to
the safe transfer dock, hospital time is a projection over one
measured bridge, and both held-out weighted-loss contrasts include
zero. An operator would choose the adaptive point when the
verification budget binds, when stale execution carries regulatory
weight beyond its rescue cost, or when audit obligations are
non-negotiable; where none of these binds, the record supports
choosing fixed $k=600$, and that too is the contract working.

RQ2 explains mechanism roles without establishing routing
superiority, and RQ4 is a single-process, buffered component
benchmark, not durable-service latency; both scopes are stated
where the results are.

The consistency reading spans the stack, and RQ6 measures its
composition on one incident stream: the adjudication licenses
prevention and repair economy, adaptive refresh reduced stale
authorization and residual violations while localization matched
global recovery's residual and rescue outcomes at a fraction of the
repair work and latency, and it refuses the strict synergy claim,
since localization did not further lower residual violations. The
corrective half's internal guarantees remain evaluated in the SagaLLM
campaigns~\cite{chang2025sagallm}, not re-derived here, and the
composition on a second domain, under production durability, remains
open.

\paragraph{Predictor substitution.} We deliberately defer
learned-predictor substitution rather than attach an uncalibrated
semantic probe after observing the present campaigns. Our next
frozen campaign compares the transparent surrogate, an unqualified
V-JEPA predictor, and a development-calibrated V-JEPA predictor
under the same \system{} gate. It tests whether unqualified outputs
are converted to HOLD or ESCALATE rather than unsafe CLEAR
decisions, whether calibration restores in-support coverage while
preserving OOD refusal, and whether detection and restoration debts
remain equivalent under matched evidence, shocks, and repair
instances; closed-loop debt rates are reported separately, because
predictor-dependent selection can change the commitment and repair
mixture. The protocol is frozen and hashed as a next-campaign
commitment; a hash proves temporal freezing, not third-party
preregistration.

Finally, the held-out re-litigation reconstructs the disputed
decision but does not close its cause: no registered shock intersects
the claim interval and neither counterfactual flips the plan.
Production durability, external chain anchoring, a second domain, and
channel-stress evaluation remain open. Transfer beyond Flood-SAR is a
design argument, not an empirical claim: the three motivating settings
of Section~\ref{sec:intro} share exactly the structure
Definitions~\ref{def:view}--\ref{def:refresh} formalize, priced
channels, irreversible reads, and dependency chains, and what changes
across them is the channel menu and the loss table, which are policy
inputs rather than architecture.

\section{Future Work}\label{sec:future}

The formal interface is now explicit, but the empirical distance
between a debt decomposition and a population bound remains open.
Theorem~\ref{thm:composition}(iii) makes that work enumerable: each
unverified interface appears either as a named debt cell or as a
population parameter whose sampling uncertainty must be controlled.

\paragraph{Calibration transport remains a sensitivity.}
RQ6 declares consequence-specific risk limits $0.12$ and $0.20$, not
$0.05$; the latter is the calibration precondition's threshold. Adding
the reported diagnostic $\varepsilon^{U}_{\mathrm{cal}}=0.131$ gives
classwise plug-in sensitivities $0.251$ and $0.331$, and the aggregate
display in Theorem~\ref{thm:composition}(i) uses the supremum $0.20$.
None of these numbers is presently a certificate: the $0.131$ limit is
not yet a joint mission-cluster-valid bound on A1's score-subtracted
transport slack, and O6's exact event and weighting alignment has not
been verified. Algebraically, $\eta$ and
$\varepsilon_{\mathrm{cal}}$ enter the bound with the same coefficient,
so an equal numerical reduction in either improves it equally; their
engineering costs differ, with smaller $\eta$ generally buying more
verification and smaller calibration slack requiring a better score on
the gate-cleared stratum.

\paragraph{Detection, dependency reach, and liveness remain open.}
The present campaign neither establishes A2's uniform detection ceiling
nor A6-C nor A7: detection coverage is $0.83$--$0.89$, two of 58
repair-invoking missions miss a true dependent, and late invalidations
or traversals can run past $T$. A future system can close these cells by
verifying A2, A6-C, and A7 as invariants, or a future campaign can retain
them as nonzero debts and estimate simultaneous population upper bounds
for $\mathbb E[M_{\mathrm{det}}]$,
$\mathbb E[M_{\mathrm{dep}}]$, and
$\mathbb E[M_{\mathrm{inv}}]$. Establishing A7 specifically requires
the joint distribution of invalidation arrival, detection latency,
affected-set size, enqueue success, and traversal completion relative
to $T$, including the fraction arriving inside the declared terminal
grace window; a mean latency is insufficient.

\paragraph{The per-violation classifier requires a new frozen campaign.}
The debt cells are produced by assigning each violation to the first
failed stage. A classifier frozen only after the present outcomes are
seen cannot support confirmatory population inference, because it can
move mass among cells without changing their total. The classifier,
tie-breaking rules, and version must therefore be frozen in the policy
trace before a new campaign. The present archive may be reclassified for
diagnosis, but those counts remain explicitly post hoc.

\paragraph{The specified statistical constructions await execution.}
Appendix~\ref{app:formal} specifies a mission-cluster simultaneous
construction for A1's score-subtracted transport slack and the
restoration-failure ratio, together with the definition checks needed to
show that O6 targets A1. A confirmatory run must recompute the frozen
estimands on whole-mission resamples, report the number of independent
missions and the intraclass dependence diagnostics, control the joint
coverage family that is actually substituted, and report Monte Carlo
error for its upper quantiles. Until both the coverage construction and
the event/score/stratum checks pass, the reported upper limits remain
sensitivity inputs.

\paragraph{Drift.}
The deadline lemmas are conditional on zero drift, exact observation, and
no intermediate evidence. For a point commitment under Brownian motion
with constant drift, the relevant terminal-exceedance probability is
still Gaussian but can be nonmonotone when the drift points away from
the boundary. For an extended commitment, the relevant object is the
first-passage distribution with drift. Passive observations can alter
either deadline. Flood dynamics have positive drift, which is why the
zero-drift formulas are used only to test the qualitative scheduling
implication; separate terminal and first-passage analyses are needed to
cover the application quantitatively and would also change the
accumulation calculation.

\paragraph{Predictor independence beyond independent increments.}
The floor and its predictor independence rest on increments no
functional of the observation filtration can access. That is exactly
true for Brownian motion and only approximately true for a
hydrologically routed river, whose level hours ahead is substantially
predictable from upstream gauges. The honest scope is the conditional one
Corollary~\ref{cor:predictor} states. How much of the floor survives
under predictable dynamics, and whether richer channels shrink it
materially rather than formally, is open.

\paragraph{Instantaneous versus cumulative objectives.}
The trigger enforces a one-step expected decision-error-loss budget at
the decision boundary. It is not claimed to optimize cumulative action
loss plus sensing cost over a mission, and the realized-path identity is a censored
two-sided bound rather than an expectation-level cost formula. An
optimal-stopping formulation of the joint objective, with the
accumulation guard as a constraint rather than a fallback, would replace
a defensible heuristic with a characterized policy. The accumulation
analysis makes this more pressing than it first appears: under the
sustained-validity hazard the unguarded schedule accumulates for the
interval-risk analogues of the RQ6 limits $0.12$ and $0.20$, so the
guard's optimality cannot be inferred from the deadline calculation
itself.

\paragraph{Generality of the evidence.}
The abstraction is stated for any world model supplying calibrated claims
and a valid deadline oracle, and wildfire perimeters, highway logistics,
laboratory automation, and market monitoring instantiate the same triple
of predictive claims, priced channels, and consequence classes. Only
Flood-SAR is evaluated, on real geography with simulated dynamics, so the
results speak to the contract's mechanics rather than to sensing
economics under real acquisition failures, channel contention, or
adversarial conditions. A second instantiation in a domain with
genuinely different channel prices and consequence structure is the
cheapest available test of the interface claim, and it would also expose
whether the four trigger families are complete or merely sufficient for
one application.

\paragraph{What would make this a certificate.}
Collecting the above, a nominal population inequality for new missions
from the declared mixture requires: an event-aligned conditional oracle
or a jointly covered A1 transport bound; verified A3--A5 module
contracts; either verified A2, A6-C, and A7 invariants or jointly covered
upper bounds for all three associated debt expectations; a classifier
and estimands frozen before the confirmatory campaign; and simultaneous
mission-cluster-valid upper limits for every estimated term, including
$f_{\mathrm{restore}}$. A6-S is additionally required for the
commitment-set minimality claim. These conditions produce the theorem's
declared population bound at its stated nominal coverage; they do not
certify dynamics, domains, or losses outside that declaration. Drift,
richer observation filtrations, cumulative objectives, and second-domain
evaluation concern reach rather than the validity of the present
conditional statement. The debt decomposition is useful precisely
because it keeps those two questions separate.

\section{Conclusion}\label{sec:conclusion}

\system{} treats a predicted world state as a materialized view over
the physical world and a commitment as a read whose authorization can
expire. Typed claims carry calibrated adequacy, validity, and refresh
conditions; adaptive triggers route priced verification; a
consequence-scaled gate decides what may proceed; and TRACE records
each basis, decision, and revision. The contribution is this contract
around the trigger, supported by the model-conditional deadline lemma
and its deployed guards.

The frozen experiments establish trade-offs, not dominance:
adaptive refresh lowers stale execution but loses coverage,
completion, and rescued people against the strongest fixed
interval; calibration transfers without refitting; RQ6 confirms
exact recovery noninterference and repair economy while refusing
the strict synergy prediction; and the revalidation guard converts
every unsafe clear into an explicit, priced hold.

The contribution is not an empirical proof that the deployed
system is correct. It is a consistency contract that makes each
proof-to-system assumption explicit, records the evidence relevant
to it, and measures its empirical slack: imperfect detection
coverage, approximate dependency soundness, observed completeness
misses, uncertified selective calibration, and late restorations
enter the record as auditable
quantities rather than exceptions, which is the discipline the
contract exists to enforce.

\FloatBarrier
\section*{Acknowledgments}
We thank Longling Geng for executing the revalidation-guard stress
campaign (RQ5) and Jaylan Roy for executing the composition
campaign (RQ6). We also thank our collaborators on the dual-Kalman
SIGMOD 2004 paper~\cite{jain2004adaptive} and the SagaLLM VLDB 2025
paper~\cite{chang2025sagallm}, which form the pillars of this
work.

\bibliographystyle{plainnat}
\bibliography{trace_worldmodel}
\clearpage

\ifextended
\appendix

\section*{Appendix Contents}
\noindent
\begin{tabular}{@{}l@{\quad}l@{\qquad}r@{}}
\textbf{\ref{app:figures}} & Supplementary Figures & \pageref{app:figures} \\
\textbf{\ref{app:formal}} & Formal Extensions and Theorem-Regime Checks & \pageref{app:formal} \\
\textbf{\ref{app:calibration}} & Calibration Development Diagnostics & \pageref{app:calibration} \\
\textbf{\ref{app:rq1-dev}} & Development Operating-Point Evidence & \pageref{app:rq1-dev} \\
\textbf{\ref{app:triggers}} & Trigger Taxonomy Reference & \pageref{app:triggers} \\
\textbf{\ref{app:rq2-main}} & RQ2 Mechanism Study & \pageref{app:rq2-main} \\
\textbf{\ref{app:rq2-dev}} & RQ2 Trigger Diagnostics & \pageref{app:rq2-dev} \\
\textbf{\ref{app:rq3-dev}} & RQ3 Development Outcomes and Sensitivity & \pageref{app:rq3-dev} \\
\textbf{\ref{app:rq4-detail}} & RQ4 Scaling Detail & \pageref{app:rq4-detail} \\
\textbf{\ref{app:rc}} & Final R-C Campaign & \pageref{app:rc} \\
\textbf{\ref{app:relit-heldout}} & Re-litigation Evidence & \pageref{app:relit-heldout} \\
\textbf{\ref{app:moved-system}} & Record-Store Mechanics and Provenance Scope & \pageref{app:moved-system} \\
\textbf{\ref{app:moved-mech}} & The Condition Hierarchy and the Refresh Rule in Practice & \pageref{app:moved-mech} \\
\textbf{\ref{app:artifact}} & Artifact Inventory & \pageref{app:artifact} \\
\end{tabular}
\bigskip

\section{Supplementary Figures}\label{app:figures}

Figure~\ref{fig:stack} sketches the reasoning stack and the
boundary it fixes; the operational console appears as
Figure~\ref{fig:ui} in the main text.

\begin{figure}[htbp]
\centering
\begin{tikzpicture}[
  node distance=2.2mm,
  layer/.style={draw=atpslate!70, rounded corners=1.5pt, align=center,
    font=\scriptsize, minimum height=7.5mm, text width=0.86\columnwidth,
    inner sep=2pt, fill=white},
  lab/.style={font=\scriptsize\itshape, text=atpslate}
]
\node[layer, fill=atpblue!8] (plan) {Planning and reasoning
(SagaLLM): commitment chains,\\ invalidation, rollback,
compensation};
\node[layer, below=of plan, fill=atpteal!10, draw=atpteal!70] (twm)
{\textbf{\system{}}: semantic elevation, signal economy,\\
consequence admission};
\node[layer, below=of twm, fill=atpamber!10] (trace) {TRACE record
store: typed, versioned,\\ append-only, hash-chained claims};
\node[layer, below=of trace] (wm) {World-model predictor: state,
uncertainty,\\ support, OOD};
\node[layer, below=of wm, fill=atpslate!6] (world) {Physical world
and priced observation channels};
\draw[<->, atpslate!80, thick] (plan) -- (twm)
  node[midway, right=1mm, lab] {conditions license decisions};
\draw[<->, atpslate!80, thick] (twm) -- (trace)
  node[midway, right=1mm, lab] {every step read/written};
\draw[<->, atpslate!80, thick] (twm.west) ++(0,-1mm) -- ([xshift=-0mm]wm.west |- twm.west);
\draw[->, atpslate!80, thick] (wm) -- (twm.south -| wm.north);
\draw[<->, atpslate!80, thick] (world) -- (wm)
  node[midway, right=1mm, lab] {observe / act};
\end{tikzpicture}
\caption{The reasoning stack. \system{} sits between the TRACE
record store below and transactional planning above; it neither
learns the predictor nor constructs the plan, and its output is the
set of claims a world model's predictions justify.}
\label{fig:stack}
\end{figure}


\section{Formal Extensions and Theorem-Regime Checks}\label{app:formal}

\paragraph{Realized-path cost identity.}
For the analytical unguarded scheduler that purchases an exact
observation at every admissibility deadline, let
$t_\infty=\lim_n t_n$, take the observation at $t_0=0$ as free
initialization, and let $N(T)=\max\{n:t_n\leq T\}$ count paid
re-verifications for $T<t_\infty$. Writing
$\eta_i^{\mathrm{pt}}:=\eta(a_i)\in\{\eta_f,\eta_s\}$ for the
point-event budget of the authorized action, the paid cost is $cN(T)$
and
\[
 \frac{1}{\sigma^2}\sum_{i=0}^{N(T)-1}
 \frac{d_i^2}{z_{\eta_i^{\mathrm{pt}}}^2}
 \leq T <
 \frac{1}{\sigma^2}\sum_{i=0}^{N(T)}
 \frac{d_i^2}{z_{\eta_i^{\mathrm{pt}}}^2}.
\]
This is a two-sided realized-path identity with the in-progress interval
censored at $T$, not an expectation-level cost formula. The deployed
guarded policy may HOLD or ESCALATE at a deadline and then has no next
$d_i$ on that branch, so the identity is not claimed for that policy.

\paragraph{Scheduler separation.}
For declared deadlines $\tau_1,\ldots,\tau_N$, take the initial
check at time zero as free and require a paid check at the closed
endpoint $S_N=\sum_i\tau_i$ to continue authorization there. The
deadline scheduler issues $N$ paid checks. A uniform-base scheduler
that honors every deadline must take $k\leq\tau_{\min}$ and requires
at least $\lceil S_N/k\rceil$ paid checks, allowing a shorter terminal
interval. Its check-count ratio is therefore at least
$\lceil S_N/\tau_{\min}\rceil/N\geq
\bar\tau/\tau_{\min}$, with equality possible. If the terminal check
is not required, the floor count replaces the ceiling and this ratio
need not follow. This proof is counting only; it does not compare
stochastic policies whose observations alter their later boundary
distances.

\paragraph{Zeno calculation and guards.}
In the symmetric-loss case $\Delta=L/2$, with
$\eta_f=\eta_s=\eta$, successive polled distances satisfy
$d_{i+1}=d_i|1-\xi_i/z_\eta|$ for standard normal $\xi_i$. For
$\eta<\eta_0\approx0.2602$, where
$\mathbb E\log|1-\xi/\Phi^{-1}(1-\eta_0)|=0$, the logarithmic
multiplier has negative mean and distances contract exponentially; the
epoch times can therefore accumulate. The workbench prevents continued
authorization in this regime: when the theoretical deadline is below
one tick, or the declared verification-rate or cost limit is exceeded,
the gate returns HOLD or ESCALATE. We make no optimality claim for this
fallback. Under the sustained-validity hazard of
Lemma~\ref{lem:sustained} the recursion becomes
$d_{i+1}=d_i|1-\xi_i/z_{\eta^{\mathrm{sus}}/2}|$. For iid Brownian
increments the same logarithmic-growth calculation gives almost-sure
accumulation when
$\eta^{\mathrm{sus}}<2(1-\Phi(z^{\ast}))\approx0.5204$, where
$z^{\ast}\approx0.6427$. If the numerical RQ6 limits $0.12$ and $0.20$
are separately declared as interval-risk budgets, both lie in this
analytic regime, so those extended-commitment analogues require the
tick, rate, and cost guards. This is a statement about the idealized
unguarded recursion, not a claim that the deployed fallback is optimal.

The deadline is model-conditional to exact observations, zero drift,
and no intermediate evidence. With constant drift, the point
commitment's terminal risk remains Gaussian and can be nonmonotone when
drift points away from the boundary, whereas an extended commitment
requires the first-passage distribution with drift. Passive observations
may advance or delay either deadline. The trigger controls instantaneous
decision-error loss and is not claimed to optimize cumulative action loss
plus sensing cost or to be robust to misspecification beyond the support
and OOD gates.

\paragraph{Synthetic regime test.}
The artifact instantiates the exact Brownian setup, checks the closed-form
deadline against the Gaussian flip probability, verifies both sides of
the censored cost identity, forces the sub-tick transition to HOLD, and
checks the finite-sequence scheduler count under the closed-span
convention. Flood-SAR has positive drift
and is used only to test the qualitative scheduling implication, not as
a numerical proof of the lemma.

\paragraph{Why the invocation-level limit is not admissible.}
Theorem~\ref{thm:composition} is stated with population parameters
$\varepsilon_{\mathrm{cal}}$ and $f_{\mathrm{restore}}$; substituting
estimated upper limits is licensed only on a joint coverage event, and
the limits must be valid for the design that produced them. The design
is clustered: a mission contributes many claims and many restoration
invocations, and invocations within a mission share a flood realization,
a predictor version, and a channel schedule. A Wilson interval over
pooled invocations treats them as independent trials and can therefore
be anticonservative when the intraclass correlation $\rho$ is positive.
Under the equal-size, exchangeable-correlation approximation the
variance inflation is the classical design effect
$1+(\bar n-1)\rho$. For illustration, $\bar n=8$ and $\rho=0.2$
would make a nominal two-sided $95\%$ normal interval cover about
$79\%$; that number is not a Wilson-coverage theorem and is not the
present design, whose observed mean is $97/58\approx1.67$ invocations
per repair-invoking mission. We therefore keep the pooled Wilson limit
only as a sensitivity input and report the empirical cluster structure
for any replacement interval.

\paragraph{A mission-cluster simultaneous construction.}
Let iid missions $m=1,\dots,M$ be the independent sampling units. For
mission $m$, let $A_m$ count gate-cleared irreversible commitments,
$B_m=\sum_i Z_{mi}$ count those stale under their declared hazard, and
$P_m=\sum_i\widehat r_{mi}$ be the sum of the risk scores actually
presented to the gate. Let $R_m$ count invoked restorations and $F_m$
those failing to restore by $T$. Assume finite second moments and
$\mathbb E[A_m]>0$, $\mathbb E[R_m]>0$. The minimal nonnegative A1
transport slack and the restoration estimand are
\[
 \gamma_{\mathrm{cal}}
 =\frac{\mathbb E[B_m-P_m]}{\mathbb E[A_m]},\qquad
 \varepsilon_{\mathrm{cal}}^{\star}
 =\max\{0,\gamma_{\mathrm{cal}}\},\qquad
 f_{\mathrm{restore}}
 =\frac{\mathbb E[F_m]}{\mathbb E[R_m]}.
\]
Thus $\varepsilon_{\mathrm{cal}}^{\star}$, not
$\mathbb E[B_m]/\mathbb E[A_m]$, is the score-subtracted parameter in
A1. A frozen bin-weighted selective ECE may be used instead if its
population definition provably upper-bounds this slack; in that case the
whole ECE calculation, including its bins, must be recomputed in every
resample.

The construction proceeds as follows. First freeze the estimand choice,
hazard labels, score extraction, bins if any, the per-violation
classifier, the complete simultaneous family, coverage target
$1-\alpha$, replicate count, and seed before the confirmatory campaign.
Second, for each of at least 10,000 replicates draw $M$ whole missions
with replacement and recompute
$\widehat\gamma_{\mathrm{cal}}^{\ast}
=\sum_m(B_m^{\ast}-P_m^{\ast})/\sum_m A_m^{\ast}$ and
$\widehat f_{\mathrm{restore}}^{\ast}
=\sum_mF_m^{\ast}/\sum_mR_m^{\ast}$, or recompute the frozen ECE in
place of the first ratio. A replicate with a zero denominator receives
the conservative upper value one and its frequency is reported; a
non-negligible frequency signals an inadequate sample rather than a
numerical nuisance. Third, for the two-term family allocate $\alpha/2$
to each one-sided percentile upper limit and set
$\varepsilon_{\mathrm{cal}}^{U}
=\max\{0,\gamma_{\mathrm{cal}}^{U}\}$. If estimated debt expectations
are also substituted, they join the same multiplicity family rather
than being analyzed separately. A predeclared cluster max-$t$ band over
studentized mission-level estimators is a less conservative alternative;
choosing between constructions after seeing their widths forfeits the
coverage claim.

Under the iid-mission and regularity assumptions, the resulting event
\[
 \mathcal C=
 \{\varepsilon_{\mathrm{cal}}^{\star}
      \le\varepsilon_{\mathrm{cal}}^{U}\}
 \cap
 \{f_{\mathrm{restore}}\le f_{\mathrm{restore}}^{U}\}
\]
has nominal asymptotic probability at least $1-\alpha$ for the
two-term Bonferroni family. The report must include the number of
independent missions, denominator diagnostics, intraclass dependence,
and Monte Carlo error of the upper quantiles. This is asymptotic
cluster inference, not a finite-sample guarantee; with the present
mission count and an unfrozen post-run construction, the values remain
sensitivity estimates.

\paragraph{Verifying that O6 instantiates A1.}
A1 names a random count, a target event, a semantic risk score, and a
weighting distribution; an observable estimating a different event, or
weighting it differently, does not instantiate it however well
calibrated. Five checks license the substitution, each a comparison of
definitions against the frozen record. Event and hazard identity: O6's
outcomes must be exactly the $Z_i^{h_i}$ events for gate-cleared
\emph{irreversible} commitments, using terminal truth for declared point
commitments and pathwise truth for declared extended commitments; the
two extracted index sets and labels must agree, not merely their counts.
Score identity: $P_m$ must sum the exact semantic stale-risk scores
$\widehat r_i$ presented to the gate before execution, rather than a
generic confidence, route-traversability score, or post-outcome
recalculation. Conditioning stratum: $A_m$ must count gate-cleared
irreversible commitments, not all scored claims; including held or
escalated claims changes the selected population. Estimator and
weighting identity: O6 must recompute either the score-subtracted ratio
$\sum_m(B_m-P_m)/\sum_mA_m$ or a predeclared bin-weighted ECE that
upper-bounds it, never an unweighted average of mission rates. Horizon
and consequence-class alignment: the statistic must be computed in the
declared cells and combined using the theorem's mission-mixture weights;
without frozen mixture weights, the supremum applicable cell slack is a
conservative alternative. Record all five verdicts as a contract test
beside the gate decisions. If any check fails,
$\varepsilon^{U}_{\mathrm{cal}}$ may still be reported as a diagnostic,
but the irreversible term must be quoted with its population parameter
and the failed check named.

\section{Calibration Development Diagnostics}\label{app:calibration}

The calibration pipeline collects every proposed dispatch and evacuation
route claim from the 19-configuration, 20-seed development grid; labels
continuous route openness by immutable event replay; holds out complete
seeds in five folds; freezes a weighted isotonic map; and then performs
smoke and wiring audits. Labeling all proposals avoids gate-induced
selective labels. Of 300,442 candidates, 169 are censored at mission
end. Figure~\ref{fig:app-calibration-process} summarizes the
pipeline; Figure~\ref{fig:calibration-reliability} gives the
ten-bin reliability diagram behind the sparse-middle-bin caveat;
Table~\ref{tab:app-calibration-primary} reports the per-stratum
development results and Table~\ref{tab:app-calibration-policy} the
policy-family diagnostic.

\begin{figure}[htbp]
\centering
\begin{tikzpicture}
\begin{axis}[
  width=0.8\columnwidth,
  height=0.36\columnwidth,
  xmin=0, xmax=1,
  ymin=0, ymax=1,
  xlabel={Mean predicted probability},
  ylabel={Observed frequency},
  tick label style={font=\scriptsize},
  label style={font=\small},
  grid=major,
  grid style={atpslate!18},
  legend style={font=\scriptsize, at={(0.98,0.02)}, anchor=south east,
    draw=none, fill=white, fill opacity=0.88, text opacity=1},
]
\addplot[atpslate, dashed, semithick, domain=0:1] {x};
\addlegendentry{Perfect}
\addplot[atpamber, semithick, mark=square*, mark size=1.8pt]
coordinates {(0.2172,0.0000) (0.4930,0.2094) (0.5334,0.9880)
  (0.6437,1.0000) (0.7150,1.0000)};
\addlegendentry{Raw}
\addplot[atpblue, thick, mark=*, mark size=1.8pt]
coordinates {(0.0003,0.0015) (0.1388,0.1308) (0.2590,0.8683)
  (0.3245,0.6918) (0.4462,0.4529) (0.5403,0.4479)
  (0.6436,0.7851) (0.7272,0.2806) (0.8565,0.8375)
  (0.9950,0.9962)};
\addlegendentry{Seed-grouped OOF}
\end{axis}
\end{tikzpicture}
\Description{Reliability diagram comparing raw and seed-grouped out-of-fold
route probabilities against the perfect-calibration diagonal. The sparse
middle bins deviate more than the high-mass extreme bins.}
\caption{Ten-bin reliability. ECE is count-weighted: 86.2\% of OOF claims
fall below 0.1 or above 0.9, while only 1.17\% fall between 0.2 and 0.8.
The sparse middle-bin deviations therefore bound local, not marginal,
claims.}
\label{fig:calibration-reliability}
\vspace{-.15in}
\end{figure}

\begin{figure}[htbp]
\centering
\begin{tikzpicture}[
  node distance=2mm,
  calstep/.style={draw=atpslate, fill=atpslate!7, rounded corners=1.5pt,
    align=center, font=\scriptsize, inner sep=2.5pt, minimum width=23mm,
    minimum height=9mm},
  calarrow/.style={-{Latex[length=1.7mm]}, atpblue, semithick}
]
\node[calstep] (grid) {R-B grid\\19 configs, 20 seeds};
\node[calstep, right=of grid] (claims) {All proposed\\route claims};
\node[calstep, right=of claims] (labels) {Event replay\\interval labels};
\node[calstep, below=3mm of labels] (oof) {Five-fold OOF\\hold out seeds};
\node[calstep, left=of oof] (freeze) {Fit and freeze\\isotonic v2};
\node[calstep, left=of freeze] (audit) {Smoke plus\\380-run audit};
\draw[calarrow] (grid) -- (claims);
\draw[calarrow] (claims) -- (labels);
\draw[calarrow] (labels) -- (oof);
\draw[calarrow] (oof) -- (freeze);
\draw[calarrow] (freeze) -- (audit);
\end{tikzpicture}
\Description{Calibration workflow from the complete R-B development grid,
through all proposed route claims and event-replayed labels, to seed-grouped
out-of-fold evaluation, artifact freeze, smoke testing, and full audit.}
\caption{Calibration process. Complete-seed folds prevent mission records
from appearing on both sides; labeling all proposals prevents gate-induced
selective labels.}
\label{fig:app-calibration-process}
\end{figure}

\begin{table}[htbp]
\centering
\caption{Development calibration. OOF folds hold out complete seeds;
intervals are 10,000-draw seed-cluster bootstraps.}
\label{tab:app-calibration-primary}
\begingroup\let\small\footnotesize
\begin{tabular}{@{}lrrrrrr@{}}
\toprule
Stratum & $n$ & Raw ECE & OOF ECE & ECE 95\% CI & OOF Brier & Brier 95\% CI \\
\midrule
Overall / horizon 1 & 300,273 & .32705 & .00477 & [.00383,.01434] & .02071 & [.01800,.02337] \\
Dispatch boat & 216,615 & .33802 & .00601 & [.00515,.01898] & .02628 & [.02286,.02988] \\
Evacuate to safety & 83,658 & .29864 & .00190 & [.00118,.00379] & .00628 & [.00440,.00820] \\
\bottomrule
\end{tabular}
\endgroup
\end{table}

\begin{table}[htbp]
\centering
\caption{Policy-family OOF diagnostic. The no-refresh interval crosses
the declared .05 threshold.}
\label{tab:app-calibration-policy}
\begin{tabular}{@{}lrrr@{}}
\toprule
Policy & $n$ & OOF ECE & ECE 95\% CI \\
\midrule
Adaptive & 32,030 & .00167 & [.00115,.01010] \\
Fixed-$k$ & 121,883 & .01464 & [.00844,.02673] \\
Validity clock & 103,760 & .01435 & [.00928,.02268] \\
No refresh & 42,600 & .04956 & [.02807,.07558] \\
\bottomrule
\end{tabular}
\end{table}

The failed fixed-$k=300$ pilot had grouped OOF ECE .00475 but
cross-policy smoke-test ECE .05537, so it was rejected before the full
rerun. In the final development map, 86.2\% of claims lie below .1 or
above .9 and only 1.17\% lie between .2 and .8; sparse middle-bin
deviations therefore prevent local or uniform calibration claims. The
frozen wiring audit verifies all 380 missions and 655,761 route-record
versions with zero score-range, map-reproduction, or confidence
mismatches. Its fitted-data ECE is only a wiring check; the OOF and
held-out results are inferential.
Figure~\ref{fig:calibration-reliability} shows the ten-bin
reliability diagram behind the sparse-middle-bin caveat.

\section{Development Operating-Point Evidence}\label{app:rq1-dev}

\begin{figure}[htbp]
\centering
\begin{tikzpicture}
\begin{groupplot}[
  group style={group size=2 by 1, horizontal sep=1.35cm},
  width=0.48\textwidth,
  height=0.38\textwidth,
  xlabel={Verification cost per mission},
  tick label style={font=\scriptsize},
  label style={font=\small},
  grid=major,
  grid style={atpslate!18},
]
\nextgroupplot[
  xmin=1.5, xmax=6.0,
  ymin=-0.03, ymax=0.80,
  ylabel={Stale execution rate (\%)},
]
\addplot[only marks, atpblue, mark=*, mark size=2.3pt]
coordinates {(2.0512,0) (3.8241,0) (3.1528,0) (2.4020,0)
  (3.5941,0) (3.8145,0)};
\addplot[only marks, atpamber, mark=square*, mark size=2.3pt]
coordinates {(2.7800,0.1003) (5.5700,0.3941)};
\node[font=\scriptsize, anchor=south west, atpblue]
  at (axis cs:2.0512,0.01) {$\varepsilon=.25$};
\node[font=\scriptsize, anchor=south west, atpamber]
  at (axis cs:2.7800,0.1003) {$k=600$};
\node[font=\scriptsize, anchor=south east, atpamber]
  at (axis cs:5.5700,0.3941) {$k=300$};

\nextgroupplot[
  xmode=log,
  xmin=1, xmax=400,
  ymin=0, ymax=52,
  ylabel={Execution coverage (\%)},
  legend style={font=\scriptsize, at={(0.03,0.04)}, anchor=south west,
    draw=none, fill=white, fill opacity=0.88, text opacity=1},
]
\addplot[only marks, atpblue, mark=*, mark size=2.3pt]
coordinates {(2.0512,33.3817) (3.8241,44.1144) (3.1528,40.6775)
  (2.4020,37.4297) (3.5941,42.0506) (3.8145,43.1452)};
\addlegendentry{Adaptive}
\addplot[only marks, atpamber, mark=square*, mark size=2.3pt]
coordinates {(14.4300,48.3620) (170.1100,42.6618)
  (5.5700,48.5407) (41.8500,47.2749) (333.6900,0.6509)
  (2.7800,47.8807)};
\addlegendentry{Fixed $k$}
\addplot[only marks, atpgreen, mark=triangle*, mark size=2.5pt]
coordinates {(87.4400,44.0961) (75.9000,43.3934)
  (67.4700,34.3958) (63.0600,15.1646) (64.1500,6.4012)
  (58.8400,2.9835)};
\addlegendentry{Validity clock}
\node[font=\scriptsize, anchor=south east, atpamber]
  at (axis cs:2.7800,47.8807) {$k=600$};
\node[font=\scriptsize, anchor=south west, atpblue]
  at (axis cs:3.8241,44.1144) {$\varepsilon=.5$};
\end{groupplot}
\end{tikzpicture}
\vspace{-.1in}
\Description{Two-panel RQ1 development plot. The left panel zooms into
the low-cost region and shows adaptive settings at zero observed stale
rate plus fixed k equals 600 and 300. The right panel shows that adaptive
settings have lower execution coverage than the strongest fixed settings;
validity-clock and all positive-cost fixed settings are also shown.}
\caption{RQ1 development point estimates. The left-panel cost--staleness
view favors adaptive refresh, but the right panel exposes the associated
coverage loss. The no-refresh arm has zero cost and staleness but only
1.41\% coverage and is reported in
Table~\ref{tab:app-rq1-selected}; zero cost prevents its inclusion
on the logarithmic axis.}
\label{fig:rq1-frontier}
\end{figure}

The 19-point R-B development grid uses seeds 1--20. All 380 missions and
chains verify, and event trajectories match across policies. These
results motivated the validation protocol but used preliminary coverage
accounting and are not confirmatory.
Figure~\ref{fig:rq1-frontier} plots the development
point estimates on both declared axes;
Table~\ref{tab:app-rq1-selected} reports the selected means and
Table~\ref{tab:app-rq1-paired} the paired development contrasts.

\begin{table}[htbp]
\centering
\caption{Selected RQ1 development means with 95\% seed-cluster intervals.}
\label{tab:app-rq1-selected}
\begingroup\let\small\footnotesize
\begin{tabular}{@{}llrrr@{}}
\toprule
Policy & Setting & Cost & Stale execution (\%) & Coverage (\%) \\
\midrule
No refresh & default & 0 [0,0] & 0 [0,0] & 1.41 [1.41,1.41] \\
Adaptive & $\varepsilon_c=.25$ & 2.051 [1.320,2.942] & 0 [0,0] & 33.38 [28.29,38.29] \\
Adaptive & $\varepsilon_c=.5$ & 3.824 [2.722,4.896] & 0 [0,0] & 44.11 [41.26,46.39] \\
Fixed & $k=600$ & 2.780 [2.680,2.880] & .100 [0,.253] & 47.88 [47.10,48.64] \\
Fixed & $k=300$ & 5.570 [5.380,5.760] & .394 [.108,.731] & 48.54 [47.73,49.27] \\
Clock & $\alpha=.10$ & 87.440 [70.150,103.900] & 5.366 [2.405,8.557] & 44.10 [40.87,47.33] \\
Fixed diagnostic & $k=5$ & 333.690 [325.660,341.020] & 41.997 [33.523,49.216] & .65 [.53,.77] \\
\bottomrule
\end{tabular}
\endgroup
\end{table}

\begin{table}[htbp]
\centering
\caption{Paired development differences; rate differences are percentage
points and brackets are 95\% paired seed-bootstrap intervals.}
\label{tab:app-rq1-paired}
\begin{tabular}{@{}lrrr@{}}
\toprule
Contrast & $\Delta$cost & $\Delta$stale & $\Delta$coverage \\
\midrule
$\varepsilon=.25-k=600$ & $-.729$ [$-1.450$,.152] & $-.100$ [$-.253$,0] & $-14.50$ [$-19.83$,$-9.32$] \\
$\varepsilon=.25-k=300$ & $-3.519$ [$-4.290$,$-2.588$] & $-.394$ [$-.731$,$-.108$] & $-15.16$ [$-20.67$,$-9.71$] \\
$\varepsilon=.5-k=600$ & 1.044 [$-.028$,2.076] & $-.100$ [$-.253$,0] & $-3.77$ [$-6.67$,$-1.53$] \\
\bottomrule
\end{tabular}
\end{table}

The development frontier already warned against unconditional
dominance: adaptive lowered point staleness but also coverage, no refresh
achieved zero staleness by executing only 1.41\% of opportunities, and
the latency-saturated $k=5$ diagnostic was pathological. Validation and
held-out tables in the main paper supersede these estimates.

\section{Trigger Taxonomy Reference}\label{app:triggers}

\begin{table}[htbp]
\centering
\begingroup
\setlength{\tabcolsep}{2.5pt}
\begin{tabular}{@{}>{\raggedright\arraybackslash}p{0.136\linewidth}
>{\raggedright\arraybackslash}p{0.35\linewidth}
>{\raggedright\arraybackslash\scriptsize}p{0.28\linewidth}
>{\raggedright\arraybackslash}p{0.185\linewidth}@{}}
\toprule
\textbf{Family} & \textbf{Fires when} & {\small\textbf{Policy fields}} & \textbf{Reason code} \\
\midrule
Predicted & validity shorter than commitment horizon, or predicted
uncertainty above the action bound & \texttt{max\_horizon}, \texttt{max\_uncertainty}, \texttt{max\_obs\_age} & scenario outcome \\
\addlinespace[2pt]
Observed & innovation from partial observations exceeds the declared
tolerance & \texttt{refresh\_condition}, tolerance & injected disruption,
else scenario \\
\addlinespace[2pt]
Structural & OOD above bound or support below floor: the view is
outside admitted competence & \texttt{min\_support}, \texttt{max\_ood} &
model transient, else scenario \\
\addlinespace[2pt]
Normative & policy or authority demands fresh evidence for the action
class & \texttt{authority} per class & contract violation if
bypassed \\
\bottomrule
\end{tabular}
\endgroup
\caption{The four trigger families, their parameterization, and the
reason codes under which their outcomes are attributed (short forms
of the identifiers listed in Section~\ref{sec:eval-method}). A
predicted trigger fires from the model alone, before any
observation; each family is a declared observable in the sense of
Section~\ref{sec:contract-observables}.}
\label{tab:triggers}
\vspace{-.2in}
\end{table}

\section{RQ2 Mechanism Study}\label{app:rq2-main}

\paragraph{Process.}
RQ2 uses the corrected R-C development protocol because it exercises
all four trigger families. Each of 20 seeds receives two registered,
policy-independent changes: a levee pulse at $t=2520$ s followed by a
keyed noisy passive depth report two seconds later, and a structural
debris closure at $t=3600$ s. The passive monitors are common telemetry,
not selectable evidence channels. At $\varepsilon_c=1$, we compare the
full four-family policy with four leave-one-family-out variants and,
holding triggers fixed, cheapest, cheapest-adequate, and
maximum-net-value routing. This produces 160 missions. All completed, all event and
TRACE chains verified, and the common-random-number check passed for all
20 seeds. Table~\ref{tab:rq2-effects} reports ablation-minus-full paired
differences using 10,000 bootstrap draws over the 20 seed pairs.

\paragraph{Finding 1: the trigger families perform different jobs.}
The full arm emits 91 triggers, with dispositions plotted in
\extpointer{app:rq2-dev}. Predicted triggers occur closest to the
continuous route boundary, median distance $0.0359$, and 9 of 25
invoke the rate guard rather than purchase evidence; observed
triggers occur in 19 missions exactly three seconds after the levee
pulse; structural triggers occur in 13 missions, the first always one
second after the debris closure, with 22 of 27 acquiring a drone
report and 5 withdrawing authorization because no channel has
positive net value; and normative triggers fire once per mission, far
from the physical boundary, because the rule requires fresh evidence
before an irreversible action. The quantile diagnostics collected in
\extpointer{app:rq2-dev} therefore support the declared family
separation, not a pooled ``near-boundary'' explanation; zero expected
decision-error loss for the observed and normative rows is expected,
since innovation and policy obligation, not the predicted decision-loss
threshold, own those triggers.

\begin{table}[htbp]
\centering
\caption{Paired RQ2 development effects. Rows are variant minus full; rates
are percentage-point differences; brackets are 95\% paired seed-bootstrap
intervals. The full-arm means are cost 6.057, stale execution 0.295\%,
coverage 23.87\%, and mission completion 44.39\%. Cheapest-adequate and
maximum-net-value routing are identical to the full arm on all four
measures.}
\label{tab:rq2-effects}
\begingroup
\setlength{\tabcolsep}{3pt}
\resizebox{\textwidth}{!}{%
\begin{tabular}{@{}lcccc@{}}
\toprule
\textbf{Variant$-$full} & $\Delta$\textbf{cost} &
$\Delta$\textbf{stale (pp)} & $\Delta$\textbf{coverage (pp)} &
$\Delta$\textbf{completion (pp)} \\
\midrule
No predicted & 1.592 [.441,2.864] & .361 [.081,.706] & 7.68 [$-.95$,16.39] & 1.21 [.36,2.25] \\
No observed & .340 [$-1.441$,2.293] & .003 [0,.008] & 1.02 [$-6.41$,8.23] & .22 [$-.38$,.88] \\
No structural & $-.250$ [$-.751$,0] & 0 [0,0] & $-1.86$ [$-5.59$,0] & 0 [0,0] \\
No normative & $-2.562$ [$-4.305$,$-.801$] & $-.173$ [$-.589$,.244] & $-11.64$ [$-19.38$,$-3.90$] & $-5.89$ [$-7.53$,$-4.34$] \\
Cheapest & $-5.227$ [$-6.889$,$-3.555$] & .323 [.081,.645] & 10.14 [3.63,17.90] & 0 [0,0] \\
Cheapest adequate; max net value & 0 [0,0] & 0 [0,0] & 0 [0,0] & 0 [0,0] \\
\bottomrule
\end{tabular}}
\endgroup
\end{table}

\paragraph{Finding 2: only two leave-one-family-out ablations show
identified effects.}
Removing predicted triggers increases cost by $1.592$ and stale
execution by $0.361$ percentage points, with both paired intervals
excluding zero; it also increases completion by $1.21$ points. In this
coupled controller, the predicted family therefore buys lower staleness
and lower downstream verification cost at a small completion cost. In
contrast, removing normative triggers reduces cost by $2.562$ but also
reduces coverage by $11.64$ points and completion by $5.89$ points. The
normative family is an evidence-enabling coverage mechanism here, not an
empirically separable stale-execution safeguard. Although observed and
structural triggers fire promptly, their aggregate leave-one-out
intervals include zero on the main RQ2 outcomes. With 20 seeds and
overlapping gates, their incremental mission-level effects are not
identified.

\paragraph{Finding 3: adequacy matters; net-value superiority is not
identified.}
Unconstrained cheapest routing lowers cost from $6.057$ to $0.830$ but
raises stale execution from $0.295\%$ to $0.618\%$ and coverage from
$23.87\%$ to $34.01\%$. It chooses the gauge for all 27 structural
triggers even though the recorded adequacy probability is zero. The
result is a cheaper, less conservative, and significantly staler policy,
not a free efficiency gain. Cheapest-adequate and maximum-net-value
routing, however, produce exactly identical per-seed cost, staleness,
coverage, completion, rescued-person, pending-person, and
weighted-stale-loss vectors. With the present two-channel scores, the gauge is selected
whenever adequate and the drone is the only adequate positive-net option
otherwise. This experiment supports the adequacy constraint but cannot
support superiority of maximum net value over cheapest adequate.

\paragraph{Answer to RQ2.}
The development mechanism is plural. Predicted triggers account for
an identified reduction in staleness and cost; normative triggers
preserve coverage and completion; and channel adequacy prevents cheap
but uninformative structural checks. The observed and structural
families are exercised and timely but have no detectable aggregate
incremental effect in this sample, and without a channel-stress grid
in which costs, adequacy, and VoI rankings cross, the two adequate
routing rules are structurally indistinguishable. We therefore retain
RQ2 as a completed mechanism study rather than a held-out routing
comparison; RQ3 separately tests whether these trade-offs improve
rescue outcomes rather than inferring benefit from trigger behavior.

\section{RQ2 Trigger Diagnostics}\label{app:rq2-dev}

\begin{table}[htbp]
\centering
\caption{Full-policy trigger diagnostics across 20 R-C missions. The
distance, expected decision-error loss, VoI, and delay entries are median [IQR]. Evidence
delay is conditional on acquisition; HOLD triggers have no delay value.}
\label{tab:app-rq2-triggers}
\begingroup
\setlength{\tabcolsep}{3pt}
\resizebox{\textwidth}{!}{%
\begin{tabular}{@{}lrlrrrr@{}}
\toprule
\textbf{Family} & \textbf{Count} & \textbf{Disposition} &
\textbf{Boundary dist.} & \textbf{Expected error loss} &
\textbf{Max VoI} & \textbf{Delay (s)} \\
\midrule
Predicted & 25 & 16 gauge, 9 HOLD & .0359 [.0311,.0434] & 1.025 [.895,1.865] & 14.375 [14.231,14.397] & 5 [5,5] \\
Observed & 19 & 19 gauge & .0984 [.0706,.1278] & 0 [0,0] & 14.550 [14.550,14.550] & 5 [5,5] \\
Structural & 27 & 22 drone, 5 HOLD & .1632 [.0465,.1632] & 42.500 [21.793,42.500] & 7.275 [7.275,10.820] & 10 [7,11] \\
Normative & 20 & 20 gauge & .3605 [.3605,.3605] & 0 [0,0] & 14.550 [14.550,14.550] & 5 [5,5] \\
\bottomrule
\end{tabular}}
\endgroup
\end{table}

Figure~\ref{fig:app-rq2-dispositions} (placed with the RQ3 outcome
chart at the head of Appendix~\ref{app:rq3-dev}) plots the
per-family dispositions, and the quantile entries of
Table~\ref{tab:app-rq2-triggers} substantiate the family separation reported in
Section~\ref{sec:eval-rq2}: observed triggers follow the passive
report, structural triggers follow the registered debris closure, and
normative triggers fire far from the physical boundary because the
action class requires fresh evidence, so their zero expected
decision-error loss is expected rather than anomalous.

\section{RQ3 Development Outcomes and Sensitivity}\label{app:rq3-dev}

Each R-B mission generates 240 people. A completed rescue requires
\fld{PEOPLE_DELIVERED} at the safe dock; pickup remains distinct. The
projected hospital time adds a single measured 2079-s geography
bridge and is not an observed clinical outcome. The bridge is the
realized pickup-to-hospital duration of one noise-free
transfer-chain run on the real geography: the schedule predicted
1883.08 s against 2079.00 s realized, an underprediction of
195.92 s (10.40\%), so the constant is measured, with a known
schedule error, rather than modeled.
Figure~\ref{fig:rq3-outcomes} shows the stacked
outcome accounting across the selected arms and boundary controls,
paired with the RQ2 trigger disposition chart of
Appendix~\ref{app:rq2-dev};
Table~\ref{tab:app-rq3-outcomes} reports the selected means and
Table~\ref{tab:app-rq3-components} the raw components and nominal
composite.

\begin{figure}[tb]
\centering
\begin{minipage}[t]{0.35\linewidth}
\centering
\setlength{\emergencystretch}{2em}
\begin{tikzpicture}
\begin{axis}[
  width=0.92\linewidth,
  height=0.88\linewidth,
  xbar stacked,
  bar width=7pt,
  xmin=0, xmax=32,
  xlabel={Triggers across 20 missions},
  symbolic y coords={Normative,Structural,Observed,Predicted},
  ytick=data,
  enlarge y limits=0.14,
  yticklabel style={font=\scriptsize},
  tick label style={font=\scriptsize},
  label style={font=\scriptsize},
  axis line style={atpslate!60},
  legend style={font=\scriptsize, at={(0.5,1.02)}, anchor=south,
    legend columns=-1, draw=none, /tikz/every even column/.append
    style={column sep=3pt}},
]
\addplot[xbar, fill=atpblue!70, draw=atpblue!80!black]
  coordinates {(16,Predicted) (19,Observed) (0,Structural)
    (20,Normative)};
\addlegendentry{Gauge}
\addplot[xbar, fill=atpteal!70, draw=atpteal!80!black]
  coordinates {(0,Predicted) (0,Observed) (22,Structural)
    (0,Normative)};
\addlegendentry{Drone}
\addplot[xbar, fill=atpamber!80, draw=atpamber!90!black]
  coordinates {(9,Predicted) (0,Observed) (5,Structural)
    (0,Normative)};
\addlegendentry{HOLD}
\end{axis}
\end{tikzpicture}
\Description{Horizontal stacked bars give the trigger counts per
family across 20 missions: predicted triggers split between gauge
and HOLD, observed triggers all resolve to gauge, structural
triggers split between drone and HOLD, and normative triggers all
resolve to gauge.}
\vspace{-.28in}
\caption{Full-policy R-C trigger disposition. HOLD means
authorization was withdrawn without evidence acquisition because
the rate guard fired or no channel had positive declared net
value.}
\label{fig:app-rq2-dispositions}
\end{minipage}
\hfill
\begin{minipage}[t]{0.62\linewidth}
\centering
\begin{tikzpicture}
\begin{axis}[
  width=0.93\linewidth,
  height=0.50\linewidth,
  xbar stacked,
  bar width=4.5pt,
  symbolic y coords={F5,None,V10,F15,F45,F300,F600,A7},
  ytick=data,
  yticklabels={$k=5$,No refresh,$\alpha=.10$,$k=15$,$k=45$,
    $k=300$,$k=600$,$\varepsilon=7$},
  yticklabel style={font=\scriptsize},
  xlabel={People per mission},
  xmin=0, xmax=250,
  enlarge y limits=0.05,
  tick label style={font=\scriptsize},
  label style={font=\scriptsize},
  grid=major,
  grid style={atpslate!18},
  legend style={font=\scriptsize, at={(0.5,1.02)}, anchor=south,
    legend columns=3, draw=none},
]
\addplot[fill=atpblue!78, draw=atpblue]
coordinates {(29.6,F5) (60.0,None) (201.6,V10) (202.4,F15)
  (201.6,F45) (201.0,F300) (200.0,F600) (197.8,A7)};
\addlegendentry{Delivered}
\addplot[fill=atpgreen!72, draw=atpgreen]
coordinates {(2.0,F5) (0.0,None) (3.6,V10) (3.0,F15)
  (3.2,F45) (2.4,F300) (2.8,F600) (2.4,A7)};
\addlegendentry{In transit}
\addplot[fill=atpamber!72, draw=atpamber]
coordinates {(208.4,F5) (180.0,None) (34.8,V10) (34.6,F15)
  (35.2,F45) (36.6,F300) (37.2,F600) (39.8,A7)};
\addlegendentry{Not picked up}
\end{axis}
\end{tikzpicture}
\Description{Horizontal stacked bars partition all 240 people in a
mission into delivered at the safe dock, picked up but still in
transit at the mission horizon, and not picked up. The adaptive
epsilon 7 arm completes fewer rescues than the selected fixed and
validity-clock arms. No refresh and fixed k equals 5 are boundary
controls with very few completed rescues.}
\vspace{-.1in}
\caption{RQ3 development outcome accounting; the boundary controls
run only in development, and held-out means appear in
Table~\ref{tab:rq3-heldout}. Pickup is kept distinct from
safe-transfer completion; each stacked bar sums to 240 people.}
\label{fig:rq3-outcomes}
\end{minipage}
\end{figure}

\begin{table}[htbp]
\centering
\caption{Selected RQ3 development means with seed-cluster intervals.}
\label{tab:app-rq3-outcomes}
\begin{tabular}{@{}lrrrr@{}}
\toprule
Policy & Completed & In transit & Not picked up & Projected time (s) \\
\midrule
Adaptive $\varepsilon_c=7$ & 197.8 [192.6,202.4] & 2.4 [1.6,3.2] & 39.8 [34.6,45.2] & 2118.52 [2118.48,2118.56] \\
Fixed $k=600$ & 200.0 [194.6,204.8] & 2.8 [2.0,3.6] & 37.2 [31.8,43.2] & 2118.06 [2118.06,2118.06] \\
Fixed $k=300$ & 201.0 [195.8,205.4] & 2.4 [1.6,3.2] & 36.6 [32.0,42.0] & 2119.49 [2118.06,2121.40] \\
Fixed $k=15$ & 202.4 [198.0,206.2] & 3.0 [2.2,3.6] & 34.6 [30.6,39.2] & 2118.20 [2118.09,2118.39] \\
Clock $\alpha=.10$ & 201.6 [197.0,205.6] & 3.6 [3.0,4.0] & 34.8 [30.8,39.6] & 2118.38 [2118.19,2118.63] \\
No refresh & 60 [60,60] & 0 [0,0] & 180 [180,180] & 2118.20 [2118.20,2118.20] \\
\bottomrule
\end{tabular}
\end{table}

\begin{table}[htbp]
\centering
\caption{Development raw components and nominal provisional composite;
verification cost remains a separate resource coordinate.}
\label{tab:app-rq3-components}
\begingroup\let\small\footnotesize
\begin{tabular}{@{}lrrrrrr@{}}
\toprule
Policy & Cost & Stale loss & Withdrawals & False HOLDs & Interruptions & Composite \\
\midrule
Adaptive $\varepsilon_c=7$ & 3.814 & 0 & 5.35 & 4.60 & 2.15 & 8022.1 \\
Fixed $k=600$ & 2.780 & 10 & .95 & 0 & .20 & 7452.9 \\
Fixed $k=300$ & 5.570 & 40 & .90 & 0 & .60 & 7362.7 \\
Fixed $k=45$ & 41.850 & 120 & 1.00 & 0 & 1.50 & 7163.0 \\
Fixed $k=15$ & 170.110 & 480 & 1.00 & 0 & 4.95 & 7403.0 \\
Clock $\alpha=.10$ & 87.440 & 620 & 9.40 & 3.40 & 6.75 & 7642.2 \\
No refresh & 0 & 0 & 0 & 0 & 0 & 36000.0 \\
\bottomrule
\end{tabular}
\endgroup
\end{table}

The provisional composite uses
\begin{align*}
L_w={}&w_s L_{\mathrm{stale}}+w_m N_{\mathrm{unpicked}}\\
&+w_a N_{\mathrm{withdrawn}}+w_h N_{\mathrm{false\mbox{-}HOLD}}.
\end{align*}
Across 144 combinations of $w_s\in\{.5,1,2,5\}$,
$w_m\in\{50,\allowbreak 100,\allowbreak 200,\allowbreak 400\}$,
$w_a\in\{0,3,10\}$, and $w_h\in\{0,10,50\}$, adaptive is never
lower than $k=600$ and is lower than $k=300$ in only 2 cases;
Table~\ref{tab:rq3-sensitivity} tabulates the grid. These
development preferences do not determine the frozen test weights.

\begin{table}[htbp]
\centering
\caption{Provisional composite-weight sensitivity. Negative margin favors
adaptive $\varepsilon_c=7$; one validity-clock setting is tied.}
\label{tab:rq3-sensitivity}
\resizebox{0.6\columnwidth}{!}{%
\begin{tabular}{@{}lrr@{}}
\toprule
\textbf{Comparator} & \textbf{Adaptive lower} &
\textbf{Margin range} \\
\midrule
Fixed $k=600$ & 0/144 (0.0\%) & [80.0, 1309.0] \\
Fixed $k=300$ & 2/144 (1.4\%) & [$-40.0$, 1534.5] \\
Fixed $k=45$ & 16/144 (11.1\%) & [$-370.0$, 2053.5] \\
Fixed $k=15$ & 60/144 (41.7\%) & [$-2140.0$, 2113.5] \\
Validity clock $\alpha=0.10$ & 89/144 (61.8\%) & [$-2890.5$, 1750.0] \\
\bottomrule
\end{tabular}
}
\end{table}

\section{RQ4 Scaling Detail}\label{app:rq4-detail}

This appendix carries the full RQ4 component benchmark summarized in
the main text.

\paragraph{Process and estimand.}
We benchmark two frozen seed-1 R-B traces before inspecting timing:
the RQ3-reported adaptive $\varepsilon_c=7$ arm, with 492 TRACE
versions and 11,743 events, and the predeclared latency-saturated
$k=5$ diagnostic, with 16,334 versions and 52,030 events; the former
is the representative context, the latter a volume stress test
ineligible for operating-point selection. Each context receives five
complete component trials in which every source version is written to
a fresh hash-chained repository and its gate inputs are evaluated by
the same policy engine; Table~\ref{tab:rq4-overhead} reports the
median of the five within-trace p50/p95/p99 estimates. Full-chain
verification is timed against parsed in-memory envelopes (cached) and
from a newly constructed repository that must parse and validate the
file (cold). Every source and replay chain verifies.

To isolate the online append path, ten additional paired replays use
the identical, preloaded record/evidence sequence and alternate
order: the audit-off side retains gate evaluation but suppresses the
repository write, the audit-on side adds the buffered hash-chain
append, so the paired difference estimates record-path append
overhead, not end-to-end mission slowdown. Timing used one Python
3.12 process on nine AMD EPYC 9V74 cores, Linux 6.12, and an ext4
temporary filesystem; \fld{TraceRepository.write} closes each append
but does not call \texttt{fsync}, so these are buffered, not
durable-commit, latencies.

\begin{table}[htbp]
\centering
\caption{RQ4 development component benchmark. Append quantiles are
the median-of-five-trial estimates in ms; paired total is the
ten-trial median [min,max] in ms; verification is in ms. Gate and
bytes-per-version diagnostics are in Table~\ref{tab:app-rq4-secondary}.}
\label{tab:rq4-overhead}
\begingroup
\setlength{\tabcolsep}{4pt}
\begin{tabular}{@{}lrr@{}}
\toprule
\textbf{Metric} & \textbf{Adaptive $\varepsilon_c{=}7$} &
\textbf{Fixed $k{=}5$ stress} \\
\midrule
TRACE versions & 492 & 16,334 \\
Repository size (MiB) & 1.956 & 61.154 \\
Append p50 / p95 / p99 & .115 / .170 / .213 & .111 / .172 / .231 \\
Paired total & 56.9 [56.5, 68.9] & 2399.5 [2184.2, 2888.6] \\
Verify cached / cold & 40.1 / 82.6 & 1323.5 / 3023.7 \\
\bottomrule
\end{tabular}
\endgroup
\end{table}

\paragraph{Finding 1: per-version online cost is small, but record volume
turns it into mission-scale work.}
Representative buffered append latency is 0.115 ms at p50 and 0.213
ms at p99; gate evaluation is another 0.0033 ms at p50. The paired
replay attributes 56.9 ms to appending all 492 versions, or 0.116 ms
per version; the $k=5$ trace has similar append quantiles, but its
33.2-fold larger version count raises the paired total to 2399.5 ms,
or 0.147 ms per version. The stress penalty is thus driven by version
volume and cumulative file work, not the single-append median,
mirroring the admission-path separation
of~\cite{chang2027mnemosyne}: gate computation, online recording,
and offline audit are different costs.

\paragraph{Finding 2: verification is linear and belongs off the online
critical path.}
Cached prefix verification from 10 to 16,334 versions gives a raw-scale
least-squares slope of 80.45 microseconds per version
($R^2=0.999999$); the scaling plot is Figure~\ref{fig:app-rq4-chain}.
Cached full verification takes 40.1 ms
for 492 versions and 1323.5 ms for 16,334; cold verification, including
file parsing and model validation, takes 82.6 and 3023.7 ms. Storage
rises from 1.956 to 61.154 MiB, while mean bytes per version remains
near 4 KiB. Full verification is therefore predictable but unsuitable
for synchronous per-commitment use; incremental appends remain online,
while full-chain scans are an offline or checkpointed operation.

\paragraph{Finding 3: the benchmark does not establish production durability or
whole-system overhead.}
The paired replay isolates buffered TRACE appends after inputs are
materialized; it excludes simulator, planner, network,
concurrent-writer, crash-recovery, and durable-flush effects and uses
one machine and one filesystem. Gate evaluation is reported as a
controller component, not misclassified as audit-only overhead.
Production claims require an audit-on/off mission harness, a durable
backend with an explicit flush policy, concurrency sweeps, and
hardware replication.

\paragraph{Answer to RQ4, in full.}
In this development implementation, the online buffered append path is
about 0.1 ms per TRACE version and adds 56.9 ms to the representative
record replay. The pathological high-volume trace costs 2.40 s and 61.2
MiB, and full-chain verification grows linearly. The result supports
modest per-version component cost and explicit offline verification; it
does not support a durable-database or end-to-end mission-overhead claim.

\begin{figure}[htbp]
\centering
\begin{tikzpicture}
\begin{axis}[
  width=0.8\columnwidth,
  height=0.41\columnwidth,
  xmode=log,
  ymode=log,
  xmin=8, xmax=20000,
  ymin=0.5, ymax=2000,
  xlabel={TRACE versions},
  ylabel={Cached verification (ms)},
  tick label style={font=\scriptsize},
  label style={font=\small},
  grid=major,
  grid style={atpslate!18},
  legend style={font=\scriptsize, at={(0.04,0.96)}, anchor=north west,
    draw=none, fill=white, fill opacity=0.88, text opacity=1},
]
\addplot[atpblue, mark=*, mark size=2.2pt, thick]
coordinates {(10,0.7540) (25,1.8965) (50,3.7688) (100,7.7435)
  (250,19.1130) (500,39.5137) (1000,79.1013) (16334,1313.5501)};
\addlegendentry{Five-trial median}
\addplot[atpamber, dashed, thick, domain=10:16334, samples=2]
  {0.080447*x};
\addlegendentry{Linear slope 80.45 $\mu$s/version}
\end{axis}
\end{tikzpicture}
\Description{Log-log plot of cached TRACE chain verification time against
chain length. Median time rises from 0.754 milliseconds for 10 versions
to 1313.55 milliseconds for 16,334 versions and closely follows a linear
fit with slope 80.45 microseconds per version.}
\caption{RQ4 chain-length scaling on the stress trace. The raw-scale
linear fit has $R^2=0.999999$; min-to-max five-trial variation at 16,334
versions is 1305.3--1475.6 ms.}
\label{fig:app-rq4-chain}
\end{figure}

Figure~\ref{fig:app-rq4-chain} shows the cached-verification
chain-length scaling behind the 80.45 microsecond-per-version
slope;
Table~\ref{tab:app-rq4-secondary} reports the secondary storage and
gate-latency diagnostics.

\begin{table}[htbp]
\centering
\caption{Secondary RQ4 storage and gate diagnostics. Gate entries are
p50/p95/p99 ms over five trials.}
\label{tab:app-rq4-secondary}
\begin{tabular}{@{}lrr@{}}
\toprule
Context & Bytes/version & Gate p50/p95/p99 \\
\midrule
Adaptive $\varepsilon_c=7$ & 4168 & .0033/.0061/.0091 \\
Fixed $k=5$ stress & 3926 & .0029/.0059/.0089 \\
\bottomrule
\end{tabular}
\end{table}

\FloatBarrier
\section{Final R-C Campaign}\label{app:rc}

Table~\ref{tab:rc-heldout} reports the final held-out R-C means and
the adaptive-minus-fixed differences summarized in the main text.
For the registered case, both chains verify; no registered shock
intersects the record's $[2523,4323]$ validity interval, so
fixed-priority attribution is \fld{scenario_outcome}; evidence age
is 718 s, the recorded belief is open with confidence .95,
calibrated claim confidence is .97425, and the last innovation
.04420 is below the .15 tolerance. The registered repairs set
evidence latency to zero and tighten the tolerance to .05; each
preserves the same stale proposal, and the six-step replay follows
in Appendix~\ref{app:relit-heldout}.

\begin{table}[htbp]
\centering
\caption{Final held-out R-C means over 40 common seeds, with the
adaptive-minus-fixed difference and its simultaneous 95\% max-$t$
band in brackets. Rates are percentages.}
\label{tab:rc-heldout}
\begingroup
\setlength{\tabcolsep}{2pt}
\begin{tabular}{@{}lrrr@{}}
\toprule
\textbf{Metric} & \textbf{Adaptive} & \textbf{Fixed $k{=}600$} & \textbf{$\Delta$ [band]} \\
\midrule
Cost & 6.83 & 3.51 & 3.323 {\scriptsize[1.86,4.79]} \\
Stale (\%) & .413 & .969 & $-.556$ {\scriptsize[$-1.20$,.08]} \\
Coverage (\%) & 86.11 & 94.82 & $-8.706$ {\scriptsize[$-14.9$,$-2.5$]} \\
Rescued & 109.30 & 111.50 & $-2.20$ {\scriptsize[$-4.16$,$-.24$]} \\
Pending & 129.60 & 127.70 & 1.90 {\scriptsize[.03,3.77]} \\
Stale loss & 22.50 & 57.50 & $-35.00$ {\scriptsize[$-73.0$,3.0]} \\
\bottomrule
\end{tabular}
\endgroup
\end{table}
\FloatBarrier

\FloatBarrier
\section{Re-litigation Evidence}\label{app:relit-heldout}

The replay proceeds as follows: (1) both baseline chains verify;
(2) no registered shock intersects $[2523,4323]$, so attribution remains
\fld{scenario_outcome}; (3) record
\fld{trace-84c9ca78d3616c6a0e64} v2 authorizes the plan at confidence
.97425 from 718-s-old evidence; (4) innovation .04420 is below tolerance
.15 and the recorded belief remains open at confidence .95; (5) neither
zero evidence latency nor tolerance .05 suppresses the stale proposal;
and (6) the case remains causally open.

\subsection{Development diagnostic}\label{app:relit-dev}
The earlier seed-2 diagnostic selects the stale south-detour dispatch at
$t=2521$ s, one second after a registered levee breach and one second
before the two-second passive report arrives. Both chains verify.
Tightening the tolerance from .15 to .05 leaves the plan stale, while
zero report latency exposes the blockage before selection and replaces
the dispatch with reversible drone verification. This closes the case
only relative to those two declared repairs; zero latency is diagnostic,
not a deployable service promise. The held-out nonclosure in the main
paper supersedes this example as confirmatory evidence.

\FloatBarrier
\section{Record-Store Mechanics and Provenance Scope}\label{app:moved-system}

The store is an append-only hash chain from a GENESIS entry; each
entry carries previous, payload, and entry hashes; a
\fld{verify_chain} operation checks the chain end to end; and any
attempted rewrite raises \texttt{ImmutableWriteError}. The scope is
stated precisely: the chain provides internal tamper evidence and
API-level immutability, while adversarial immutability against a
privileged actor who could rewrite payloads and recompute the chain
requires externally anchoring the chain head, which the deployment
plan includes and the workbench does not yet enforce. Within those
assumptions provenance is structural: a disputed dispatch,
preemption, or refresh is re-litigated from the store, not from logs
beside it; the record establishes what the system represented and why
the gate accepted it, not that the belief was correct. Preemption
illustrates the discipline: the engine prefers free assets, requires
a priority margin, never preempts an asset carrying survivors, and
records proposal and commit as separate events.

\FloatBarrier
\section{The Condition Hierarchy and the Refresh Rule in Practice}\label{app:moved-mech}

Flood-SAR makes concrete a three-level hierarchy of conditions that
the contract of Section~\ref{sec:contract} types and the machinery
of Section~\ref{sec:refresh} maintains. At the bottom are
\emph{signal conditions}: tolerances on state claims, a gauge level
within $0.15$ m of its prediction, a fuel reading above a floor.
Above them are \emph{semantic conditions}: event claims that
aggregate signals into a named predicate whose truth licenses
reasoning, ``rescue boat B2 can serve incident group 027'' holds only
if the boat has fuel for the required distance, water turbulence is
below the boat's rating, and the river segment is open. A semantic
condition can change because any constituent signal drifts, and its
calibrated confidence is attached at this level, not at the sensors.
At the top are \emph{action conditions}: the gate's requirements on
a practical claim, freshness covering the commitment horizon,
adequacy scaled to consequence, authority present, no undefeated
defeaters. An action condition can be disrupted from below, a
semantic condition flips while the boat is en route, or from beside,
a priority change preempts the asset for a more urgent incident; in
either case the affected commitment is not silently rewritten but
invalidated, compensated, or escalated through the chain semantics of
Section~\ref{sec:gate-chains}, and every transition is recorded.
This hierarchy is what grounds the trigger taxonomy of
Section~\ref{sec:refresh-taxonomy}: predicted and observed triggers
watch signal conditions, structural triggers watch whether the
semantic level is still within the model's competence, and normative
triggers enforce action conditions.

The boundary makes the rule concrete. Take the running commitment:
send the rescue boat through channel E12 now (fast, but lost if the
level exceeds 2.4m) or reroute through E17 (a certain delay cost
$\Delta$). Writing $p$ for the belief that the level exceeds 2.4m and
$L_{\mathrm{fail}}$ for the loss of a swamped boat, the two actions
are indifferent at $p^{*} = \Delta / L_{\mathrm{fail}}$. If every
feasible channel's possible posteriors remain on the same side of
$p^{*}$, or if every net value $\mathrm{VoI}_j - c_j$ is nonpositive,
additional precision is wasteful for this commitment, however
uncertain the level estimate remains; adaptive refresh thus
concentrates evidence where an observation could change a pending
decision, while a fixed interval spends uniformly.

Three refinements matter in practice. Consequence scaling: tolerances
are bound to action-authorization classes, so the same view can
simultaneously clear a reversible probe and hold an irreversible
dispatch. Reversibility: a probe receives QUALIFY rather than full
authorization precisely because it is reversible, so its adequacy bar
and loss exposure are low; whether probe-first verification actually
emerges from the net-value rule, given realistic channel costs,
latencies, and information profiles, is a hypothesis the
channel-choice ablation of Section~\ref{sec:eval-rq2} tests rather
than an arithmetic fact we assert. Auditability: the routing decision
is itself recorded, which channels were priced, which won, at what
expected value, so an auditor can dispute the routing as well as the
trigger.

\FloatBarrier
\section{Artifact Inventory}\label{app:artifact}

The frozen snapshot passes 84 tests (one V-JEPA adapter test is
skipped without \texttt{torch}) and the repository verifier. All
campaign missions complete: 380 development, 475 validation, and
240 held-out R-B missions; 160 RQ2 leave-one-family-out mechanism
missions; the RQ5 guard campaign over 80 paired seeds
(3101--3180); 320 RQ6 seed-cell missions (80 seeds, 2101--2180, by
four cells); and 80 held-out final R-C missions. Every event and
TRACE chain verifies,
all declared common-random-number checks pass, and the 380-mission
calibration audit finds zero mapping or confidence mismatches across
655,761 route-record versions. The artifact contains machine-readable
campaign summaries, the validation-frozen operating points, separate
RQ3, RQ4, and re-litigation evidence packages, a source archive, a
self-contained Git bundle with the execution commit and annotated
\fld{exp-freeze-v1} tag, and a top-level SHA-256 manifest; ZIP
integrity and Git-bundle completeness verify. The public
implementation is at
\url{https://github.com/eyuchang/trace-worldmodel-flood-sar}. Its geography release asset is identical to the
execution tree; the full digest and per-file inventory are recorded
in the artifact README.

\FloatBarrier
\fi

\end{document}